\newtheorem{thm}{Theorem}
\newtheorem{ass}{Assumption}
\newtheorem{defi}{Definition}
\newtheorem{lem}{Lemma}
\newcommand{\norm}[1]{\left\lVert#1\right\rVert}
\begin{document}

\title{Minimax Optimal $Q$ Learning with Nearest Neighbors}

\author{Puning Zhao, Lifeng Lai\thanks{Puning Zhao is with Zhejiang Lab. Email: pnzhao1@gmail.com. Lifeng Lai is with Department of Electrical and Computer Engineering, University of California, Davis, CA, 95616. Email: lflai@ucdavis.edu. This work of L. Lai was supported by the National Science Foundation under grant CCF-21-12504.}}
        % <-this % stops a space
%\thanks{This paper was produced by the IEEE Publication Technology Group. They are in Piscataway, NJ.}% <-this % stops a space
%\thanks{Manuscript received April 19, 2021; revised August 16, 2021.}}

% The paper headers
%\markboth{Journal of \LaTeX\ Class Files,~Vol.~14, No.~8, August~2021}%
%{Shell \MakeLowercase{\textit{et al.}}: A Sample Article Using IEEEtran.cls for IEEE Journals}

%\IEEEpubid{0000--0000/00\$00.00~\copyright~2021 IEEE}
% Remember, if you use this you must call \IEEEpubidadjcol in the second
% column for its text to clear the IEEEpubid mark.

\maketitle
\begin{abstract}
	% Markov decision process (MDP) is a mathematical framework for modeling decision-making problems. 
	Analyzing the Markov decision process (MDP) with continuous state spaces is generally challenging.
	A recent interesting work \cite{shah2018q} solves MDP with bounded continuous state space by a nearest neighbor $Q$ learning approach, which has a sample complexity of $\tilde{O}(\frac{1}{\epsilon^{d+3}(1-\gamma)^{d+7}})$ for $\epsilon$-accurate $Q$ function estimation with discount factor $\gamma$. In this paper, we propose two new nearest neighbor $Q$ learning methods, one for the offline setting and the other for the online setting. We show that the sample complexities of these two methods are $\tilde{O}(\frac{1}{\epsilon^{d+2}(1-\gamma)^{d+2}})$ and $\tilde{O}(\frac{1}{\epsilon^{d+2}(1-\gamma)^{d+3}})$ for offline and online methods respectively, which significantly improve over existing results and have minimax optimal dependence over $\epsilon$. We achieve such improvement by utilizing the samples more efficiently. In particular, the method in \cite{shah2018q} clears up all samples after each iteration, thus these samples are somewhat wasted. On the other hand, our offline method does not remove any samples, and our online method only removes samples with time earlier than $\beta t$ at time $t$ with $\beta$ being a tunable parameter, thus our methods significantly reduce the loss of information. Apart from the sample complexity, our methods also have additional advantages of better computational complexity, as well as suitability to unbounded state spaces.
	
	%$Q$ learning is a popular model-free reinforcement learning method. Most of the existing works focus on analyzing $Q$ learning for finite state and action spaces. If the state space is continuous, then the original $Q$ learning method can not be directly used. A modification of the original $Q$ learning method was proposed in \cite{shah2018q}, which estimates $Q$ values with nearest neighbors. Such modification makes $Q$ learning suitable for continuous state space. \cite{shah2018q} shows that the convergence rate of estimated $Q$ function is $\tilde{O}(T^{-1/(d+3)})$, which is slower than the minimax lower bound $\tilde{\Omega}(T^{-1/(d+2)})$, indicating a room for further improvement. This paper proposes two new $Q$ learning methods to bridge the gap of convergence rates in \cite{shah2018q}, with one of them being offline, while the other is online. Despite that we still use the nearest neighbor approach to estimate the $Q$ function, the algorithms are crucially different from \cite{shah2018q}. In particular, we replace the kernel nearest neighbor in the discretized region with a direct nearest neighbor approach. Consequently, our approach improves the convergence rate to $\tilde{O}(T^{-1/(d+2)})$, which matches the minimax lower bound. Moreover, our method is more convenient to implement.
\end{abstract}

\section{Introduction}

In nonparametric statistics, optimal rates have been established for various statistical tasks \cite{tsybakov2009introduction,yang1999minimax,scott2006minimax,raskutti2011minimax}, with most of them focusing on identical and independently distributed (i.i.d) data, while problems with non-i.i.d samples are rarely explored. Among these problems, the Markov decision process (MDP) is an important one, which is a stochastic control process that models various practical sequential decision making problems \cite{white1993survey,feinberg2012handbook,alsheikh15,bauerle2011markov,lauri2022partially}. In MDP, at each time step, an agent selects an action from a set $\mathcal{A}$ and then moves to another state and receives a reward. Compared with nonparametric estimation for i.i.d data \cite{tsybakov2009introduction,yang1999minimax,scott2006minimax,raskutti2011minimax} and MDP with finite state spaces \cite{even2003learning,beck2012error,chen2020finite,li2024q}, the design of learning algorithms for MDP with continuous state spaces faces the following two challenges. Firstly, states, actions, and rewards are collected sequentially. In early steps, estimates of the value function are inevitably inaccurate due to limited information. Since later estimates depend on earlier results, estimation errors in the early stages will have a negative impact on later estimates. A proper handling of early steps is thus crucially needed. Secondly, with a continuous state space, states do not appear repeatedly, thus the value function cannot be updated step-by-step as in the discrete state space. It is therefore necessary to design new update rules to use the information from neighboring states.  %Firstly, the states, actions and rewards are sequentially dependent, thus traditional methods to bound bias and variance in nonparametric statistics can not be directly used here. Secondly, for continuous state spaces, states do not occur repeatedly, thus compared with the methods for finite state spaces \cite{dayan1992q,jin2018q,dong2019q}, the update rules of the $Q$ function need to be designed more carefully. % Since the states, actions and rewards are sequentially dependent, thus the estimation bias and variance can not be bounded in the same way as the i.i.d case. \textcolor{blue}{(improve the description of challenges)}
%Markov Decision Process (MDP) is a stochastic control process. At each time step, the agent selects an action from the action space and then moves to another state and receives a reward. MDP can be used for the modeling of a large variety of practical sequential decision making problems \cite{white1993survey,feinberg2012handbook,alsheikh15,bauerle2011markov,lauri2022partially}.

Recently, \cite{shah2018q} proposed an interesting nonparametric method, called nearest neighbor $Q$ learning (NNQL) for MDP with continuous state spaces. To overcome the challenge that states do not repeat, NNQL divides the state space into many small regions, so that the estimation of the $Q$ function is based on previous samples falling in the same region. To avoid the impact caused by inaccurate estimation at early stages, NNQL clears up all samples after each iteration. With such a design, NNQL provides an $\ell_\infty$ consistent estimation of the optimal $Q$ function. Despite such progress, there are still some remaining problems that require further investigation. Firstly, the sample complexity is still not optimal. For $\epsilon$-accurate $Q$ function estimation under $\ell_\infty$ metric with discount factor $\gamma$, NNQL achieves a sample complexity $\tilde{O}\left(\frac{1}{\epsilon^{d+3} (1-\gamma)^{d+7}}\right)$ for a $d$ dimensional state space, while estimation with i.i.d samples only require $\tilde{O}(1/\epsilon^{d+2})$ samples \cite{jiang2019non}, indicating a potential room for further improvement. Intuitively speaking, to avoid the estimation error caused by early steps, NNQL clears up all samples after each iteration. Removal of early steps inevitably results in unnecessary loss of information and eventually leads to a suboptimal sample complexity. Secondly, NNQL discretizes the state space into a finite number of small regions, thus it is only suitable for bounded state spaces. However, practical MDP problems usually involve unbounded state spaces \cite{mobin2014information,he2015deep}. Although a relatively large estimation error is inevitable at the tail of state distribution, we hope to achieve a small average estimation error over the whole support set. %NNQL removes all samples after each iteration. To be more precise, after there is at least one sample in each region, the counts of the number of samples in all regions are reset to zero. While simplifying the theoretical analysis, removal of early steps inevitably results in unnecessary loss of information. As a result, NNQL requires more steps than necessary to reach a desired accuracy. It is shown in \cite{shah2018q} that the NNQL method has a  sample complexity $\tilde{O}\left(\frac{1}{\epsilon^{d+3} (1-\gamma)^{d+7}}\right)$, in which $\epsilon$ is the $\ell_\infty$ estimation error, $d$ is the dimension of the state space and $\gamma$ is the discount factor. This result indicates that NNQL has a high sample complexity and may be suboptimal. Moreover, the state space discretization in \cite{shah2018q} can not be applied in unbounded state spaces. The computational cost is also prohibitively high for high dimensional spaces since the number of discretized regions grows exponentially with dimensionality.

In this paper, we propose two new nonparametric methods for $Q$ learning with nearest neighbors, one for the offline setting and the other one for the online setting. The offline algorithm starts after all samples are already collected. On the contrary, the online method updates the $Q$ function simultaneously as each state, action and reward are sequentially collected. There are two major differences with NNQL \cite{shah2018q}. Firstly, instead of dividing the support into regions as done in \cite{shah2018q}, our methods estimate $Q$ by directly averaging over neighboring states. As a result, our methods can be used in unbounded state spaces as well. Secondly, to improve the sample complexity, instead of clearing up samples after each iteration, we carefully design our methods to reuse samples from early steps. The offline method does not remove any samples throughout the whole training process, while the online method only removes steps earlier than $\beta t$ for some constant $\beta$. As a result, our methods use samples more efficiently. 

To illustrate the advantages of our approach, we conduct a theoretical analysis to analyze the sample complexities of the proposed methods. To begin with, we analyze the case where the state space is bounded. We obtain a high probability bound of the uniform convergence of $Q$ function estimation. We then analyze the more challenging case with unbounded state spaces. For the case with unbounded state spaces, the estimation error is always large at the tail of state distribution, thus uniform convergence is impossible. Therefore, we show a bound of the averaged estimation error weighted by the final stationary distribution. The result shows that the sample complexity is $\tilde{O}\left(\frac{1}{\epsilon^{d+2}(1-\gamma)^{d+2}}\right)$ for the offline method, and $\tilde{O}\left(\frac{1}{\epsilon^{d+2}(1-\gamma)^{d+3}}\right)$ for the online method. These two bounds have the same dependence on $\epsilon$. For the dependence on $1/(1-\gamma)$, the online method is slightly worse than the offline one. The sample complexities of both offline and online methods significantly improve over \cite{shah2018q} in the dependence of both $\epsilon$ and $1/(1-\gamma)$. Moreover, the dependence on $\epsilon$ matches the nonparametric rate for i.i.d samples \cite{tsybakov2009introduction}, and is thus optimal. 

Our contributions are summarized as follows.
\begin{itemize}
	\item For the offline setting, we propose a nearest neighbor $Q$ learning method, which iteratively refines the estimate of the $Q$ function. Throughout the training process, no samples are removed. 
	
	\item For the online setting, we propose another nearest neighbor $Q$ learning method. At the $t$-th step, it removes steps earlier than $\beta t$, in which $\beta$ needs to be tuned carefully to achieve a good tradeoff between reusing the information of early samples, and controlling the impact of inaccurate estimation at early steps.
	
	%We propose two methods for offline and online $Q$ learning with nearest neighbors, respectively.
	\item For both offline and online methods, we provide a theoretical analysis over bounded support first. We provide a uniform bound on the estimation error $\epsilon$ that holds with high probability. It turns out that the sample complexities are $\tilde{O}\left(\frac{1}{\epsilon^{d+2}(1-\gamma)^{d+2}}\right)$ and $\tilde{O}\left(\frac{1}{\epsilon^{d+2}(1-\gamma)^{d+3}}\right)$ for offline and online methods, respectively, which improve over existing method \cite{shah2018q} and have minimax optimal dependence on $\epsilon$. 
	
	\item The theoretical analysis is then generalized to unbounded support. While uniform convergence is impossible, we show that the average estimation error converges as fast as the case with bounded state support. This result indicates that compared with \cite{shah2018q} and other methods based on state space discretization \cite{sinclair2019adaptive,sinclair2023adaptive}, our methods are more suitable to unbounded state spaces.
	%\item Numerical experiments validate the theoretical analysis and the effectiveness of our proposed methods.
\end{itemize}

In general, our analysis indicates that the new proposed methods have advantages in both sample complexity and the suitability to unbounded state spaces.

\section{Related Work}

\textbf{$Q$ learning for discrete state spaces.} $Q$ learning is a popular model-free reinforcement learning method to solve MDP with discrete state spaces \cite{dayan1992q}. Here we discuss the related work on $Q$ function estimation first. With this goal, it suffices to use a random exploration strategy. \cite{gheshlaghi2013minimax} shows that the minimax lower bound of sample complexity of $Q$ function estimation is $\Omega\left(\frac{|\mathcal{S}|}{\epsilon^2(1-\gamma)^3}\right)$, in which $|\mathcal{S}|$ is the size of state space. However, it is quite challenging to achieve this minimax lower bound. \cite{even2003learning} provides the first analysis on $Q$ learning, which shows that with a linear learning rate, the dependence on $1/(1-\gamma)$ is exponential. With a polynomial learning rate, the dependence on $\epsilon$ is suboptimal. The bound is then improved to $\tilde{O}\left(\frac{|\mathcal{S}|}{\epsilon^2(1-\gamma)^5}\right)$ in subsequent works \cite{beck2012error,wainwright2019stochastic,chen2020finite}. \cite{li2024q} further improves the bound to $\tilde{O}\left(\frac{|\mathcal{S}|}{\epsilon^2(1-\gamma)^4}\right)$, and show that this rate is tight. There are also some works that focus on improving exploration strategies to achieve optimal regrets, such as \cite{jin2018q,dong2019q,lakshmanan2015improved,bai2019provably,zhang2020almost,li2021breaking,he2021nearly}.

\textbf{$Q$ learning for continuous state spaces with parametric method.} This type of methods make some parametric function approximation, such as linear approximation \cite{melo2008analysis,chen2019performance,carvalho2020new,jin2020provably,wang2020reward,xiong2022nearly,he2023nearly} and neural network \cite{van2016deep,mnih2013playing,mnih2015human,fan2020theoretical,he2023nearly,zhang2023convergence}. While these methods have enjoyed great success in many practical problems \cite{duan2016benchmarking,mnih2015human,silver2016mastering}, the theoretical guarantees have not been well established. In particular, the $Q$ function may not lie within the parametric family determined by the model architecture. Therefore, these methods can not be used to approximate arbitrary $Q$ functions. As a result, the estimation error may not converge to zero even with the number of steps going to infinity, i.e. $T\rightarrow \infty$.

\textbf{Nonparametric minimax rates for i.i.d data.} Nonparametric statistical rates have been widely analyzed in various problems \cite{tsybakov2009introduction}. For nonparametric regression, the sample complexity of achieving $\epsilon$ error under $\ell_\infty$ metric is $\Omega(1/\epsilon^{d+2})$ \cite{stone1982optimal,raskutti2009lower,raskutti2011minimax,jiang2019non,zhao2021minimax}. Common nonparametric methods such as Nadaraya-Watson estimator \cite{nadaraya1964estimating} or $k$ nearest neighbor method \cite{biau2015lectures} can both achieve this rate. These analyses can not be directly used for solving MDP since samples are now sequentially dependent.

To the best of our knowledge, our work is the first attempt to achieve optimal sample complexity of estimating $Q$ function with respect to estimation error $\epsilon$ for MDP with continuous state spaces. Moreover, our work is also the first attempt to bound the sample complexity for unbounded continuous state spaces.

\section{Preliminaries}\label{sec:preliminary}
Consider an MDP $(\mathcal{S}, \mathcal{A}, p, r, \gamma)$, from which a sequence $(S_0,A_0, R_0), (S_1,A_1, R_1), (S_2, A_2, R_2), \ldots$ is generated. Here $\mathcal{S}$ is the state space, and $\mathcal{A}$ is the action space. In this paper, we assume that the cardinality of the state space $\mathcal{S}\subset \mathbb{R}^d$ is infinitely large, while $|\mathcal{A}|$ is finite. $p:\mathcal{S}\times \mathcal{A}\rightarrow \mathbb{R}^+$ is the transition kernel, such that $p(\cdot|s,a)$ is the probability density function (pdf) of $S_{t+1}$ conditional on $S_t=s$ and $A_t=a$. $r$ is the expected reward function. In this paper, we assume that the reward $R_t$ after taking action $A_t$ at state $S_t$ is
\begin{eqnarray}
	R_t = r(S_t, A_t) + W_t,
	\label{eq:R}
\end{eqnarray}
in which $W_t$ is the noise with zero expectation conditional on all the previous steps as well as the current state and action:
\begin{eqnarray}
	\mathbb{E}[W_t|S_1, A_1, R_1,\ldots, S_{t-1}, A_{t-1}, R_{t-1}, S_t, A_t] = 0.
\end{eqnarray}
Finally, $\gamma\in (0,1)$ is the discount factor. We are interested in the overall reward
\begin{eqnarray}
	G = \sum_{t=0}^\infty \gamma^t R_t.
\end{eqnarray}

A policy $\pi(\cdot|s)$ is the conditional probability mass function (pmf) of action $A_t$ given the state $S_t=s$. The $Q$ function is defined as
\begin{eqnarray}
	Q_\pi(s,a) = \mathbb{E}\left[\left. \sum_{t=0}^\infty \gamma^t R_t\right|S_0 = s, A_0 = a\right],
\end{eqnarray}
and denote $Q^*$ as the $Q$ function under the optimal policy, i.e.
\begin{eqnarray}
	Q^*(s,a) = \underset{\pi}{\sup} Q_\pi(s,a).
	\label{eq:Qstar}
\end{eqnarray}

Following existing research \cite{even2003learning,beck2012error,chen2020finite,li2024q}, our goal is to estimate the function $Q^*$ for all $s\in \mathcal{S}$ and $a\in \mathcal{A}$. In reinforcement learning, the ultimate goal is to identify the best policy, which has some difference with estimating $Q^*$. Nevertheless, the analysis of estimating $Q^*$ is still the focus of many existing research since the analysis reveals the complexity of learning MDP.

We now list basic assumptions used in our theoretical analysis for both offline and online methods. Throughout these assumptions, $\norm{\cdot}$ can be an arbitrary norm.

\begin{ass}\label{ass:main}
	Assume that there are some constants $R$, $L_r$, $\sigma$, $C_p$ and $\pi_0$, such that
	
	(a) The reward function $r(s,a)$ is bounded within $[0, R]$, and is $L_r$-Lipschitz with respect to $s$, i.e. for any $s, s', a$,
	\begin{eqnarray}
		|r(s, a)-r(s', a)|\leq L_r\norm{s-s'};
	\end{eqnarray}
	
	(b) The noise $W_t$ is subgaussian with parameter $\sigma^2$ conditional on previous trajectory, i.e.
	\begin{eqnarray}
		\mathbb{E}[e^{\lambda W_i}|S_1,A_1,R_1,\ldots, S_{t-1}, A_{t-1}, R_{t-1}, S_t, A_t]
		\leq \exp\left(\frac{1}{2}\lambda^2\sigma^2\right);
	\end{eqnarray}
	
	(c) The transition pdf satisfies $|p(y|s,a)-p(y|s',a)|\leq L_p(y)\|s-s'\|$ for some function $L_p$ and all $y,s,s'$, in which $L_p$ satisfies
	\begin{eqnarray}
		\int_\mathcal{S} L_p(y) dy \leq C_p;
	\end{eqnarray}
	
	(d) The behavior policy $\pi$ satisfies $\pi(a|s)\geq \pi_0$ for any $a\in \mathcal{A}$ and $s\in \mathcal{S}$;

\end{ass}

We now comment on these assumptions and compare them with assumptions made in \cite{shah2018q}. Assumption (a) requires that the reward function is bounded and Lipschitz continuous, which has also been made in \cite{shah2018q}. It is possible to relax it to $\gamma$-H{\"o}lder continuity with $\gamma\leq 1$. Assumption (b) is slightly weaker than \cite{shah2018q}, which assumes that $R_t$ is also bounded in $[0, R]$. Assumption (c) is exactly the same as Assumption (A4) in \cite{shah2018q}, which requires that the transition kernel is Lipschitz with respect to the current state. The Lipschitz assumption is also commonly used in other works about MDP with continuous state space \cite{dufour2012approximation}. Assumption (d) requires that the probabilities of all actions are bounded away from zero. This assumption ensures sufficient exploration. Since our current goal is to estimate $Q^*$, enough exploration is necessary so that the sequence can visit all state and action pairs. \cite{shah2018q} uses $\epsilon$-greedy policy, which is a special case of the policies satisfying Assumption (d).  

In this paper, we discuss two different cases: the case with bounded state spaces and the case with unbounded state spaces. For the former case, we list technical conditions in Assumption \ref{ass:bounded}.
\begin{ass}\label{ass:bounded}
	(For bounded state space) There are some constants $c$, $\alpha$, $C_S$, $D$ such that
	
	(e) For any $s, y\in \mathcal{S}$ and $a\in \mathcal{A}$, $p_\pi^m(y|s,a)\geq c$, in which $p_\pi^m$ is the $m$ step transition kernel, i.e. the conditional pdf of $S_{t+m}$ given $S_t=s$ and $A_t=a$ under policy $\pi$;
	
	(f) For $r\leq D$, $V(B(s, r)\cap \mathcal{S})\geq \alpha v_dr^d$, in which $B(s,r)$ means a ball centering at $s$ with radius $r$, $V$ denotes the volume (i.e. Lebesgue measure), $v_d$ is the volume of $d$ dimensional unit ball;
	
	(g) The covering number of $\mathcal{S}$ using balls with radius $r$ is bounded by
	\begin{eqnarray}
		n_c\leq \frac{C_S}{r^d} + 1.
	\end{eqnarray}
\end{ass}

Assumption (e) is the same as the assumption made in Corollary 1 in \cite{shah2018q}, which ensures the ergodicity, such that all states will be visited without waiting for a long time. Ergodicity is necessary since the estimated $Q$ function converges to the ground truth only if there are a sufficient number of samples around each state. Assumption (f) is our new assumption, which prevents the corner of the support from being too sharp. This assumption is implicitly made in \cite{shah2018q}, which assumes $\mathcal{S}=[0,1]^d$, and (f) is satisfied with $D=1$ and $\alpha=1/2^d$. Our assumption (f) relaxes it to a much broader collection. The same assumption is also used in nonparametric estimation for i.i.d samples \cite{zhao2020minimax,zhao2022analysis}. Assumption (g) assumes that $\mathcal{S}$ is compact, which has also been made in \cite{shah2018q}.

For the case with unbounded state spaces, define
\begin{eqnarray}
	g(s')=\underset{s}{\inf} p_\pi^m (s'|s),
	\label{eq:g}
\end{eqnarray}
in which $p_\pi^m$ is the $m$-step transition kernel with policy $\pi$. We then have the following assumption.

\begin{ass}\label{ass:tail}
	(For unbounded state space) Assume that there are some constants $C_g$, $D$, $\alpha$, $C_0$, such that
	
	(e') For all $s$, $a$,
	\begin{eqnarray}
		\int p(s'|s,a)g^{-\frac{1}{d}}(s')ds'\leq C_g,
		\label{eq:tail1}
	\end{eqnarray}
	and
	\begin{eqnarray}
		\int_{g(s)<t}p(s'|s,a)(\norm{s'}+1)ds\leq C_g t^{\frac{1}{d}};
		\label{eq:tail2}
	\end{eqnarray}
	
	(f') For any $r\leq D$, $s\in \mathcal{S}$,
	\begin{eqnarray}
		\int_{B(s,r)}g(u)du\geq \alpha v_d r^d g(y);
	\end{eqnarray}
	
	(g') $\mathbb{E}[\norm{S'}|s, a]\leq C_0$, in which $S'\sim p(\cdot|s, a)$.
\end{ass}

Assumption (e') requires that the tail of distribution can not be too strong. Estimating $Q$ at the tail of state distribution is harder than estimating $Q$ function at the center. Therefore, some restrictions on the tail behavior are needed. \eqref{eq:tail1} requires that $g(s')$ is not too small on average, and \eqref{eq:tail2} requires that if the current state is at the tail of state distribution (i.e. $g(s)<t$), then the next state will still fall at the center region with high probability. Assumption (f') is similar to Assumption \ref{ass:bounded}(f), which restricts the non-uniformity of the function $g$. Assumption (g') prevents the states from being too far away from each other.

\section{Offline Method}\label{sec:offline}
In this section, we present the proposed $Q$ learning method using nearest neighbors for the offline setting~\cite{levine2020offline,kidambi2020morel,kumar2020conservative,prudencio2023survey,lu2022challenges}. Consider a sequence $S_1, A_1, R_1, \ldots, S_T, A_T, R_T, S_{T+1}$ generated from an MDP $(\mathcal{S}, \mathcal{A}, p, r, \gamma)$ according to a policy $\pi$. Since the method is offline, in the remainder of Section~\ref{sec:offline}, we assume that the entire trajectory has been fully received before executing the algorithm.

To begin with, recall the Bellman equation:
\begin{eqnarray}
	Q^*(s,a) = r(s,a) + \gamma\mathbb{E}\left[\underset{a'}{\max}Q^*(S',a')|s,a\right],
	\label{eq:bellman}
\end{eqnarray}
in which $S'$ is a random state following $p(\cdot|s,a)$, with $p$ being the transition kernel.

As has been mentioned in Section \ref{sec:preliminary}, our goal is to estimate $Q^*$. %From \eqref{eq:bellman}, it can be shown that given $r(s,a)$ and $p(\cdot|s,a)$, $Q^*$ can be uniquely solved. 
As $r(s,a)$ and $p(\cdot|s,a)$ are both unknown, we use the information from the trajectory to obtain a rough estimate. Define
%Our method is based on $k$ nearest neighbor. Although we can not obtain $Q(S_{t+1}, a)$ directly as if the state space is discrete, we can replace $Q(S_{t+1}, a)$ by the average of the estimated $Q$ function at the $k$ nearest neighbors of $S_{t+1}$ among all steps with action $a$. If the $Q$ function is continuous, then the average of the estimated $Q$ function at the neighbors of $(S_{t+1}, a)$ can be used to approximate $Q(S_{t+1}, a)$. $Q(S_t, A_t)$ can then be updated, and such process is repeated for $t=1,\ldots, T$. Note that the estimated $Q$ function can still be improved further, therefore we repeat the above process $N$ times. 
\begin{eqnarray}
	Q_i&:&\{1,\ldots, T\} \rightarrow \mathbb{R},\label{eq:Q}\\
	q_i&:&(\mathcal{S}, \mathcal{A})\rightarrow \mathbb{R},
	\label{eq:q}
\end{eqnarray}
for $i=1,\ldots, N$, which will be calculated during the learning process.

Here, $q_i$ is the estimated $Q^*$ over all $s\in \mathcal{S}$ and $a\in \mathcal{A}$. Furthermore, $Q_i$ can be viewed as another estimate of $Q^*$, such that $Q_i(t)$ approximates $Q^*(S_t,A_t)$. Initially, $Q_0(t)=0$ for all $t$, and $q_0(s,a)=0$ for all $s\in \mathcal{S}$ and $a\in \mathcal{A}$. The update rule at the $i$-th iteration is designed as follows. For all $t=1,\ldots, T-1$ and all $a\in \mathcal{A}$,
\begin{eqnarray}
	q_i(S_{t+1},a)&=& \frac{1}{k}\sum_{j\in \mathcal{N}(S_{t+1},a)}Q_{i-1}(j),
	\label{eq:step2}\\
	Q_i(t)&=& R_t+\gamma \underset{a}{\max}\;q_{i}(S_{t+1}, a),
	\label{eq:step1}	
\end{eqnarray}
in which $\mathcal{N}(s,a)$ is the set of indices of $k$ nearest neighbors of $s$ among all states in the dataset with action $a$, i.e. $\{S_j|A_j= a\}$. $Q_i$ and $q_i$ refer to the functions $Q$ and $q$ at the $i$-th step, respectively. \eqref{eq:step1} and \eqref{eq:step2} are repeated for $N$ iterations, i.e. $i=0,\ldots, N-1$, in order to let $Q$ and $q$ converge. After $N$ iterations, we then calculate the function $q$ for all queried pairs of states and actions, i.e.
\begin{eqnarray}
	q_N(s,a)=\frac{1}{k}\sum_{j\in \mathcal{N}(s,a)} Q_N(j).
	\label{eq:qn}
\end{eqnarray}

Then $q$ can be used as the final estimate of $Q^*$. The pseudo-code of our method is shown in Algorithm \ref{alg:Q}.
\begin{algorithm}[h]
	\caption{Nearest Neighbor $Q$ Learning: Offline Method}\label{alg:Q} 
	\begin{algorithmic}
		\STATE Input: MDP dynamics $(\mathcal{S}, \mathcal{A}, p, r, \gamma)$ with unknown $p$ and $r$, policy $\pi$, and parameter $k$, set of queried points $\mathcal{D}_{query}$
		\STATE Generate a sequence $S_1, A_1, R_1,\ldots, S_T, A_T, R_T, S_{T+1}$ according to policy $\pi$
		\STATE Initialize $Q_0(t)=0$ for all $t=1,\ldots, T$, $q_0(s,a)=0$ for all $s\in \mathcal{S}$ and $a\in \mathcal{A}$
		\FOR{$i=0,\ldots, N-1$}
		%\STATE \textcolor{blue}{$Q_t = 0$ for all $t=1,\ldots, T$
			%\STATE $q(s,a) = 0$ for all $s\in \mathcal{S}$ and $a\in \mathcal{A}$}
		\FOR{$t=1,\ldots, T$}
		\FOR{$a\in \mathcal{A}$}
		\STATE Calculate $q_i(S_t, a)$ according to \eqref{eq:step2}
		\ENDFOR
		\STATE Calculate $Q_i(t)$ according to \eqref{eq:step1}
		\ENDFOR
		\ENDFOR
		\STATE Calculate $q_N(s,a)$ according to \eqref{eq:qn} for all queried $(s,a)\in \mathcal{D}_{query}$
		\STATE Output: $q_N(s,a)$ for all $(s,a)\in \mathcal{D}_{query}$
		%\WHILE {$\eta$ is not determined} 		
	\end{algorithmic}
\end{algorithm}

Practically, we can construct $|\mathcal{A}|$ kd-trees for nearest neighbor search \cite{abbasifard2014survey}, with each tree corresponding to one action. When a new state action pair $(S_t, A_t)$ is observed, we can push it into the tree corresponding to $A_t$. With $N$ iterations, the overall time complexity should be $O(NdT\ln T)$.

Now we provide a theoretical analysis of the proposed nearest neighbor $Q$ learning method in Algorithm \ref{alg:Q}. Recall the Bellman equation \eqref{eq:bellman}. As long as $\gamma\in (0,1)$, given $r(s,a)$ and $p(\cdot|s,a)$, the solution of \eqref{eq:bellman} named $Q^*$ is unique. We claim that with sufficiently large data size, after an infinite number of iterations, $q_N$ obtained in \eqref{eq:qn} is a good approximator of $Q^*$.

Define
\begin{eqnarray}
	Q=\underset{N\rightarrow \infty}{\lim}Q_N, q=\underset{N\rightarrow \infty}{\lim} q_N,
\end{eqnarray}
then
\begin{eqnarray}
	Q(t) &=& R_t +\gamma \underset{a}{\max} q(S_{t+1}, a),\label{eq:limit1}\\
	q(s, a) &=& \frac{1}{k}\sum_{j\in \mathcal{N}(s,a)}Q(j).\label{eq:limit2}
\end{eqnarray}
From \eqref{eq:limit1} and \eqref{eq:limit2},
\begin{eqnarray}
	q(s,a) =\frac{1}{k}\sum_{j\in \mathcal{N}(s,a)}\left[R_j+\gamma \underset{a'}{\max}q(S_{j+1}, a')\right].
	\label{eq:qstable}
\end{eqnarray}

We now compare \eqref{eq:qstable} with the Bellman equation \eqref{eq:bellman}, which will provide high-level ideas and conditions on the convergence of the proposed method:
\begin{itemize}
	\item The first term in \eqref{eq:bellman}, namely $r(s,a)$, is replaced by $\sum_{j\in \mathcal{N}(s,a)} R_j/k$ in \eqref{eq:qstable}. From \eqref{eq:R}, the difference between them is 
	\begin{eqnarray}
		\frac{1}{k}\sum_{j\in \mathcal{N}(s,a)}R_j - r(s,a) 
		&=& \frac{1}{k}\sum_{j\in \mathcal{N}(s,a)}(R_j-r(S_j, A_j))+ \frac{1}{k}\sum_{j\in \mathcal{N}(s,a)} (r(S_j, A_j)-r(s,a))\nonumber\\
		&=&\frac{1}{k}\sum_{j\in \mathcal{N}(s,a)} W_j +\frac{1}{k} \sum_{j\in \mathcal{N}(s,a)} (r(S_j, A_j)-r(s,a)).
		\label{eq:noise}
	\end{eqnarray}
	The first term in \eqref{eq:noise} converges to zero if we let $k$ grow with the total time step $T$. The second term in \eqref{eq:noise} converges to zero if $k$ grows slower than the total time step $T$ since the $j$-th nearest neighbor of $(s,a)$ will be closer to $(s,a)$ as $T$ increases. Therefore, if we ensure that $k$ grows with $T$ but $k/T$ goes to zero, then \eqref{eq:noise} converges to zero. 
	\item The second terms of \eqref{eq:qstable} and \eqref{eq:bellman} are also different. However, with the analysis similar to the first term, we can show that the difference converges to zero if $k$ increases with $T$ and $k/T$ goes to zero. Therefore, as long as the growth rate of $k$ with respect to $T$ is appropriate, $q$ will be closer to $Q^*$ as $T$ increases.
\end{itemize}

Building on these insights, we provide a formal analysis, and the results are shown in the following theorems. Theorem \ref{thm:offline} and \ref{thm:tail} show the convergence results for bounded and unbounded state spaces, respectively.
\begin{thm}\label{thm:offline}
	Under Assumptions \ref{ass:main} and \ref{ass:bounded}, let
	\begin{eqnarray}
		k\sim T^{2/(d+2)},
		\label{eq:kopt}
	\end{eqnarray}
	then there exists a constant $C_{off}$, such that the supremum error of Algorithm \ref{alg:Q} is bounded by
	\begin{eqnarray}
		\text{P}\left(\norm{q-Q^*}_\infty > C_{off} \frac{1}{1-\gamma} T^{-\frac{1}{d+2}}\ln T\right) = o(1),
		\label{eq:result}
	\end{eqnarray}
	in which
	$q = \underset{N\rightarrow \infty}{\lim} q_N$.
\end{thm}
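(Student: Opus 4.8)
\emph{Proof strategy.} The plan is to combine the fixed-point identity \eqref{eq:qstable} with the Bellman equation \eqref{eq:bellman}, turn the comparison into an error recursion that the $\gamma$-contraction collapses to a single ``one-application'' estimation error, and then control that error by isolating a nearest-neighbour bias term, a reward-noise term and a transition martingale term. Write $V^*(s):=\max_{a'}Q^*(s,a')$ and $\Delta_\infty:=\norm{q-Q^*}_\infty$, the supremum being over all $(s,a)\in\mathcal S\times\mathcal A$; a preliminary check shows $\Delta_\infty<\infty$ almost surely, since $\norm{Q_N}_\infty\le(1-\gamma)^{-1}\max_t|R_t|$ for every $N$ and hence $\norm{q}_\infty<\infty$. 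Fixing a query pair $(s,a)$ and using, for each $j\in\mathcal N(s,a)$ (so that $A_j=a$), the splitting $Q(j)=Q^*(S_j,a)+(Q(j)-Q^*(S_j,a))$ together with $Q(j)=R_j+\gamma\max_{a'}q(S_{j+1},a')$, $Q^*(S_j,a)=r(S_j,a)+\gamma\mathbb E[V^*(S')\mid S_j,a]$ and $|\max_{a'}q(S_{j+1},a')-V^*(S_{j+1})|\le\Delta_\infty$, one obtains
\begin{eqnarray}
	\lvert q(s,a)-Q^*(s,a)\rvert
	&\le& \left\lvert\frac1k\sum_{j\in\mathcal N(s,a)}\bigl(Q^*(S_j,a)-Q^*(s,a)\bigr)\right\rvert
	+\left\lvert\frac1k\sum_{j\in\mathcal N(s,a)}W_j\right\rvert\nonumber\\
	&&+\,\gamma\left\lvert\frac1k\sum_{j\in\mathcal N(s,a)}\bigl(V^*(S_{j+1})-\mathbb E[V^*(S')\mid S_j,a]\bigr)\right\rvert+\gamma\,\Delta_\infty .\nonumber
\end{eqnarray}
Taking the supremum over $(s,a)$ and rearranging gives $\Delta_\infty\le(1-\gamma)^{-1}(\mathcal E_{\mathrm{bias}}+\mathcal E_{\mathrm{rew}}+\gamma\mathcal E_{\mathrm{tr}})$, where $\mathcal E_{\mathrm{bias}},\mathcal E_{\mathrm{rew}},\mathcal E_{\mathrm{tr}}$ are the suprema over $(s,a)$ of the three bracketed quantities; note the would-be ``transition-bias'' piece has been folded into $\mathcal E_{\mathrm{bias}}$ through the Lipschitz constant of $Q^*$, so the other two terms are genuinely centred.

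Next I would bound the bias term. Assumptions \ref{ass:main}(a) and \ref{ass:main}(c) make $Q^*$ Lipschitz with some constant $L_{Q^*}$, so $\mathcal E_{\mathrm{bias}}\le L_{Q^*}\rho$, where $\rho$ is a uniform upper bound on the $k$-th nearest-neighbour distance over $\mathcal S$. To bound $\rho$: Assumption \ref{ass:main}(d) ensures a $\pi_0$-fraction of the $T$ steps take any given action; Assumption \ref{ass:bounded}(e) (the $m$-step kernel is minorised by $c$) ensures every neighbourhood is visited at rate $\gtrsim c$; and Assumptions \ref{ass:bounded}(f) and \ref{ass:bounded}(g) control the geometry of $\mathcal S$. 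A standard extreme-value argument over a fine net of $\mathcal S$ then gives $\rho\lesssim(k/T)^{1/d}$ with probability $1-o(1)$ once $k\gtrsim\ln T$, hence $\mathcal E_{\mathrm{bias}}\lesssim L_{Q^*}(k/T)^{1/d}$.

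Then I would handle the two noise terms. Both $\mathcal E_{\mathrm{rew}}$ and $\mathcal E_{\mathrm{tr}}$ are suprema over $(s,a)$ of normalised sums of increments that are mean zero conditional on the trajectory up to step $j$ and $(S_j,A_j)$: the $W_j$ (sub-Gaussian with parameter $\sigma^2$, Assumption \ref{ass:main}(b)) and the $V^*(S_{j+1})-\mathbb E[V^*(S')\mid S_j,a]$ (bounded in absolute value by $\norm{V^*}_\infty\le R/(1-\gamma)$). Two issues require care. First, the increments are temporally dependent; this is handled via the minorisation of Assumption \ref{ass:bounded}(e), splitting the trajectory into $O(m)$ interleaved sub-chains on which a near-i.i.d.\ tail bound applies. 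Second, the index set $\mathcal N(s,a)$ is itself a function of the data, so these are not literal martingale sums; this is resolved by a union bound over the polynomially-many (in $T$) distinct subsets that can arise as $\mathcal N(s,a)$ as $(s,a)$ ranges over $\mathcal S\times\mathcal A$, which by Assumption \ref{ass:bounded}(g) costs only an $O(d\ln T)$ factor inside the exponent. This yields $\mathcal E_{\mathrm{rew}}\lesssim\sigma\sqrt{\ln T/k}$ and $\mathcal E_{\mathrm{tr}}\lesssim\norm{V^*}_\infty\sqrt{\ln T/k}$ with probability $1-o(1)$.

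Substituting into the recursion gives, with probability $1-o(1)$ and up to problem-dependent constants,
\begin{eqnarray}
	\Delta_\infty\;\lesssim\;\frac{1}{1-\gamma}\left[\left(\frac kT\right)^{1/d}+\sqrt{\frac{\ln T}{k}}\,\right],\nonumber
\end{eqnarray}
and the choice $k\sim T^{2/(d+2)}$ in \eqref{eq:kopt} equalises the two bracketed terms at order $T^{-1/(d+2)}$, producing \eqref{eq:result}. I expect the main obstacle to be the noise step: decoupling the data-dependent neighbour sets from the martingale increments while simultaneously taming the Markovian dependence, uniformly over the continuum of query points and with only logarithmic loss. A second, more delicate point --- essential for the stated rate --- is keeping the $1/(1-\gamma)$ dependence linear: the crude bounds $\norm{V^*}_\infty\le R/(1-\gamma)$ and $L_{Q^*}=O(1/(1-\gamma))$ would only give $1/(1-\gamma)^2$, so one must instead control the span and Lipschitz constant of $V^*$, together with the conditional variance of $V^*(S')$, more sharply than the worst case.
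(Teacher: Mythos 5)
Your proposal follows essentially the same route as the paper's proof: establish the fixed-point identity via the $\gamma$-contraction, decompose $q-Q^*$ into a Lipschitz/kNN-radius bias term plus centred noise terms, control the noise uniformly through a union bound over the polynomially many realizable neighbour sets combined with a sub-Gaussian/blocked-Chernoff bound for the Markovian dependence, bound the kNN radius by $(k/T)^{1/d}$ using the minorisation and a covering of $\mathcal{S}$, and finally optimize $k\sim T^{2/(d+2)}$. One remark worth making: your closing concern about keeping the $1/(1-\gamma)$ dependence linear is well founded --- the paper's own constants $\sigma_U$ and $L$ both scale as $1/(1-\gamma)$ through $Q_m=R/(1-\gamma)$, and the paper simply absorbs them into a $\lesssim$ rather than carrying out the sharper span/variance control you call for, so as written its argument actually only delivers a $1/(1-\gamma)^2$ prefactor in front of $T^{-1/(d+2)}\ln T$.
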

\begin{proof}
	Please see Appendix \ref{sec:offlinepf} for the detailed proof.
\end{proof}

Theorem \ref{thm:offline} establishes the uniform convergence rate of $Q$ function estimation. The uniform convergence rate of nonparametric regression with $T$ i.i.d samples under Lipschitz continuity assumption is $O(T^{-\frac{1}{d+2}}\ln T)$ \cite{jiang2019non}. From \eqref{eq:result}, it can be observed that for $Q$ function estimation, the error only grows up to a $1/(1-\gamma)$ factor, while the dependence on the sample size remains the same. From \eqref{eq:result}, the sample complexity of estimation is
\begin{eqnarray}
	T=\tilde{O}\left(\frac{1}{\epsilon^{d+2}(1-\gamma)^{d+2}}\right).
	\label{eq:complexity}
\end{eqnarray}

We then move on to the analysis of Algorithm \ref{alg:Q} for unbounded support. It is impossible to achieve uniform convergence of $Q$ function estimation, since for an arbitrarily large number of steps $T$, the estimation of $Q$ is always not accurate at the tail of the distribution of states. Therefore, for the case with unbounded support, we evaluate the quality of estimation using average absolute estimation error weighted by the stationary state distribution. To be more precise, we show the following theorem.

\begin{thm}\label{thm:tail}
	Under Assumptions \ref{ass:main} and \ref{ass:tail}, let $k\sim T^{2/(d+2)}$, then there exists a constant $C_{off}'$, such that
	\begin{eqnarray}
		\int \mathbb{E}\left[\max_a|q(s, a)-Q^*(s,a)|\right] f_\pi(s)ds\leq C_{off}' \frac{1}{1-\gamma}T^{-\frac{1}{d+2}}\ln T,
	\end{eqnarray}
	in which $f_\pi$ is the pdf of the stationary distribution of states with policy $\pi$.
\end{thm}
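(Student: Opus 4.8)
The plan is to reduce the unbounded-state analysis to a localized version of the bounded-state argument of Theorem \ref{thm:offline}, stratifying the state space by the value of $g(s) = \inf_s p_\pi^m(s'|s)$, which plays the role of a "local density floor" that controls how many neighbors fall near a query point. Recall from the proof idea following \eqref{eq:qstable} that the error of the estimator at $(s,a)$ decomposes into three pieces: a subgaussian noise term $\frac{1}{k}\sum_{j\in\mathcal{N}(s,a)} W_j$, a bias term $\frac{1}{k}\sum_{j\in\mathcal{N}(s,a)}(r(S_j,A_j)-r(s,a))$ from the reward, a corresponding bias term from the transition kernel (controlled via Assumption \ref{ass:main}(c) and the integrated Lipschitz constant $C_p$), and finally a self-referential contraction term that carries a factor $\gamma$ and is handled by the Bellman contraction, producing the overall $1/(1-\gamma)$ inflation exactly as in Theorem \ref{thm:offline}. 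The new ingredient is that the bias terms scale with the $k$-NN radius $\rho_k(s,a)$, and that radius is no longer uniformly bounded: near a query point where $g$ is small, one must travel farther to collect $k$ neighbors. Concretely, ergodicity via $p_\pi^m(s'|s)\ge$ (something like) $g(s')$ together with Assumption \ref{ass:tail}(f') gives that the expected number of visits within $B(s,r)$ over $T$ steps is of order $T v_d r^d g(s)$ (up to the $\alpha$ and the $m$-step mixing constants), so $\rho_k(s,a) \asymp (k/(T g(s)))^{1/d}$. Plugging this into the bias terms, the pointwise error at $s$ behaves like
\begin{eqnarray}
	\mathbb{E}\left[\max_a |q(s,a)-Q^*(s,a)|\right] \lesssim \frac{1}{1-\gamma}\left(\sqrt{\frac{\ln T}{k}} + (L_r + R C_p/(1-\gamma)\text{-ish})\left(\frac{k}{T g(s)}\right)^{1/d}\right). \nonumber
\end{eqnarray}

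Next I would integrate this pointwise bound against the stationary density $f_\pi$. The noise/variance term $\sqrt{\ln T/k}$ is independent of $s$, so it integrates trivially to $\sqrt{\ln T/k} = \sqrt{\ln T}\, T^{-1/(d+2)}$ after substituting $k\sim T^{2/(d+2)}$, which is the claimed rate up to the logarithmic factor. The work is in the bias term $\int g(s)^{-1/d} f_\pi(s)\,ds$: I need this to be finite (indeed bounded by a constant). Here is where Assumption \ref{ass:tail}(e'), specifically \eqref{eq:tail1}, enters — $\int p(s'|s,a) g^{-1/d}(s')\,ds' \le C_g$ uniformly in $(s,a)$ means that pushing the stationary distribution one step forward keeps $\mathbb{E}[g^{-1/d}]$ bounded, and since $f_\pi$ is stationary (a fixed point of the transition operator composed with the policy), $\int g^{-1/d}(s) f_\pi(s)\,ds \le C_g$ directly. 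There is a subtlety: $f_\pi$ is the stationary density of the one-step chain, while $g$ is defined via the $m$-step kernel $p_\pi^m$; I would absorb this by noting $f_\pi$ is also stationary for the $m$-step chain, or by iterating \eqref{eq:tail1}. The second half of \eqref{eq:tail2} together with Assumption \ref{ass:tail}(g') is needed to control a residual term: when $s$ itself is deep in the tail ($g(s)$ tiny), the crude bound $|q(s,a)-Q^*(s,a)|\le 2R/(1-\gamma)$ (boundedness of $Q^*$) must be used instead, and \eqref{eq:tail2} quantifies that the $f_\pi$-mass of such tail events, weighted appropriately, is itself $O(t^{1/d})$-small, so choosing the threshold $t$ in balance with $k/T$ makes this contribution lower order.

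The main obstacle, I expect, is making the "expected number of neighbors in $B(s,r)$ concentrates around $T v_d r^d g(s)$" statement rigorous and uniform enough to be integrated. Two difficulties compound here: (i) the samples $S_1,\dots,S_T$ are a dependent Markov chain, not i.i.d., so the concentration of the local counts requires a mixing argument (this is presumably where the $m$-step kernel and Assumption \ref{ass:tail}(f') are used — block the trajectory into chunks of length $m$ so that within-chunk dependence is controlled and across-chunk the lower bound $g$ applies), and (ii) the bound must hold simultaneously for a continuum of query points $s$ and for radii down to $\rho_k(s)$ which itself varies with $s$; a union bound / covering-number argument analogous to Assumption \ref{ass:bounded}(g) cannot be applied globally since $\mathcal{S}$ is unbounded, so instead the argument must be carried out "in expectation" — bounding $\mathbb{E}[\,\cdot\,]$ in the theorem statement pointwise and only then integrating, which is exactly why the theorem is stated with an expectation inside the integral rather than as a high-probability uniform bound. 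I would structure the proof so that all the trajectory randomness is handled at the pointwise-in-$s$ level (giving the $\lesssim$ display above with an $\mathbb{E}$), and the integration over $s$ against $f_\pi$ is a deterministic step invoking \eqref{eq:tail1}, \eqref{eq:tail2}, and \eqref{eq:g}-type bounds. A secondary technical point is that the self-referential term $\gamma\max_{a'} q(S_{j+1},a')$ now involves $q$ evaluated at $S_{j+1}$ which may itself be in the tail; controlling the error propagation requires the contraction to be applied with respect to a $g$-weighted (or $f_\pi$-weighted) norm rather than the sup norm, and checking that the transition operator is still a $\gamma$-contraction in that weighted norm — Assumption \ref{ass:tail}(e') is precisely what guarantees this weighted-norm contraction does not blow up.
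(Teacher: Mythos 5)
Your decomposition (subgaussian noise plus a kNN-radius bias scaling as $(k/(Tg(s)))^{1/d}$), your use of \eqref{eq:tail1} together with stationarity of $f_\pi$ to bound $\int g^{-1/d}(s)f_\pi(s)\,ds$, and your plan to handle deep-tail states by a crude bound combined with \eqref{eq:tail2} all match the paper's proof (Lemmas \ref{lem:largeu}--\ref{lem:rhotail2} and the final integration of $\phi(s)$ against $f_\pi$). The genuine gap is in the one step you flag at the end: propagating the error through the Bellman recursion when the recursively visited states $S_{j+1}$ may lie in the tail. You propose to run the contraction in a $g$-weighted sup norm and assert that Assumption \ref{ass:tail}(e') guarantees the transition operator contracts there. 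It does not: a weighted-sup-norm contraction with weight $w=g^{-1/d}$ requires a pointwise domination of the form $\gamma\int p(s'|s,a)\,w(s')\,ds'\le\gamma'\,w(s)$ with $\gamma'<1$, whereas \eqref{eq:tail1} only supplies the uniform bound $\int p(s'|s,a)\,w(s')\,ds'\le C_g$, which can exceed $w(s)$ wherever $g(s)$ is not small; no contraction modulus below one follows from the stated assumptions.

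The paper exploits \eqref{eq:tail1} differently, and you could adopt its device: define $\Delta(s)=\max_a\bigl[|q(s,a)-Q^*(s,a)|-\bigl|\frac{1}{k}\sum_{j\in\mathcal{N}(s,a)}U_j\bigr|-L\rho_0(s,a)\bigr]$, i.e.\ subtract the local noise and radius terms \emph{before} taking the supremum over $s$. The resulting recursion \eqref{eq:deltatrans} has forcing terms that are conditional expectations, given $(S_j,A_j)$, of the next-state noise and of the next-state radius $\rho_0(S_{j+1},a')$; Lemma \ref{lem:rhotail2} shows the latter is bounded by $C_3(k/T)^{1/d}$ uniformly in $(s,a)$, precisely because of \eqref{eq:tail1}. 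Hence $\Delta_0=\mathbb{E}[\max_s\Delta(s)]$ satisfies an ordinary sup-norm contraction with factor $\gamma$ and uniformly small forcing, and the unbounded, $g$-dependent radius reappears only at the query point itself, where it is integrated against $f_\pi$ exactly as you describe. Without this (or an equivalent) mechanism, your pointwise display is not yet justified, because the $\Delta(S_{j+1})$ terms entering the recursion can sit arbitrarily deep in the tail and a plain supremum over them is not controlled.
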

The proof of Theorem \ref{thm:tail} is shown in Appendix \ref{sec:tail}. Let the average error be $\epsilon=\int \mathbb{E}[|q(s, a)-Q^*(s,a)|] f_\pi(s)ds$. Then the sample complexity can still be bounded by \eqref{eq:complexity}. The result indicates that under an appropriate tail assumption (i.e. Assumption \ref{ass:tail}(e')), the convergence rate of average estimation error is the same as the case with bounded state supports. An intuitive explanation is that while the estimation error is relatively large at the tail, since states fall in the tail with low probability, the average estimation error does not increase significantly. Assumption \ref{ass:tail}(e') may be relaxed, and then the sample complexity may be higher. In general, our theoretical analysis shows that compared with discretization based approaches \cite{shah2018q,sinclair2023adaptive}, our method is more suitable to unbounded state spaces.

%Note that Assumptions \ref{ass:main} (e) and (g) require that the state space is bounded. Therefore, Theorem \ref{thm:q} does not hold for unbounded state space. If the state space is unbounded, our scheme can still be used and the estimated $Q$ function still converges to $Q^*$ at each point in the state space, but the uniform convergence is not guaranteed, since it is always hard to estimate the reward function and the transition kernel at the tail of the stationary distribution.

\section{Online Method}\label{sec:online}
In this section, we extend our study to the online setting. In the offline case discussed in Section~\ref{sec:offline}, the algorithm is executed after the whole trajectory is collected. On the contrary, in online learning, we need to update the model immediately after receiving each sample. At each time step $t$, we can not observe any information after $t$, thus the estimation of $Q^*$ must rely on earlier steps. Moreover, in the offline setting, evaluation with a set of query points is after the whole training process is finished. However, in online learning, a query request at state $s$ can occur at an arbitrary time. Due to such differences, we modify the offline nearest neighbor $Q$ learning method in Section \ref{sec:offline} to make it suitable for online problems.

We still define two functions $Q:\{1,\ldots. T\}\rightarrow \mathbb{R}$ and $q_t: (\mathcal{S}, \mathcal{A})\rightarrow \mathbb{R}$, for $t=1,\ldots, T$. The definition of $Q$ is exactly the same as \eqref{eq:Q} for the offline method. However, $q_t$ is slightly different from \eqref{eq:q}. In the online method, consider that the estimation of $Q^*$ is updated whenever a new sample is received instead of using all samples together, we use subscript $t$ in $q_t$ to denote the estimated $Q^*$ at iteration $t$. 

In each iteration, the agent starts from state $S_t$, takes action $A_t$ according to policy $\pi$, and then receives reward $R_t$ and next state $S_{t+1}$. The estimated $Q$ function is updated using the following rules:
\begin{eqnarray}
	q_t(S_{t+1},a) &=& \frac{1}{k(t)}\sum_{j\in \mathcal{N}_t(S_{t+1},a)} Q(j),\label{eq:qonline}	\\
	Q(t)&=& R_t+\gamma \underset{a}{\max}q_t(S_{t+1}, a),\label{eq:Qonline}
\end{eqnarray}
in which $k(t)$ is a list of parameters for $t=1,\ldots, T$. To make the learning consistent, $k(t)$ needs to grow with $t$ at an appropriate growth rate. $\mathcal{N}_t(s,a)$ is the set of $k(t)$ nearest neighbors of $s$ among $\{S_j|\beta t\leq j < t, A_i=a \}$. $\beta\in (0,1)$ is a hyperparameter.

In the online setting, at time step $t$, we only use steps after $\beta t$ to estimate $q_t(S_{t+1}, a)$. An intuitive explanation is that the estimation errors at early steps can be large, thus $Q(j)$ is not a good approximation of $Q^*(S_j, A_j)$ for small $j$. $\beta$ needs to be large enough to avoid the negative impact of estimation caused by early steps. However, if $\beta$ is too close to $1$, then there may not be enough samples in $\{S_j|\beta t\leq j<t, A_i=a\}$, thus the nearest neighbor distances can be large, which may increase the bias of $q_t(S_{t+1}, a)$. Therefore, $\beta$ should be chosen carefully to strike a tradeoff between reusing early samples and avoiding the impact of inaccurate estimation at early steps.

Finally, when there is a query at some state $s$ and action $a$ at time $t$, the algorithm returns
\begin{eqnarray}
	q_t(s,a)=\frac{1}{k(t)}\sum_{j\in \mathcal{N}_t(s,a)}Q(j)
	\label{eq:output}
\end{eqnarray}
as the estimated $Q^*$ function. %To achieve a desirable bias and variance tradeoff, we let $k(t)=\lfloor t^\frac{2}{d+2}\rfloor$. 

There are several differences between the online and offline methods. Firstly, in the offline method, the values of $Q(t)$ and $q_t$ are updated with $N$ iterations (eq.\eqref{eq:step2} and \eqref{eq:step1}), while in the online method, \eqref{eq:qonline} and \eqref{eq:Qonline} only run once. This ensures that the computation is efficient. Secondly, for the offline method, \eqref{eq:step2}, $q_t(S_{t+1}, a)$ is calculated by averaging among $\mathcal{N}(S_{t+1}, a)$, while \eqref{eq:qonline} changes it to $\mathcal{N}_t(S_{t+1}, a)$ for the online method. Compared with $\mathcal{N}(S_{t+1}, a)$, $\mathcal{N}_t(S_{t+1}, a)$ does not consider steps $j\geq t$ and $j<\beta t$. In online reinforcement learning, we can not observe the trajectory after the current time step, thus all indices $j$ larger than $t$ are not included in \eqref{eq:qonline}, thus steps with $j> t$ can not be used. As discussed earlier, we remove samples with $j<\beta t$ to control the negative impact caused by inaccurate estimation at early steps. Therefore, in \eqref{eq:qonline}, we only use $Q(j)$ with $\beta t\leq j< t$ to calculate the value of $q$ using nearest neighbors. 

The procedure for online $Q$ learning is shown in Algorithm \ref{alg:Qonline}.
\begin{algorithm}[h]
	\caption{Nearest Neighbor $Q$ Learning: Online Method}\label{alg:Qonline}
	\begin{algorithmic}
		\STATE Input: MDP dynamics $(\mathcal{S}, \mathcal{A}, p, r, \gamma)$, with unknown $p$ and $r$, policy $\pi$, and parameter $k(t)$, $\beta$, and initial state $S_1$
		\STATE Initialize $q(S_0,a) = 0$ for all $a\in \mathcal{A}$
		\FOR{$t=1,\ldots, T$}
		\STATE Take action $A_t$ according to $\pi(\cdot|S_t)$
		\STATE Receive $R_t$ and $S_{t+1}$	
		\FOR{$a\in \mathcal{A}$}
		\STATE Calculate $q_t(S_{t+1}, a)$ according to \eqref{eq:qonline}
		\ENDFOR 
		\STATE Calculate $Q(t)$ according to \eqref{eq:Qonline}
		\IF{Received a query request at $(s, a)$}
		\STATE Output $q_t(s,a)$ according to \eqref{eq:output}		
		\ENDIF		
		\ENDFOR
	\end{algorithmic}
\end{algorithm}
Unlike the offline method, the computation can not rely on kd-trees since data become dynamic, with new samples coming in each iteration, while old samples may be removed. Hence, we use some new methods, such as R-tree \cite{abbasifard2014survey}. It turns out that the time complexity is $O(d\ln t)$ for each time step, and the overall time complexity after $T$ steps is $O(Td\ln T)$. 
%Similar to the offline method, we can construct $|\mathcal{A}|$ kd-trees for the computation of online method. However, now samples with $j<\beta t$ need to be removed from these trees. It turns out that the time complexity is $O(\ln t)$ for each time step, and the overall time complexity after $T$ steps is $O(T\ln T)$. 

Now we provide a theoretical analysis of the online method. For the offline method, we have analyzed the performance after infinite iterations, such that $Q$ and $q$ satisfy the relation \eqref{eq:limit1} and \eqref{eq:limit2}. However, for the online method, $Q(t)$ and $q(S_{t+1}, a)$ are calculated only once. Therefore, we need to use different analysis techniques. The result is shown in Theorem \ref{thm:online}.

\begin{thm}\label{thm:online}
	Under Assumptions \ref{ass:main} and \ref{ass:tail}, if $k(t)=\lceil ((1-\beta) t)^{2/(d+2)}\rceil$, $\beta = \gamma^\frac{d+2}{d+3}$, then there exists a constant $C_{on}$, such that the supremum error of Algorithm \ref{alg:Qonline} is bounded by
	\begin{eqnarray}
		\text{P}\left(\|q_T-Q^*\|_\infty >C_{on}(1-\gamma)^{-\frac{d+3}{d+2}}T^{-\frac{1}{d+2}}\ln T\right) = o(1).
		\label{eq:result2}
	\end{eqnarray}
\end{thm}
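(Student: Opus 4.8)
Since the online estimate $Q(t)$ is formed in a single pass rather than iterated to a Bellman fixed point (as in the offline method), the plan is to control the uniform error $u_t := \sup_{s,a}\,|q_t(s,a)-Q^*(s,a)|$ through a \emph{time recursion}, bounding the error at step $t$ in terms of the errors over the past window $\{j:\beta t\le j<t\}$; the value $\beta=\gamma^{(d+2)/(d+3)}$ will turn out to be precisely the one that optimizes the resulting rate. First I would record the two structural facts about $Q^*$: from the Bellman equation and Assumption~\ref{ass:main}(a),(c), $\|Q^*\|_\infty\le R/(1-\gamma)$ and $Q^*(\cdot,a)$ is $L_Q$-Lipschitz in $s$ with $L_Q=L_r+\gamma C_pR/(1-\gamma)=O(1/(1-\gamma))$, hence so is $V^*:=\max_a Q^*(\cdot,a)$. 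Second, I would establish a uniform high-probability bound on the $k(t)$-nearest-neighbor radius $\rho_t(s,a):=\max_{j\in\mathcal N_t(s,a)}\|s-S_j\|$: using the ergodicity/density lower bound together with the support-regularity conditions (radius-$r$ balls have $\mathcal S$-volume $\gtrsim r^d$, and $\mathcal S$ has polynomial covering number), a Chernoff bound on the number of window samples with action $a$ inside $B(s,r)$ gives, with probability $1-o(1)$ simultaneously over all $t\le T$, all $a$, and all $s$, that $\rho_t(s,a)\lesssim\big(k(t)/((1-\beta)t)\big)^{1/d}\asymp\big((1-\beta)t\big)^{-1/(d+2)}$, since $k(t)\asymp((1-\beta)t)^{2/(d+2)}$.

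The heart of the argument is the one-step decomposition. Splitting $q_t(s,a)-Q^*(s,a)$ into $\tfrac1{k(t)}\sum_{j\in\mathcal N_t(s,a)}(Q(j)-Q^*(S_j,a))$ plus $\tfrac1{k(t)}\sum_{j\in\mathcal N_t(s,a)}(Q^*(S_j,a)-Q^*(s,a))$ — the last sum being $\le L_Q\rho_t(s,a)$ — and then substituting $Q(j)=r(S_j,a)+W_j+\gamma V_j(S_{j+1})$ with $V_j:=\max_a q_j(\cdot,a)$ and the Bellman equation for $Q^*(S_j,a)$, one gets
\[
Q(j)-Q^*(S_j,a)=W_j+\gamma\big(V_j(S_{j+1})-V^*(S_{j+1})\big)+\gamma\big(V^*(S_{j+1})-\mathbb E[V^*(S')\mid S_j,a]\big).
\]
Because $\|V_j-V^*\|_\infty\le u_j$ and every neighbor index obeys $\beta t\le j<t$, averaging and taking the supremum over $(s,a)$ yields the recursion $u_t\le\gamma\max_{\beta t\le j<t}u_j+B_t$, where $B_t$ aggregates (i) the noise average $\tfrac1{k(t)}\sum W_j$, bounded by $O(\sigma\sqrt{\ln T/k(t)})$ via the conditional subgaussianity of Assumption~\ref{ass:main}(b); (ii) the bias $(L_Q+\gamma C_p\|V^*\|_\infty)\rho_t$; and (iii) the transition-fluctuation average $\tfrac{\gamma}{k(t)}\sum(V^*(S_{j+1})-\mathbb E[V^*(S')\mid S_j,a])$. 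All three must be made uniform over $(s,a)$ and $t$ by union bounds over $t\le T$, the finite action set, and a uniform-convergence argument over the class of balls (so that the dependence on $s$ is controlled through which window samples fall in balls around $s$); the data-dependence of the index sets $\mathcal N_t(s,a)$ inside the martingale-type sums (i) and (iii) is handled by conditioning on the state-action sequence for the reward residuals and by a careful filtration/decoupling argument for the transition fluctuations, together with a union bound over the $\mathrm{poly}(T)$ geometrically realizable neighbor sets.

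I expect the genuine obstacle to be the \emph{sharp} $1-\gamma$ accounting in $B_t$. The noise and reward-bias contributions are already $O(((1-\beta)t)^{-1/(d+2)}\ln T)$ with $\gamma$-free constants, which is all that is needed; but the pieces built from $V^*$ carry $\|V^*\|_\infty,L_Q=O(1/(1-\gamma))$, and — as the recursion below makes visible — propagating that factor would cost one extra power of $1/(1-\gamma)$ beyond the claimed rate. As in variance-reduced analyses of tabular $Q$-learning, the remedy is to control the transition-fluctuation term through the conditional variances $\mathrm{Var}(V^*(S')\mid\cdot)$ rather than the range $\mathrm{span}(V^*)$: the discounted sum of these variances over an effective horizon is $O(1/(1-\gamma)^2)$ rather than the naive $O(1/(1-\gamma)^3)$, so a Bernstein-type inequality, together with a similarly careful treatment of the transition bias, yields $B_t\le C_1\big((1-\beta)t\big)^{-1/(d+2)}\ln T$ with $C_1$ independent of $\gamma$. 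Coupling this variance accounting cleanly with the recursion — so that no spurious factor $1/(1-\gamma)$ reappears — is the technically heaviest step; the rest is bookkeeping.

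Finally, I would solve the recursion. Setting $v_t:=u_t\,t^{1/(d+2)}/\ln t$ and using the monotonicity of $t\mapsto t^{-1/(d+2)}$, the inequality becomes $v_t\le\kappa\max_{\beta t\le j<t}v_j+b$ with $\kappa:=\gamma\beta^{-1/(d+2)}$ and $b:=C_1(1-\beta)^{-1/(d+2)}$; a straightforward induction (using the trivial bound $u_t\le 2R/(1-\gamma)$ for the finitely many earliest $t$, whose influence is damped by powers of $\kappa$) gives $u_t\le\tfrac{b}{1-\kappa}\,t^{-1/(d+2)}\ln t$ for all $t$, provided $\kappa<1$, i.e.\ $\beta>\gamma^{d+2}$. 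To optimize the leading constant, minimize $(1-\beta)^{-1/(d+2)}/(1-\gamma\beta^{-1/(d+2)})$ over $\beta$; differentiating and simplifying reduces the first-order condition to $\beta^{(d+3)/(d+2)}=\gamma$, that is $\beta=\gamma^{(d+2)/(d+3)}$ — exactly the choice in the statement. For this $\beta$ one has $\kappa=\gamma\beta^{-1/(d+2)}=\beta$, hence $1-\kappa=1-\beta=1-\gamma^{(d+2)/(d+3)}\asymp 1-\gamma$ and $(1-\beta)^{-1/(d+2)}\asymp(1-\gamma)^{-1/(d+2)}$, so $\tfrac{b}{1-\kappa}=C_1(1-\beta)^{-(d+3)/(d+2)}\le C_{on}(1-\gamma)^{-(d+3)/(d+2)}$; taking $t=T$ gives \eqref{eq:result2} on the $1-o(1)$ event on which the nearest-neighbor-radius bound and the concentration estimates in $B_t$ all hold.
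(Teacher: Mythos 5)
Your proposal follows the paper's proof essentially step for step. Your uniform bound on the nearest-neighbor radius over the window $[\beta t,t)$, obtained from a Chernoff bound for the dependent indicator sequence plus a covering of $\mathcal{S}$, is the paper's Lemma \ref{lem:rho2}; your uniform martingale bound over the polynomially many geometrically realizable neighbor sets is Lemma \ref{lem:noise2}; your one-step decomposition of $Q(j)-Q^*(S_j,A_j)$ into the reward noise $W_j$, the transition fluctuation of $V^*$, and $\gamma\bigl(V_j-V^*\bigr)(S_{j+1})$ is exactly the paper's $U_j$ in \eqref{eq:uj} plus the recursion term; your recursion $u_t\le\gamma\max_{\beta t\le j<t}u_j+B_t$ is \eqref{eq:transition}; and your normalization $v_t=u_t\,t^{1/(d+2)}/\ln t$, the fixed point $b/(1-\kappa)$ with $\kappa=\gamma\beta^{-1/(d+2)}$, the base case handled by the crude bound $O\bigl(R/(1-\gamma)\bigr)$, and the optimization giving $\beta=\gamma^{(d+2)/(d+3)}$, $1-\kappa=1-\beta\asymp 1-\gamma$, reproduce the paper's induction with the constant $C(\gamma,\beta)$ of \eqref{eq:C}.

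The one place you diverge is the Bernstein/total-variance step you insert to make the constant in $B_t$ free of $\gamma$. The paper does not do this: it folds the transition fluctuation into $U_j$, bounds it with the single subgaussian parameter $\sigma_U=\sqrt{\sigma^2+\gamma^2Q_m^2/4}$ of \eqref{eq:sigmau}, uses the Lipschitz constant $L=L_r+\gamma C_pQ_m$ for the bias, and carries both as constants into the final bound. You are right that these quantities scale like $1/(1-\gamma)$ through $Q_m=R/(1-\gamma)$, so the concern you raise about the $(1-\gamma)$ accounting is a legitimate observation about the argument. But your proposed remedy is only sketched, is by your own admission the ``technically heaviest step,'' and is not obviously available in this setting: the total-variance telescoping from tabular $Q$-learning exploits a discounted sum along a single trajectory, whereas your recursion unrolls through maxima over windows and over data-dependent neighbor sets; and no variance argument can remove the $1/(1-\gamma)$ sitting in the deterministic bias term $L\rho_t$, since that comes from the Lipschitz constant of $Q^*$ itself rather than from any fluctuation. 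So, relative to the paper, your write-up is complete on everything except this step, which the paper simply does not attempt. If you believe the step is needed for the stated $(1-\gamma)^{-(d+3)/(d+2)}$ dependence you must actually carry it out (and explain how it helps the bias term); if you follow the paper's accounting you would instead keep $\sigma_U$ and $L$ inside $C_{on}$ exactly as the paper does.
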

\begin{proof}
	(Outline) Define the supremum error $\Delta_t = \norm{q_t-Q^*}_\infty$.  To bound $\Delta_t$, recall \eqref{eq:qonline}, in which $q_t$ is calculated by the average of $k$ nearest neighbor of $Q(j)$. $\Delta_t$ can then be obtained by bounding the estimation error of $Q(j)$. Moreover, from \eqref{eq:Qonline}, the error of $Q(j)$ also relies on the error of $q_j$, with $\beta t \leq j< t$. With these two conversions \eqref{eq:qonline} and \eqref{eq:Qonline}, $\Delta_t$ can be bounded using the error bounds of earlier steps $\Delta_j$, as well as a uniform bound on the noise after nearest neighbor averaging. This yields an inequality (\eqref{eq:transition} in Section~\ref{sec:onlinepf} in the appendix), which characterizes how the estimation error decays step by step. Using this inequality, we use mathematical induction to obtain a bound of $\Delta_t$.	Please see Appendix \ref{sec:onlinepf} for detailed proof.
\end{proof}

%Compared with \eqref{eq:result}, with online method, the convergence rate shown in \eqref{eq:result2} remains the same. Therefore, although the online learning task is inherently harder than the offline problem, the supremum error only grows by a constant factor.

From \eqref{eq:result2}, the sample complexity is bounded by
\begin{eqnarray}
	T=\tilde{O}\left(\frac{1}{\epsilon^{d+2}(1-\gamma)^{d+3}}\right).
	\label{eq:complexityonline}
\end{eqnarray}

%\section{Limitations}
%\textbf{Limitations:} We now briefly discuss the limitations of our approach. Although our method can also be used for unbounded state space, the convergence rate of the estimated $Q$ function to the ground truth is not guaranteed if we use the supremum error $\norm{q-Q}_\infty$ for this case. This is because there may exist some regions in which the samples are far away from each other, and we can not ensure that both bias and variance are controlled simultaneously. Moreover, the algorithm is not fully online.
%Finally, we briefly discuss the performance of our new method for unbounded state space. Although our method can be used for both bounded and unbounded state space, the convergence of the estimated $Q$ function to the ground truth is not guaranteed if we use the supremum error $\norm{q-Q}_\infty$. This is because there may exist some regions in which the samples are far away from each other, and we can not ensure that both bias and variance are controlled simultaneously.

We then generalize the analysis to the case with unbounded state support. The result is shown in Theorem \ref{thm:onlinetail}. The optimal parameters $k(t)$ and $\beta$ remain the same as the case with bounded support. 

\begin{thm}\label{thm:onlinetail}
	For online $Q$ learning, for small $1-\gamma$, let $k(t)=\lceil ((1-\beta) t)^{2/(d+2)}\rceil$, $\beta=\gamma^{(d+2)/(d+3)}$. Then under Assumptions \ref{ass:main} and \ref{ass:tail}, there exists a constant $C_{on}'$, such that
	\begin{eqnarray}
		\int \mathbb{E}\left[\max_a|q_T(s, a)-Q^*(s, a)|\right] f_\pi(s)ds\lesssim C_{on}' (1-\gamma)^{-\frac{d+3}{d+2}} T^{-\frac{1}{d+2}}\ln T.
		\label{eq:onlinetail}
	\end{eqnarray}
\end{thm}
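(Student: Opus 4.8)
\emph{Proof plan.} The strategy is to merge the generation‑by‑generation induction that drives Theorem~\ref{thm:online} with the bulk/tail splitting device that upgrades Theorem~\ref{thm:offline} to Theorem~\ref{thm:tail}. Since uniform convergence is hopeless on an unbounded domain, I would track the stationary‑weighted error $\mathcal{E}_t:=\int\mathbb{E}[\max_a|q_t(s,a)-Q^*(s,a)|]f_\pi(s)\,ds$ in place of $\|q_t-Q^*\|_\infty$ and show it obeys essentially the same recursion as $\Delta_t$ does in the proof of Theorem~\ref{thm:online}. The starting point is the same pointwise decomposition used there: for a query $(s,a)$ at time $T$, $q_T(s,a)-Q^*(s,a)$ breaks into (i) an averaged‑noise term over $\mathcal{N}_T(s,a)$, (ii) a bias term proportional to the $k(T)$‑nearest‑neighbor radius $r_T(s)$ of $s$ among $\{S_j:\beta T\le j<T,\,A_j=a\}$, arising from Lipschitzness of $r$ and of $p(\cdot\,|\,s,a)$ in $s$ (Assumption~\ref{ass:main}(a),(c)), and (iii) a $\gamma$‑discounted propagation term $\tfrac{1}{k(T)}\sum_{j\in\mathcal{N}_T(s,a)}(\max_{a'}q_j(S_{j+1},a')-\max_{a'}Q^*(S_{j+1},a'))$ together with the fluctuation of $\max_{a'}Q^*(S_{j+1},a')$ about its conditional mean.

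First I would control the neighbor radius by splitting the state space at a threshold $\tau_t\sim((1-\beta)t)^{-d/(d+2)}$ into a bulk region $\mathcal{G}_t:=\{s:g(s)\ge\tau_t\}$ and its complement. On $\mathcal{G}_t$ the radius stays below $D$, so Assumption~\ref{ass:tail}(f') applies, and a concentration argument for the neighbor counts (using $\pi(a|s)\ge\pi_0$ from Assumption~\ref{ass:main}(d) and the fact that the $m$‑step marginal of each $S_j$ is lower bounded by $g$, which here plays the role Assumption~\ref{ass:bounded}(e) plays in the bounded case) gives $r_t(s)\lesssim((1-\beta)t)^{-1/(d+2)}g(s)^{-1/d}$ with high probability, simultaneously over all bulk query points after a covering/union bound. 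Integrating the bias over $\mathcal{G}_t$ against $f_\pi$ then replaces the uniform constant $c^{-1/d}$ of the bounded proof by $\int g(s)^{-1/d}f_\pi(s)\,ds$, which I would bound by $C_g$ using inequality \eqref{eq:tail1} on the last transition of the $m$‑step kernel together with $m$‑step stationarity of $f_\pi$. On $\mathcal{G}_t^c$ I would use only the a priori bound $\max_a|q_t(s,a)-Q^*(s,a)|\le\|q_t\|_\infty+\|Q^*\|_\infty\lesssim R/(1-\gamma)$ (rewards lie in $[0,R]$, and $q_t$ is a convex combination of rewards plus discounted future values up to negligible averaged noise), and bound the stationary mass $\int_{\mathcal{G}_t^c}f_\pi(s)\,ds$ by $C_g\tau_t^{1/d}$ via inequality \eqref{eq:tail2} and stationarity. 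With this choice of $\tau_t$ the bulk and tail contributions are of the same order as the single‑step bias/noise term in the bounded analysis of Theorem~\ref{thm:online}, namely $\lesssim((1-\beta)t)^{-1/(d+2)}$ up to the constants and logarithmic factors carried there.

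Next I would dispose of the propagation term. The key observation is that weighting by $f_\pi$ makes one Bellman step essentially free: since the neighbors obey $S_j\approx s$ and $A_j=a$, the law of $S_{j+1}$ is close to $p_\pi(\cdot\,|\,s)$ (up to the Lipschitz‑in‑$s$ correction already absorbed into the bias), and $\int f_\pi(s)\int p_\pi(s'|s)\varphi(s')\,ds'\,ds=\int f_\pi(s')\varphi(s')\,ds'$ for any nonnegative $\varphi$. Hence, after integration against $f_\pi$, the propagation term contributes $\gamma$ times an $f_\pi$‑weighted error at an earlier step $j\in[\beta t,t)$, giving $\mathcal{E}_t\lesssim B_t+\gamma\sup_{\beta t\le j<t}\mathcal{E}_j$ with $B_t$ the single‑step term above. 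Unrolling this over generations (time shrinking by the factor $\beta$ each generation), exactly as in the proof of Theorem~\ref{thm:online}, and using that $\beta=\gamma^{(d+2)/(d+3)}$ makes the effective ratio $\gamma\beta^{-1/(d+2)}=\beta<1$, the geometric series sums to $\tfrac{1}{1-\beta}\asymp\tfrac{1}{1-\gamma}$ and, together with the $(1-\beta)^{-1/(d+2)}\asymp(1-\gamma)^{-1/(d+2)}$ from the neighbor radius, reproduces the stated $(1-\gamma)^{-(d+3)/(d+2)}T^{-1/(d+2)}\ln T$ rate; the base‑case residual from the first few steps is $O(\gamma^{L}R/(1-\gamma))$ with $\beta^{L}T\asymp1$, which is lower order.

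The step I expect to be the main obstacle is the same one that already makes the bounded online analysis delicate, with an added twist: keeping the powers of $(1-\gamma)$ tight while the error re‑enters the Bellman recursion. The bias/noise terms carry a factor $\|\max_aQ^*(\cdot,a)\|_\infty\le R/(1-\gamma)$, the generation unrolling contributes $1/(1-\beta)\asymp1/(1-\gamma)$, and the neighbor radius contributes $(1-\gamma)^{-1/(d+2)}$, so all of these must be reconciled with the target exponent $(d+3)/(d+2)$ by exactly the bookkeeping (and the exact choices of $k(t)$ and $\beta$) that underlie Theorem~\ref{thm:online}. The genuinely new difficulty is that a bulk state can transition into the tail, so the tail‑integrated error couples back into the recursion rather than appearing only once at the final evaluation; verifying that this coupling does not inflate the exponent is where inequality \eqref{eq:tail2} (transitions into the tail are rare) and the stationarity identity do the essential work. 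A secondary technical point is the uniform concentration of neighbor counts and of averaged noise over an unbounded query space, which is handled precisely by invoking the uniform bound only on $\mathcal{G}_T$ (finite covering number) and treating $\mathcal{G}_T^c$ only in the $f_\pi$‑averaged sense.
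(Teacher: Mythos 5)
Your overall architecture matches the paper's: the pointwise decomposition into averaged noise, a Lipschitz bias proportional to the kNN radius, and a $\gamma$-discounted propagation term; the bulk/tail split of the state space via a threshold on $g$; the generation-by-generation recursion with the same $\beta=\gamma^{(d+2)/(d+3)}$; and the final exponent bookkeeping $(1-\beta)^{-1/(d+2)}\cdot(1-\beta)^{-1}\asymp(1-\gamma)^{-(d+3)/(d+2)}$ is correct. However, there is a genuine gap in the step you rely on to close the recursion: the claim that, after integrating against $f_\pi$, the propagation term contributes $\gamma\sup_{\beta t\le j<t}\mathcal{E}_j$ via the stationarity identity $\int f_\pi(s)\int p_\pi(s'|s)\varphi(s')\,ds'\,ds=\int f_\pi(s')\varphi(s')\,ds'$. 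The propagated error is $\max_a\frac{1}{k(t)}\sum_{j\in\mathcal{N}_t(s,a)}\max_{a'}|q_j(S_{j+1},a')-Q^*(S_{j+1},a')|$, so the one-step kernel that actually appears is $p(\cdot|S_j,a^*)$ for a data-dependent maximizing action $a^*$, not the policy-averaged kernel $p_\pi(\cdot|s)=\sum_a\pi(a|s)p(\cdot|s,a)$ whose invariant density is $f_\pi$. Repairing this with Assumption \ref{ass:main}(d) costs a factor $1/\pi_0$ (or $|\mathcal{A}|$) per application of the recursion; since the recursion is unrolled over $G\approx\ln T/(1-\beta)$ generations and the contraction factor $\gamma\beta^{-1/(d+2)}=\beta$ is already arbitrarily close to $1$ as $\gamma\to1$, any constant $\kappa>1$ multiplying the recursive term makes $\gamma\kappa\beta^{-1/(d+2)}>1$ and the geometric series diverges. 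There is also a secondary measurability issue: $q_j(\cdot,\cdot)$ and the locations $S_{j+1}$ are both functions of the trajectory, so exchanging the expectation with the evaluation at $S_{j+1}$ requires conditioning arguments that your sketch does not supply.

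The paper avoids this obstacle by a different bookkeeping device. It defines the \emph{centered} error $\Delta_t(s)=\max_a\bigl[|q_t(s,a)-Q^*(s,a)|-\bigl|\frac{1}{k(t)}\sum_{j\in\mathcal{N}_t(s,a)}U_j\bigr|-L\rho_t(s,a)\bigr]$ (equation \eqref{eq:deltatsdef}) and propagates $\Delta_t=\mathbb{E}[\sup_s\Delta_t(s)]$ rather than the $f_\pi$-weighted error. Taking the supremum over $s$ inside the expectation makes the recursive term trivially $\gamma\max_{\beta t\le j<t}\Delta_j$ with no kernel manipulation at all, and the point of subtracting the noise and radius terms is that the remaining driving terms in the recursion are expectations of next-state kNN radii, $\mathbb{E}[\max_{a'}\rho_j(S',a')\,|\,s,a]$, which Assumption \ref{ass:tail}(e') bounds \emph{uniformly} over the current pair $(s,a)$ (the analogue of Lemma \ref{lem:rhotail2}). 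The stationary-weighted integration is then performed only once, at the final step, to convert the pointwise bound $|q_T(s,a)-Q^*(s,a)|\le\Delta_T(s)+\bigl|\frac{1}{k(T)}\sum U_j\bigr|+L\rho_T(s,a)$ into the stated $L^1(f_\pi)$ bound. To make your plan work you would need to either adopt this centering device or find a way to propagate the $f_\pi$-weighted error through a max over actions without losing a multiplicative constant per generation; as written, the recursion you propose does not close.
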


The proof of Theorem \ref{thm:onlinetail} is shown in Appendix \ref{sec:onlinetail}. Similar to the offline $Q$ learning, due to a relatively large estimation error at the tail of state distribution, uniform convergence is impossible. Therefore, we bound the average estimation error weighted by the pdf of stationary distribution $f_\pi(s)$. Let the average error $\epsilon$ be the left hand side of \eqref{eq:onlinetail}, then the corresponding sample complexity is still bounded by \eqref{eq:complexityonline}. Therefore, the online method is also suitable for unbounded state spaces.

Finally, we compare the sample complexity \eqref{eq:complexityonline} with the result of the offline method \eqref{eq:complexity}. The dependence over $\epsilon$ remains the same.  As discussed earlier, after removing $\beta t$ steps, there are still $(1-\beta) t$ samples for calculating $q_t$ at time $t$. If $\beta$ is regarded as a constant, then the convergence of supremum estimation error with respect to $T$ remains the same. Therefore, the dependence of sample complexity over $\epsilon$ is not changed compared with the offline method. However, the dependence of sample complexity on $1-\gamma$ is worse than the offline one by a factor $1/(1-\gamma)$. Intuitively, this is because the online method removes some early samples. To be more precise, the offline method uses all steps $j=1,\ldots, T$ to estimate $Q^*(s, a)$ for each $s, a$, while the online method only uses from $\beta t$ to $t$. With optimal $\beta$, the online method only uses a $1-\gamma$ fraction of all samples on average, thus the overall sample complexity is $1/(1-\gamma)$ times larger than that of the offline method.
\section{Discussion}\label{sec:dis}

\subsection{Comparison with \cite{shah2018q}}
There are several major differences between our method and NNQL \cite{shah2018q}. NNQL divides the state space into many small regions with fixed bandwidth parameter $h$, and the estimated $Q(S_{t+1}, a)$ is averaged over all samples that fall in the same region with $S_{t+1}, a$. After each region is occupied by at least one sample, the counts of samples in all regions are reset to zero, which means that all existing samples are removed. We compare our method with NNQL in the following aspects.

\begin{itemize}
	\item Sample complexity. According to Corollary 1 of \cite{shah2018q}, the sample complexity of achieving $\epsilon$-accurate estimation of $Q^*$ is
	\begin{eqnarray}
		T = \tilde{O}\left(\frac{1}{\epsilon^{d+3} (1-\gamma)^{d+7}}\right).
		\label{eq:previous}
	\end{eqnarray}
	From \eqref{eq:result} and \eqref{eq:result2}, both our offline and online methods improve over \eqref{eq:previous}. The intuitive reason is that our offline method does not remove any samples, while the online method only removes steps earlier than $\beta t$ at time $t$ to reduce the influence of inaccurate $Q$ function estimation at early stages. Therefore, we use samples more efficiently.
	
	\item Computational complexity. With the increase of dimensionality, the number of regions of NNQL grows exponentially, which leads to a large computation cost. Instead, we use a direct nearest neighbor approach, and the computational cost only grows linearly with $d$.
	
	\item Suitability to unbounded support. Since our method does not rely on state space discretization, our method can be generalized to unbounded state spaces. If the tail is not too heavy (which is stated precisely in Assumption \ref{ass:tail}(e')), then the convergence rate of average estimation error remains the same as the case with bounded support.
\end{itemize}

\subsection{Comparison with the minimax lower bound}
%Denote $\mathcal{R}, \mathcal{P}$ as some sets of reward functions and transition kernels, respectively. Given state space $\mathcal{S}$, action space $\mathcal{A}$ and discount factor $\gamma$, the minimax lower bound of $Q$ function estimation is defined as follows:
%\begin{eqnarray}
%	R = \inf_{q}\sup_{r\in \mathcal{R}, p\in \mathcal{P}} \mathbb{E}\left[\norm{\hat{Q}-Q}_\infty\right],
%\end{eqnarray}
%which takes the infimum of all methods and supremum over all possible reward functions and transition kernels.

A simple way to obtain the minimax lower bound is to just let $p(s'|s,a)$ be the same for all $s,a$. Then the $Q$ learning problem is converted to nonparametric regression. According to \cite{stone1982optimal}, for any $\delta\in (0,1)$, there exists a function $f$ such that the $\ell_\infty$ estimation error is at least $\Omega\left((\ln T/T)^{1/(d+2)}\right)$. Therefore, for all estimator $\hat{Q}$ and for all $\delta\in(0,1)$, there exists an MDP problem such that
\begin{eqnarray}
	\text{P}\left(\norm{\hat{Q}-Q}_\infty \geq C\left(\frac{\ln T}{T}\right)^\frac{1}{2+d}\right)\geq \delta,
	\label{eq:mmx}
\end{eqnarray}
in which $C$ is a constant. From \eqref{eq:mmx}, the sample complexity of estimating $Q$ is at least $\Omega(1/\epsilon^{d+2})$. Therefore, compared with \eqref{eq:mmx}, both our offline and online methods are nearly minimax optimal in the dependence on $\epsilon$. It is not clear whether the sample complexity \eqref{eq:complexity} is also optimal in the dependence over $1/(1-\gamma)$, which is an interesting future work.

\section{Conclusion}\label{sec:conc}
In this paper, we have proposed two $Q$ learning methods for continuous state space based on $k$ nearest neighbor. One of them is offline, while the other is online. These methods can be used to estimate the optimal $Q$ function of MDPs. We have also conducted a theoretical analysis to bound the convergence rate of the estimated $Q$ function to the ground truth. The result shows that the sample complexity of both offline and online methods have optimal dependence of estimation error $\epsilon$. Compared with previous works, our new methods significantly improve the convergence rate, as we use training samples more efficiently. %Finally, numerical simulations show that the performance of our new methods agrees well with our theoretical analysis. 

\bibliographystyle{ieeetr}
\bibliography{QLearning}

\newpage

\appendices

\section{Auxiliary Lemmas}\label{sec:lemmas}
This section shows some lemmas that are used in the analysis of both offline and online nearest neighbor $Q$ learning methods. 

The first lemma is about the Lipschitz continuity of $Q^*$, which has been proved \cite{shah2018q}. We prove it again for completeness and consistency of notations.
\begin{lem}\label{lem:lipschitz}
	$Q^*$ is $L$-Lipschitz with respect to $s$, in which
	\begin{eqnarray}
		L=L_r+\gamma C_pQ_m,
	\end{eqnarray}
	with $Q_m:=\sup_{s, a} Q^*(s, a)$ being the maximum $Q^*$.
\end{lem}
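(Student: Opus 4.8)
The plan is to bound the difference $|Q^*(s,a) - Q^*(s',a)|$ directly from the Bellman equation \eqref{eq:bellman}, treating the reward term and the transition term separately. First I would write
\begin{eqnarray*}
	Q^*(s,a) - Q^*(s',a) = \bigl(r(s,a) - r(s',a)\bigr) + \gamma\left(\mathbb{E}\left[\max_{a'}Q^*(S',a')\,\big|\,s,a\right] - \mathbb{E}\left[\max_{a'}Q^*(S',a')\,\big|\,s',a\right]\right).
\end{eqnarray*}
The first bracket is bounded by $L_r\norm{s-s'}$ by Assumption \ref{ass:main}(a). For the second bracket, I would express both conditional expectations as integrals against the transition densities $p(\cdot|s,a)$ and $p(\cdot|s',a)$ and write the difference as $\gamma\int_\mathcal{S} \max_{a'}Q^*(y,a')\,\bigl(p(y|s,a)-p(y|s',a)\bigr)\,dy$.

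Next I would take absolute values inside the integral, use $|\max_{a'}Q^*(y,a')| \le Q_m$ (finiteness of $Q_m$ being guaranteed since $0\le r\le R$ implies $0\le Q^*\le R/(1-\gamma)$), and apply Assumption \ref{ass:main}(c), which gives $|p(y|s,a)-p(y|s',a)| \le L_p(y)\norm{s-s'}$. This yields a bound of $\gamma Q_m \norm{s-s'}\int_\mathcal{S} L_p(y)\,dy \le \gamma C_p Q_m \norm{s-s'}$ using the integrability condition $\int_\mathcal{S}L_p(y)\,dy\le C_p$. Combining the two pieces gives $|Q^*(s,a)-Q^*(s',a)| \le (L_r + \gamma C_p Q_m)\norm{s-s'}$, which is exactly the claimed Lipschitz constant $L$.

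One subtlety worth spelling out is the interchange of supremum over policies and the Bellman fixed-point characterization: I am using that $Q^*$ as defined in \eqref{eq:Qstar} is the unique solution of \eqref{eq:bellman}, which the paper has already asserted. A second minor point is that one should first argue $Q_m < \infty$ before the integral manipulation is meaningful; this follows from the boundedness of $r$ in Assumption \ref{ass:main}(a) and the geometric series $\sum_{t=0}^\infty \gamma^t R = R/(1-\gamma)$. I do not anticipate a genuine obstacle here — the only thing to be careful about is keeping the $\max_{a'}$ inside the integral (rather than pulling it out) so that the bound $|\max_{a'}Q^*(y,a')|\le Q_m$ applies pointwise in $y$, and making sure the Lipschitz estimate on $p$ is applied under the integral sign before bounding by $C_p$.
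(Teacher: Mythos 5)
Your proposal is correct and follows essentially the same route as the paper's proof: decompose via the Bellman equation, bound the reward difference by $L_r\norm{s-s'}$ using Assumption \ref{ass:main}(a), and bound the transition term by $\gamma Q_m\norm{s-s'}\int L_p(y)\,dy\leq \gamma C_p Q_m\norm{s-s'}$ using Assumption \ref{ass:main}(c). The extra care you take about keeping $\max_{a'}$ inside the integral and noting $Q_m\leq R/(1-\gamma)<\infty$ is consistent with (and slightly more explicit than) the paper's argument.
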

\begin{proof}
	Recall the Bellman equation
	\begin{eqnarray}
		Q^*(s,a) = r(s,a) + \gamma \mathbb{E}[\underset{a'}{\max} Q^*(s',a')|s,a].
	\end{eqnarray}
	Denote $Q_m=R/(1-\gamma)$. It can be easily shown that $Q^*(s,a)\leq Q_m$ for all $s\in \mathcal{S}$ and $a\in \mathcal{A}$.
	
	For any $s_1,s_2\in \mathcal{S}$, by Assumption \ref{ass:main} (a) and (c),
	\begin{eqnarray}
		|Q^*(s_2,a) - Q^*(s_1,a)|&\leq & |r(s_2, a) - r(s_1, a)|+\gamma\int (p(s'|s_2,a) - p(s'|s_1,a))\underset{a'}{\max}Q^*(s',a')ds'\nonumber\\
		&\leq & L_r\|s_2-s_1\|+\gamma \int L_p(s')\|s_2-s_1\|\underset{a'}{\max}Q^*(s',a')ds'\nonumber\\
		&\leq & (L_r+\gamma C_pQ_m)\|s_2-s_1\|.
	\end{eqnarray}
	The proof is complete.
\end{proof}

In order to obtain the concentration bounds of the number of steps falling in some fixed region, we prove an extension of Chernoff inequality for sequentially dependent data.
\begin{lem}\label{lem:chernoff}
	Denote $X_{1:i}=(X_1,\ldots, X_i)$ and $x_{1:i}=(x_1,\ldots, x_i)$. Suppose that $X_1\rightarrow \ldots \rightarrow X_n$ form a Markov chain, with $X_i$ be either $0$ or $1$, and $\text{P}(X_{i+1}|X_{1:i}=x_{1:i})\geq p$ for any values of $x_{1:i}$. Then for $k\leq np$, 
	\begin{eqnarray}
		\text{P}\left(\sum_{i=1}^n X_i<k\right) \leq e^{-np}\left(\frac{enp}{k}\right)^k.
	\end{eqnarray}
\end{lem}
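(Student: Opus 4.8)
The plan is to run the standard exponential-moment (Chernoff) argument, but for the \emph{lower} tail, so the auxiliary parameter enters with a negative sign, and to control the moment generating function by conditioning step by step along the chain. Note that the Markov structure itself is not really needed; all that matters is that, conditionally on the past, each $X_i$ is a $\{0,1\}$ variable whose probability of being $1$ is at least $p$.

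First, fix $\theta>0$. By Markov's inequality applied to $e^{-\theta\sum_i X_i}$,
\[
	\text{P}\left(\sum_{i=1}^n X_i<k\right) \le e^{\theta k}\,\mathbb{E}\left[e^{-\theta\sum_{i=1}^n X_i}\right].
\]
Next I would peel off the last coordinate: writing $q_n:=\text{P}(X_n=1\mid X_{1:n-1})\ge p$, since $X_n\in\{0,1\}$,
\[
	\mathbb{E}\left[e^{-\theta X_n}\mid X_{1:n-1}\right] = 1-q_n\bigl(1-e^{-\theta}\bigr) \le 1-p\bigl(1-e^{-\theta}\bigr) \le e^{-p(1-e^{-\theta})},
\]
where the first inequality uses that $q\mapsto 1-q(1-e^{-\theta})$ is decreasing (this is exactly where $1-e^{-\theta}>0$, i.e.\ the negative-exponent choice dictated by the lower tail, is used), and the second uses $1-x\le e^{-x}$. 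Crucially this bound is deterministic, so applying the tower property $n$ times — with the analogous base-case bound $\text{P}(X_1=1)\ge p$ — collapses the nested conditional expectations into a product:
\[
	\mathbb{E}\left[e^{-\theta\sum_{i=1}^n X_i}\right] \le e^{-np(1-e^{-\theta})}.
\]

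It then remains to optimize $\text{P}\left(\sum_i X_i<k\right)\le\exp\!\bigl(\theta k-np(1-e^{-\theta})\bigr)$ over $\theta>0$. Differentiating, the optimum is at $e^{-\theta}=k/(np)$, which is admissible precisely because $k\le np$ forces $\theta=\ln(np/k)\ge 0$. Substituting gives $\theta k=k\ln(np/k)$ and $np(1-e^{-\theta})=np-k$, hence
\[
	\text{P}\left(\sum_{i=1}^n X_i<k\right) \le \left(\frac{np}{k}\right)^{k}e^{k-np} = e^{-np}\left(\frac{enp}{k}\right)^{k},
\]
which is the claim; the boundary case $k=np$ and degenerate cases ($k\le 0$, $p=0$) are checked directly. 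I do not expect a genuine obstacle here: the only points requiring care are (i) using the negative exponent so that both the lower tail is captured and the map $q\mapsto 1-q(1-e^{-\theta})$ is monotone in $q$, which is what lets the uniform bound $q_n\ge p$ be substituted, and (ii) observing that the per-step conditional-MGF bound is past-independent, so the iteration needs nothing beyond the tower property.
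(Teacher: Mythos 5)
Your proof is correct and follows essentially the same route as the paper: a lower-tail Chernoff bound in which the conditional MGF bound $\mathbb{E}[e^{-\theta X_{i+1}}\mid X_{1:i}]\le pe^{-\theta}+1-p$ is uniform in the past and is collapsed by the tower property before optimizing over $\theta$. The only (harmless) difference is that you apply $1-x\le e^{-x}$ before optimizing, reaching the Poisson-type bound directly, whereas the paper optimizes the exact Bernoulli bound to get a KL-divergence exponent and then relaxes it via $D(q\|p)\ge p-q-q\ln(p/q)$.
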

\begin{proof}
	The proof just follows the standard proof of Chernoff inequality. The only difference is that the standard Chernoff inequality requires samples to be independent, while now we are analyzing sequentially dependent samples. From the condition $\text{P}(X_{i+1}=1|X_{1:i}=x_{1:i}) \geq p$, for all $\lambda>0$ and any values of $x_{1:i}$,
	\begin{eqnarray}
		\mathbb{E}\left[e^{-\lambda X_{i+1}}|X_{1:i}=x_{1:i}\right]\leq pe^{-\lambda}+1-p.
	\end{eqnarray}
	Therefore
	\begin{eqnarray}
		\mathbb{E}\left[e^{-\lambda \sum_{i=1}^n X_i}\right] &=& \mathbb{E}\left[\mathbb{E}\left[e^{-\lambda \sum_{i=1}^{n-1} X_i} e^{-\lambda X_n}|X_{1:n-1}\right] \right]\nonumber\\
		&\leq & \mathbb{E}\left[ e^{-\lambda \sum_{i=1}^{n-1} X_i} (pe^{-\lambda}+1-p)\right]\nonumber\\
		&\leq & \ldots \nonumber\\
		&\leq & (pe^{-\lambda}+1-p)^n.
	\end{eqnarray}
	Hence
	\begin{eqnarray}
		\text{P}\left(\sum_{i=1}^n X_i\leq k\right) &=& \text{P}\left(-\sum_{i=1}^n X_i\geq -k\right)\nonumber\\
		&=& \inf_\lambda \text{P}\left(e^{-\lambda\sum_{i=1}^n X_i}\geq e^{-\lambda k}\right)\nonumber\\
		&\leq & \inf_\lambda e^{\lambda k}  \mathbb{E}\left[e^{-\lambda \sum_{i=1}^n X_i}\right]\nonumber\\
		&=&\inf_\lambda e^{\lambda k}(pe^{-\lambda} +1-p)^n\nonumber\\
		&=&\exp\left[\inf_\lambda \left[\lambda k + n\ln (pe^{-\lambda}+1-p)\right]\right]\nonumber\\
		&\overset{(a)}{=} & \exp\left[k\ln \frac{p(n-k)}{k(1-p)} + n\ln \frac{n(1-p)}{n-k}\right]\nonumber\\
		&\overset{(b)}{=} & \exp\left[-nD\left(\frac{k}{n}||p\right)\right]\nonumber\\
		&\overset{(c)}{\leq} & e^{-np}\left(\frac{enp}{k}\right)^k.
	\end{eqnarray}
	In (a), we let $\lambda=\ln \frac{p(n-k)}{k(1-p)}$, which takes the minimum over the expression in the previous step. In (b), $D(q||p)=q\ln(q/p)+(1-q)\ln(1-q/(1-p))$ is the Kullback-Leibler (KL) divergence. (c) uses the inequality $D(q||p)\geq p-q-q\ln(p/q)$. The proof is complete.
\end{proof}

The next two lemmas, i.e. Lemma \ref{lem:noise} and Lemma \ref{lem:noise2} provide a uniform bound on the random estimation error for the offline and online $Q$ learning methods respectively.
\begin{lem}\label{lem:noise}
	Define
	\begin{eqnarray}
		U_j = W_j + \gamma \left[\underset{a}{\max}Q^*(S_{j+1}, a) - \mathbb{E}\left[\underset{a}{\max}Q^*(S',a)|S_j, A_j\right]\right],
		\label{eq:uj}
	\end{eqnarray}
	in which $S'$ is a random state generated via $p(\cdot|S_j, A_j)$, $S_{i+1}$ is the actual state at time $i+1$. Furthermore, define
	\begin{eqnarray}
		\sigma_U = \sqrt{\sigma^2+\frac{1}{4}\gamma^2Q_m^2},
		\label{eq:sigmau}
	\end{eqnarray}
	in which $Q_m=\sup_{s,a} Q^*(s,a)$ is the supremum $Q^*$, then for the offline $Q$ learning,
	\begin{eqnarray}
		\text{P}\left(\underset{s\in \mathcal{S}}{\cup}\underset{a\in \mathcal{A}}{\cup} \left\{\left|\frac{1}{k}\sum_{j\in \mathcal{N}(s,a)}U_j\right|>\frac{\sigma_U}{\sqrt{k}}\ln T\right\}\right)\leq dT^{2d}|\mathcal{A}|e^{-\frac{1}{2}\ln^2 T},
		\label{eq:noise1}
	\end{eqnarray}
	in which $\mathcal{N}(s,a)$ is the set of indices of $k$ nearest neighbors of $s$ among all states in the dataset with action $a$, i.e. $\{S_j|A_j= a\}$. 
\end{lem}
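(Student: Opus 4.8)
The plan is to control the average $\frac{1}{k}\sum_{j\in\mathcal{N}(s,a)}U_j$ uniformly over all query points $s$ and actions $a$ by combining a pointwise sub-Gaussian martingale bound with a union bound over a discretization. First I would verify that, conditional on the filtration $\mathcal{F}_{j} = (S_1,A_1,R_1,\ldots,S_j,A_j)$, each $U_j$ defined in \eqref{eq:uj} is mean-zero and sub-Gaussian with parameter $\sigma_U^2 = \sigma^2 + \tfrac14\gamma^2 Q_m^2$: the $W_j$ piece contributes $\sigma^2$ by Assumption~\ref{ass:main}(b), and the term $\gamma(\max_a Q^*(S_{j+1},a) - \mathbb{E}[\max_a Q^*(S',a)\mid S_j,A_j])$ is a bounded centered random variable taking values in an interval of length at most $\gamma Q_m$, hence sub-Gaussian with parameter $\tfrac14\gamma^2 Q_m^2$ by Hoeffding's lemma; since the two are conditionally independent given $\mathcal{F}_j$ (the reward noise $W_j$ is independent of $S_{j+1}$ given $S_j,A_j$), the sum has parameter $\sigma_U^2$. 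Crucially, $U_j$ is a martingale difference sequence with respect to $\mathcal{F}_{j+1}$, so even though the indices in $\mathcal{N}(s,a)$ depend on the whole trajectory, a careful Azuma–Hoeffding-type argument applies.

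The subtlety is that $\mathcal{N}(s,a)$ is data-dependent, so we cannot directly apply a concentration inequality to a fixed sum. The standard fix I would use: for a \emph{fixed} set of $k$ indices the sub-Gaussian tail gives $\text{P}(|\frac1k\sum U_j| > \tfrac{\sigma_U}{\sqrt k}\ln T) \le 2e^{-\frac12\ln^2 T}$; then take a union bound over all $\binom{T}{k}$ possible choices of the neighbor set. This is too crude. Instead, I would observe that the neighbor set $\mathcal{N}(s,a)$ is determined by $s$ only through which $k$ of the $S_j$'s (with $A_j=a$) are closest, so as $s$ ranges over $\mathcal{S}$ the set $\mathcal{N}(s,a)$ takes at most $\text{poly}(T)$ distinct values — in fact, the arrangement of $T$ points in $\mathbb{R}^d$ induces at most $O(T^{d+1})$ distinct ``$k$-nearest-neighbor'' cells (an order-$k$ Voronoi argument), and per coordinate direction this is $O(T^d)$, giving the factor $dT^{2d}$ after accounting for both the cell count and the fact that the bound must hold as the query point moves. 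Then a union bound over these $\le dT^{2d}$ index sets and over $|\mathcal{A}|$ actions, each contributing $2e^{-\frac12\ln^2 T}$ (absorbing the factor $2$ into the exponent for large $T$), yields the stated bound $dT^{2d}|\mathcal{A}|e^{-\frac12\ln^2 T}$.

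More carefully, to handle the martingale structure I would fix the combinatorial type of the neighbor set and apply the sub-Gaussian concentration to $\sum_{j\in\mathcal{N}} U_j$ via the conditional MGF bound: $\mathbb{E}[e^{\lambda\sum_{j\in\mathcal{N}}U_j}] \le e^{\frac12\lambda^2 k\sigma_U^2}$ by peeling off one term at a time using the tower property (exactly as in the proof of Lemma~\ref{lem:chernoff}), provided the event ``$j\in\mathcal{N}$'' is $\mathcal{F}_T$-measurable but the increment bound on the MGF only uses $\mathcal{F}_{j}$-conditioning — here one must be slightly careful and either condition on the final neighbor set (which is legitimate since for a fixed target set $I$, $\{j\in I\}$ is deterministic) or enlarge the filtration; I would take the former route, which is clean. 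Then Chernoff optimization with $\lambda = \sqrt{k}\ln T/(\sigma_U\sqrt k)\cdot$(appropriate scaling) gives the per-set tail $e^{-\frac12\ln^2 T}$. The main obstacle I anticipate is making the counting of distinct neighbor sets rigorous and matching it to the factor $dT^{2d}$ — i.e., justifying that the uniform-over-$s$ supremum reduces to a union bound over a polynomial number of events of the right degree; everything else is routine sub-Gaussian martingale bookkeeping.
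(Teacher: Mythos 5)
Your proposal is correct and follows essentially the same route as the paper's proof: establish that each $U_j$ is conditionally sub-Gaussian with parameter $\sigma_U^2$, bound the MGF of $\sum_{j\in I}U_j$ for a fixed index set $I$ by peeling with the tower property, and then union over the polynomially many distinct realizations of $\mathcal{N}(s,a)$ as $s$ ranges over $\mathcal{S}$. The only difference is cosmetic: where you gesture at an order-$k$ Voronoi cell count, the paper makes the counting precise by observing that the $T(T-1)/2$ bisecting hyperplanes of pairs $(S_i,S_j)$ partition $\mathbb{R}^d$ into at most $\sum_{j=0}^{d}\binom{N_p}{j}\leq dT^{2d}$ regions on each of which the $k$-nearest-neighbor set is constant, which is exactly the same geometric object and yields the stated factor.
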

\begin{proof}
	The proof uses some ideas from the proof of Lemma 3 in \cite{jiang2019non} and \cite{zhao2024robust}.
	
	In \eqref{eq:uj}, $W_i$ is subgaussian with parameter $\sigma^2$. For the second term in \eqref{eq:uj}, since $Q^*$ is bounded by $R/(1-\gamma)$, conditional on previous state, $\underset{a}{\max}Q^*(S_{j+1}, a) - \mathbb{E}\left[\underset{a}{\max}Q^*(S',a)|S_j, A_j\right]$ is subgaussian with parameter $V_m^2/4$, i.e.
	\begin{eqnarray}
		\mathbb{E}[e^{\lambda U_j}|S_1,A_1,R_1,\ldots, S_{i-1}, A_{i-1}, R_{i-1}, S_i]\leq  \exp\left[\frac{1}{2}\lambda^2\left(\sigma^2+\frac{1}{4}\gamma^2 V_m^2\right)\right]
		=e^{\frac{1}{2}\lambda^2\sigma_U^2},
		\label{eq:umgf}
	\end{eqnarray}
	in which the last step comes from \eqref{eq:sigmau}. Based on \eqref{eq:umgf}, for any fixed set $I\subset \{1,\ldots, T\}$ with $|I|=k$,
	\begin{eqnarray}
		\mathbb{E}\left[\exp\left(\lambda \sum_{j\in I}U_j\right)\right]\leq \exp\left[\frac{k}{2}\lambda^2\sigma_U^2\right],
	\end{eqnarray}
	and
	\begin{eqnarray}
		\text{P}\left(\frac{1}{k}\sum_{j\in I}U_j>t\right)\leq \exp\left[-\frac{kt^2}{2\sigma_U^2}\right].
	\end{eqnarray}
	
	We need to obtain a union bound of $(1/k)\sum_{j\in \mathcal{N}(s,a)} U_j$ that holds with high probability, for all possible sets $\mathcal{N}(s,a)$. Therefore, we need to provide an upper bound of the number of possible datasets $\mathcal{N}(s,a)$. Let $A_{ij}$ be $d-1$ dimensional hyperplane that bisects $S_i$, $S_j$, $0\leq i,j\leq T-1$. The number of planes is at most $N_p=T(T-1)/2$. These hyperplanes divide the state space $\mathcal{S}$ into $N_r$ regions, $N_r$ can be bounded by
	\begin{eqnarray}
		N_r=\sum_{j=0}^d \binom{N_p}{j}\leq dN_p^d\leq dT^{2d}.
	\end{eqnarray} 
	For all $s$ within a region, the $k$ nearest neighbors should be the same. Hence
	\begin{eqnarray}
		|\{\mathcal{N}(s,a)|s\in \mathcal{S}, a\in \mathcal{A} \}|\leq dT^{2d} |\mathcal{A}|.
		\label{eq:nsets}
	\end{eqnarray}
	 
	Combining with \eqref{eq:nsets}, and taking union for all possible sets $\mathcal{N}_t(s,a)$, as well as all $t$, we have
	\begin{eqnarray}
		\text{P}\left(\underset{s\in \mathcal{S}}{\cup}\underset{a\in \mathcal{A}}{\cup} \left\{\left|\frac{1}{k}\sum_{j\in \mathcal{N}(s,a)}U_j\right|>u                  \right\}\right)\leq dT^{2d}|\mathcal{A}|e^{-\frac{ku^2}{2\sigma_U^2}}.
		\label{eq:noise1t}
	\end{eqnarray}	
	Let $u=\sigma_U\ln T/\sqrt{k}$, the proof of \eqref{eq:noise1} is complete.
\end{proof}
\begin{lem}\label{lem:noise2}
	For the online method,
	\begin{eqnarray}
		\text{P}\left(\underset{s\in \mathcal{S}}{\cup}\underset{a\in \mathcal{A}}{\cup}\underset{t\leq T}{\cup} \left\{\left|\frac{1}{k(t)}\sum_{j\in \mathcal{N}_t(s,a)}U_j\right|>\frac{\sigma_U}{\sqrt{k(t)}}\ln T\right\}\right)\leq d(1-\beta)^{2d}T^{2d+1}|\mathcal{A}|e^{-\frac{1}{2}\ln^2 T},
		\label{eq:noise2}
	\end{eqnarray}
	in which $\mathcal{N}_t(s,a)$ is the set of $k(t)$ nearest neighbors of $s$ among $\{S_j|\beta\leq t, A_j=a\}$.	
\end{lem}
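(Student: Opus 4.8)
The plan is to mirror the proof of Lemma \ref{lem:noise}, upgrading the union bound to account for the extra union over time steps $t \le T$ and for the fact that $\mathcal{N}_t(s,a)$ draws neighbors only from the sliding window $\{S_j \mid \beta t \le j < t,\ A_j = a\}$. The subgaussian tail estimate for a fixed index set carries over verbatim: since $\mathbb{E}[e^{\lambda U_j}\mid \mathcal{F}_{j-1}] \le e^{\lambda^2 \sigma_U^2/2}$ (the martingale-difference structure of $U_j$ from \eqref{eq:uj}, established in the proof of Lemma \ref{lem:noise}), for any fixed $I \subset \{1,\dots,T\}$ with $|I| = m$ we have $\text{P}\!\left(\left|\frac1m \sum_{j\in I} U_j\right| > u\right) \le 2\exp(-mu^2/(2\sigma_U^2))$. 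The only new ingredient is counting the number of distinct sets $\mathcal{N}_t(s,a)$ that can arise as $s$ ranges over $\mathcal{S}$, $a$ over $\mathcal{A}$, and $t$ over $\{1,\dots,T\}$.

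First I would fix $t$ and $a$, and bound the number of distinct $\mathcal{N}_t(s,a)$ over $s \in \mathcal{S}$. The window $\{S_j \mid \beta t \le j < t,\ A_j = a\}$ contains at most $(1-\beta)t$ points. As in Lemma \ref{lem:noise}, the perpendicular bisector hyperplanes of all pairs of these points partition $\mathbb{R}^d$ into regions on which the identity of the $k(t)$ nearest neighbors is constant; with $M = (1-\beta)t$ points there are at most $M(M-1)/2$ such hyperplanes, hence at most $d\left(M(M-1)/2\right)^d \le d M^{2d} \le d((1-\beta)t)^{2d} \le d(1-\beta)^{2d}T^{2d}$ regions. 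Thus for each fixed $(t,a)$ there are at most $d(1-\beta)^{2d}T^{2d}$ possible sets $\mathcal{N}_t(s,a)$, and taking the union over $a \in \mathcal{A}$ and $t \le T$ gives at most $d(1-\beta)^{2d}T^{2d+1}|\mathcal{A}|$ distinct sets in total.

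Then I would apply the subgaussian tail bound to each of these sets with $m = k(t)$ and $u = \sigma_U \ln T / \sqrt{k(t)}$, so that each event has probability at most $2 e^{-\frac12 \ln^2 T}$ (or $e^{-\frac12\ln^2 T}$ if the one-sided form is used, matching the statement), and take a union bound over the $d(1-\beta)^{2d}T^{2d+1}|\mathcal{A}|$ sets to obtain \eqref{eq:noise2}. The main obstacle — really the only nonroutine point — is making the counting argument rigorous when the window size $(1-\beta)t$ is not an integer and when $k(t)$ may exceed the window size for small $t$; in the latter degenerate case $\mathcal{N}_t(s,a)$ is the whole window and the bound is trivially true, so one restricts attention to $t$ large enough that $k(t) \le (1-\beta)t - 1$, which holds for all $t$ once $T$ is large since $k(t) = \lceil((1-\beta)t)^{2/(d+2)}\rceil = o(t)$. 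Everything else is a direct transcription of the proof of Lemma \ref{lem:noise}.
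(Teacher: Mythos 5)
Your proposal is correct and follows essentially the same route as the paper: the same martingale/subgaussian tail bound for a fixed index set, the same bisector-hyperplane counting of the possible neighbor sets within the window $\{S_j \mid \beta t \le j < t,\ A_j = a\}$ giving at most $d(1-\beta)^{2d}T^{2d}$ sets per $(t,a)$, and the same union bound over $a$ and $t$ with $u = \sigma_U \ln T/\sqrt{k(t)}$. Your side remarks (the one-sided versus two-sided factor of $2$, and the degenerate case where $k(t)$ exceeds the window size) are reasonable housekeeping but do not change the argument.
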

\begin{proof}
	The proof of Lemma \ref{lem:noise2} is only slightly different from the proof of Lemma \ref{lem:noise}. We still let $A_{ij}$ be $d-1$ dimensional hyperplane that bisects $S_i$, $S_j$, but now the range of $i$, $j$ becomes $\beta t\leq i,j< t$. The number of planes is at most $N_p=N(N-1)/2$, in which $N\leq (1-\beta) t$. Then the number of regions $N_r$ becomes
	\begin{eqnarray}
		N_r=\sum_{j=0}^d \binom{N_p}{j}\leq dN_p^d\leq dN^{2d}\leq d(1-\beta)^{2d} t^{2d}.
	\end{eqnarray} 
	For all $s$ within a region, the $k$ nearest neighbors should be the same. Hence
	\begin{eqnarray}
		|\{\mathcal{N}_t(s,a)|s\in \mathcal{S}, a\in \mathcal{A} \}|\leq d(1-\beta)^{2d} t^{2d} |\mathcal{A}|.
		\label{eq:nsets2}
	\end{eqnarray}
	Compared with \eqref{eq:nsets}, there is an additional $(1-\beta)^{2d}$ factor. Other steps are the same as the proof of \eqref{eq:noise1}. The result is
	\begin{eqnarray}
		\text{P}\left(\cup_{s\in \mathcal{S}} \cup_{a\in \mathcal{A}} \left\{\left|\frac{1}{k(t)}\sum_{j\in \mathcal{N}_t(s,a)} U_j\right|> u \right\}\right) \leq d(1-\beta)^{2d} T^{2d} |\mathcal{A}| e^{-\frac{k(t) u^2}{2\sigma_U^2}}.
		\label{eq:onlineub}
	\end{eqnarray}
	Let $u=\sigma_U\ln T/\sqrt{k(t)}$, and take union bound over $t=1,\ldots, T$, \eqref{eq:onlineub} becomes
	\begin{eqnarray}
		\text{P}\left(\underset{s\in \mathcal{S}}{\cup}\underset{a\in \mathcal{A}}{\cup}\underset{t\leq T}{\cup} \left\{\left|\frac{1}{k(t)}\sum_{j\in \mathcal{N}_t(s,a)}U_j\right|>\frac{\sigma_U}{\sqrt{k(t)}}\ln T\right\}\right)\leq d(1-\beta)^{2d}T^{2d+1}|\mathcal{A}|e^{-\frac{1}{2}\ln^2 T}.
	\end{eqnarray}
\end{proof}

The next two lemmas, i.e. Lemma \ref{lem:rho} and Lemma \ref{lem:rho2} bound the $k$ nearest neighbor distances for the offline and online $Q$ learning methods respectively.
\begin{lem}\label{lem:rho}
	Define
	\begin{eqnarray}
		\rho_0(s,a)&=&\underset{j\in \mathcal{N}(s,a)}{\max}\norm{S_j-s},
		\label{eq:rho0}\\
		r_0&=&\left(\frac{3km}{\pi_0c\alpha v_d T}\right)^\frac{1}{d},	
		\label{eq:r0}		
	\end{eqnarray}
	in which $m, \pi_0, c,\alpha$ are constants in Assumptions \ref{ass:main} and \ref{ass:bounded}. Then for the offline method, if $T\geq 3m$, then
	\begin{eqnarray}
		\text{P}\left(\underset{s\in \mathcal{S}}{\cup}\underset{a\in \mathcal{A}}{\cup}\{\rho_0(s,a)>2r_0 \} \right)\leq  \left(\frac{\pi_0c\alpha v_dC_ST}{2km}+1\right)|\mathcal{A}|e^{-(1-\ln 2)k}.	
		\label{eq:rho1}
	\end{eqnarray}
\end{lem}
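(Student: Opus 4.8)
The plan is to reduce \eqref{eq:rho1} to a covering argument combined with a Chernoff-type bound for sequentially dependent indicators. First, note that $\rho_0(s,a)>2r_0$ is equivalent to the statement that the ball $B(s,2r_0)$ contains fewer than $k$ of the states $S_j$, $j\le T$, with $A_j=a$. By Assumption \ref{ass:bounded}(g) (and since $r_0\to 0$, so $r_0\le D$ for $T$ large), cover $\mathcal{S}$ by $n_c\le C_S/r_0^d+1$ balls $B(u_1,r_0),\ldots,B(u_{n_c},r_0)$ with centers $u_i\in\mathcal{S}$. Writing $N_i(a)$ for the number of indices $j\le T$ with $S_j\in B(u_i,r_0)\cap\mathcal{S}$ and $A_j=a$, I claim that if $N_i(a)\ge k$ for every $i$ then $\rho_0(s,a)\le 2r_0$ for every $s\in\mathcal{S}$: picking $i$ with $\norm{s-u_i}\le r_0$ gives $B(u_i,r_0)\subseteq B(s,2r_0)$, so $B(s,2r_0)$ inherits at least $k$ states with action $a$. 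Hence
\[
	\underset{s\in\mathcal{S}}{\cup}\underset{a\in\mathcal{A}}{\cup}\{\rho_0(s,a)>2r_0\}\ \subseteq\ \underset{i\le n_c}{\cup}\underset{a\in\mathcal{A}}{\cup}\{N_i(a)<k\},
\]
and it remains to bound $\text{P}(N_i(a)<k)$ for fixed $i,a$ and then take a union bound over the at most $(C_S/r_0^d+1)|\mathcal{A}|$ such events.

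For the bound on $\text{P}(N_i(a)<k)$, fix a center $u=u_i$ and an action $a$, and restrict attention to the sparse subsequence $S_m,S_{2m},\ldots,S_{nm}$ with $n=\lfloor T/m\rfloor$, setting $X_\ell=\mathbf{1}\{S_{\ell m}\in B(u,r_0)\cap\mathcal{S},\ A_{\ell m}=a\}$; then $N_i(a)\ge\sum_{\ell=1}^n X_\ell$. Since $(S_t,A_t)$ is a Markov chain, conditioning on the history up to time $(\ell-1)m$ and applying Assumption \ref{ass:bounded}(e) ($p_\pi^m\ge c$), Assumption \ref{ass:bounded}(f) ($V(B(u,r_0)\cap\mathcal{S})\ge\alpha v_d r_0^d$, valid since $r_0\le D$), and Assumption \ref{ass:main}(d) ($\pi(a|\cdot)\ge\pi_0$) yields $\text{P}(X_{\ell+1}=1\mid X_{1:\ell})\ge\pi_0 c\,\alpha v_d r_0^d=:p$, which is exactly the hypothesis needed for Lemma \ref{lem:chernoff}. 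By the definition \eqref{eq:r0} of $r_0$ we have $p=3km/T$, and since $T\ge 3m$, $np\ge(T/m-1)(3km/T)\ge 2k\ge k$. Lemma \ref{lem:chernoff} then gives $\text{P}(\sum_\ell X_\ell<k)\le e^{-np}(enp/k)^k$; since $x\mapsto e^{-x}(ex/k)^k$ is nonincreasing on $[k,\infty)$, this is at most its value at $x=2k$, namely $e^{-2k}(2e)^k=e^{-(1-\ln 2)k}$.

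Putting the two steps together and using $C_S/r_0^d=\pi_0 c\alpha v_d C_S T/(3km)\le\pi_0 c\alpha v_d C_S T/(2km)$ gives exactly \eqref{eq:rho1}. I expect the main obstacle to be the constant bookkeeping that produces the clean exponent $(1-\ln 2)k$: this is where the hypothesis $T\ge 3m$ (forcing $np\ge 2k$) and the monotonicity of $e^{-x}(ex/k)^k$ on $[k,\infty)$ are essential, and also where one must be careful that the covering balls can be taken with centers in $\mathcal{S}$ (so that Assumption \ref{ass:bounded}(f) applies to each $B(u_i,r_0)$) without paying an extra dimension-dependent factor. The remaining manipulations are routine.
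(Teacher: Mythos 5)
Your proposal is correct and follows essentially the same route as the paper's proof: a covering of $\mathcal{S}$ by radius-$r_0$ balls, the triangle-inequality step converting $r_0$ at the centers into $2r_0$ at arbitrary $s$, and the dependent-data Chernoff bound (Lemma \ref{lem:chernoff}) applied to the subsampled indicators $X_\ell=\mathbf{1}\{S_{\ell m}\in B(u,r_0),A_{\ell m}=a\}$ with $p=\pi_0 c\alpha v_d r_0^d=3km/T$, using $T\ge 3m$ to get $np\ge 2k$ and hence the exponent $e^{-(1-\ln 2)k}$. The only cosmetic difference is that you phrase the covering step in terms of counts $N_i(a)$ in the covering balls rather than the kNN radii $\rho_0(u_i,a)$ at the covering centers, which is the same statement.
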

\begin{proof}
	Define
	\begin{eqnarray}
		n(s,a, r)=\sum_{t=1}^T \mathbf{1}\left(\norm{S_t-s}\leq r, A_t=a\right).
		\label{eq:ndf}
	\end{eqnarray}
	Then
	\begin{eqnarray}
		\text{P}(\rho_0(s, a)>r_0)\leq \text{P}(n(s, a, r_0)<k).
		\label{eq:conv}
	\end{eqnarray}
	It remains to bound $\text{P}(n(s, a, r_0)<k)$. According to Assumption \ref{ass:bounded}(e), for all $s$,
	
	\begin{eqnarray}
		\text{P}\left(\norm{S_{t+m}-s}\leq r_0|S_t, A_t\right) &\overset{(a)}{=} & \int_{B(s, r_0)} p_\pi^m(u|S_{t}, A_t)du\nonumber\\
		&\overset{(b)}{\geq} & cV(B(s, r_0)\cap \mathcal{S})\nonumber\\
		&\overset{(c)}{\geq} & c\alpha v_d r_0^d.
		\label{eq:smass}
	\end{eqnarray}
	For (a), recall Assumption \ref{ass:bounded}(e), $p_\pi^m$ is the $m$ step transition kernel. (b) holds since $p_\pi^m(y|s, a)\geq c$ always hold. (c) comes from Assumption \ref{ass:bounded}(f). Moreover, by Assumption \ref{ass:main}(d),
	\begin{eqnarray}
		\text{P}\left(\norm{S_{t+m}-s}\leq r_0, A_{t+m} = a|S_t, A_t\right) \geq \pi_0c\alpha v_dr_0^d=\frac{3km}{T}.
		\label{eq:plba}
	\end{eqnarray}
	Now we use Lemma \ref{lem:chernoff} to bound $\text{P}(n(s, a, r_0)<k)$. Let
	\begin{eqnarray}
		X_i=\mathbf{1}\left(\norm{S_{i\cdot m}-s}\leq r_0, A_{i\cdot m}= a\right),
	\end{eqnarray}
	for $i=1,\ldots, \lfloor T/m\rfloor$. Then the conditions in Lemma \ref{lem:chernoff} are satisfied with $p=3km/T$. Hence as long as $T\geq 3m$ holds,
	\begin{eqnarray}
		\text{P}(n(s, a, r_0)<k) &\leq & \text{P}\left(\sum_{i=1}^{\lfloor T/m\rfloor} X_i<k\right)\nonumber\\
		&\leq & e^{-\lfloor T/m\rfloor \frac{3km}{T}}\left(\frac{e\left\lfloor \frac{T}{m}\right\rfloor \frac{3km}{T}}{k}\right)^k\nonumber\\
		&\overset{(a)}{\leq} & e^{-2k} (2e)^k\nonumber\\
		&=& e^{-(1-\ln 2)k},
		\label{eq:smalln}
	\end{eqnarray}
	in which (a) holds because
	\begin{eqnarray}
		\left\lfloor \frac{T}{m}\right\rfloor \frac{3km}{T}\geq \left(\frac{T}{m}-1\right) \frac{3km}{T}=3k\left(1-\frac{m}{T}\right) \geq 2k.
	\end{eqnarray}
	From \eqref{eq:conv}, $\text{P}(\rho_0(s, a)>r_0) \leq e^{-(1-\ln 2)k}$.
	Now it remains to obtain a uniform upper bound over all $s\in \mathcal{S}$ and $a\in \mathcal{A}$. Find a $r_0$ covering of $\mathcal{S}$: $G_1,\ldots, G_{n_c}$, such that for all $s\in \mathcal{S}$, there exists $i$ such that $\|s-G_i\|\leq r_0$. From Assumption \ref{ass:main}(g),
	\begin{eqnarray}
		n_c\leq \frac{C_S}{r_0^d} + 1=\frac{\pi_0c\alpha v_dC_ST}{2km}+1.
	\end{eqnarray}
	Then
	\begin{eqnarray}
		\text{P}\left(\underset{s\in \mathcal{S}}{\cup}\underset{a\in \mathcal{A}}{\cup}\{\rho_0(s,a)>2r_0 \} \right)&\leq& \text{P}\left(\exists i\in [n_c],\rho_0(S_j, a)>r_0\right)\nonumber\\
		&\leq & n_c|\mathcal{A}|e^{-(1-\ln 2)k}.
	\end{eqnarray}	
\end{proof}
\begin{lem}\label{lem:rho2}
	Define
	\begin{eqnarray}
		\rho_t(s,a)&=&\underset{j\in \mathcal{N}_t(s,a)}{\max}\|S_j-s\|,
		\label{eq:rhot}\\
		r_t&=&\left(\frac{3km}{(1-\beta)\pi_0c\alpha v_d t}\right)^\frac{1}{d},
		\label{eq:rt}\\
		t_c&=&\max\left\{\frac{3m}{1-\beta}, (\ln^2 T+1)^{\frac{d+2}{2}}\right\}.
		\label{eq:tc}	
	\end{eqnarray}
	Then for the online method, we have
	\begin{eqnarray}
		\text{P}\left(\underset{s\in \mathcal{S}}{\cup}\underset{a\in \mathcal{A}}{\cup}\underset{t_c \leq t\leq T}{\cup} \{\rho_t(s,a)> 2r_t\}\right)\leq \left[\frac{(1-\beta)\pi_0c\alpha v_dC_S t}{3km} + 1\right]T|\mathcal{A}| e^{-(1-\ln 2)\ln^2 T}.
		\label{eq:rho2}
	\end{eqnarray}	
\end{lem}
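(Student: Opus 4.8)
The plan is to mirror the proof of Lemma~\ref{lem:rho}, working inside the sliding window $\{S_j:\beta t\le j<t\}$, which contains $\approx(1-\beta)t$ samples, and then pay an extra union bound over $t$. First I would fix $t$ with $t_c\le t\le T$ and, for $r>0$, define the count $n_t(s,a,r)=\sum_{\beta t\le j<t}\mathbf{1}(\norm{S_j-s}\le r,\ A_j=a)$. If $n_t(s,a,r_t)\ge k(t)$, then the $k(t)$ nearest neighbours composing $\mathcal N_t(s,a)$ all lie in $B(s,r_t)$, so $\rho_t(s,a)\le r_t$; hence $\text{P}(\rho_t(s,a)>r_t)\le\text{P}(n_t(s,a,r_t)<k(t))$, and it remains to control this counting event.

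\textbf{Bounding the count via the dependent Chernoff bound.} As in \eqref{eq:smass}, Assumption~\ref{ass:bounded}(e) gives $\text{P}(\norm{S_{j+m}-s}\le r_t\mid S_j,A_j)\ge c\,V(B(s,r_t)\cap\mathcal S)\ge c\alpha v_dr_t^d$, and then Assumption~\ref{ass:main}(d) yields $\text{P}(\norm{S_{j+m}-s}\le r_t,\ A_{j+m}=a\mid S_j,A_j)\ge\pi_0c\alpha v_dr_t^d=3k(t)m/((1-\beta)t)$ by the choice of $r_t$ in \eqref{eq:rt}. I would then subsample the window at spacing $m$, i.e.\ look at the chain $X_i=\mathbf{1}(\norm{S_{\lceil\beta t\rceil+im}-s}\le r_t,\ A_{\lceil\beta t\rceil+im}=a)$ for $i=0,\dots,n-1$ with $n=\lfloor(1-\beta)t/m\rfloor$; this is a $\{0,1\}$-valued Markov chain with conditional success probability at least $p=3k(t)m/((1-\beta)t)$, so Lemma~\ref{lem:chernoff} applies. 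Since $t\ge t_c\ge 3m/(1-\beta)$ we get $np\ge 2k(t)$, and because $x\mapsto e^{-x}(ex/k(t))^{k(t)}$ is decreasing for $x\ge k(t)$, the Chernoff bound gives $\text{P}(n_t(s,a,r_t)<k(t))\le e^{-2k(t)}(2e)^{k(t)}=e^{-(1-\ln2)k(t)}$. The second term in the definition \eqref{eq:tc} of $t_c$ is exactly what makes $k(t)\ge\ln^2T$ for all $t\ge t_c$, so this is at most $e^{-(1-\ln2)\ln^2T}$.

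\textbf{Union bounds over space, actions and time.} For fixed $t$ and $a$, I would take an $r_t$-covering $G_1,\dots,G_{n_c}$ of $\mathcal S$; by Assumption~\ref{ass:bounded}(g), $n_c\le C_S/r_t^d+1=(1-\beta)\pi_0c\alpha v_dC_St/(3k(t)m)+1$. If $\norm{s-G_i}\le r_t$ and $\rho_t(G_i,a)\le r_t$, then the $k(t)$ points witnessing this lie within $2r_t$ of $s$, so $\rho_t(s,a)\le 2r_t$; therefore $\cup_s\{\rho_t(s,a)>2r_t\}\subseteq\cup_i\{\rho_t(G_i,a)>r_t\}$. Taking a union over the $n_c$ centres, over the $|\mathcal A|$ actions, and over the at most $T$ time indices $t\in\{\lceil t_c\rceil,\dots,T\}$ (bounding $t\le T$ inside $n_c$) then yields the stated bound.

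\textbf{Main obstacle.} Compared with the offline argument, the real work is uniformity in $t$: the window length $(1-\beta)t$, the radius $r_t$, and $k(t)$ all vary with $t$, so one must check that a single cutoff $t_c$ simultaneously (i) makes the subsampled chain long enough that $np\ge 2k(t)$ for the Chernoff step and (ii) makes $k(t)$ large enough ($\ge\ln^2T$) that, even after multiplying the per-event probability $e^{-(1-\ln2)k(t)}$ by the covering number ($\lesssim t$) and by the $|\mathcal A|\cdot T$ union factor, the total still tends to $0$; verifying that \eqref{eq:tc} achieves this is the crux. The remainder is bookkeeping of the $\pm1$ in $n$ and of the non-i.i.d.\ dependence, which is precisely what Lemma~\ref{lem:chernoff} absorbs.
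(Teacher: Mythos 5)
Your proposal is correct and follows essentially the same route as the paper's proof: restrict to the window $\{\beta t\le j<t\}$, lower-bound the hit probability via Assumptions \ref{ass:bounded}(e)--(f) and \ref{ass:main}(d), subsample at spacing $m$ and apply Lemma \ref{lem:chernoff} with $p=3k(t)m/((1-\beta)t)$, use the two terms of $t_c$ to guarantee $np\ge 2k(t)$ and $k(t)\ge\ln^2T$ respectively, and finish with an $r_t$-covering of $\mathcal S$ plus union bounds over actions and over $t\le T$. Your explicit remark that $x\mapsto e^{-x}(ex/k)^k$ is decreasing for $x\ge k$ supplies a small monotonicity detail the paper leaves implicit; otherwise the two arguments coincide.
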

\begin{proof}
	We only show the difference with the proof of Lemma \ref{lem:rho}. Other steps are similar and hence are omitted. Define
	\begin{eqnarray}
		n_t(s, a, r)=\sum_{j=\lceil \beta t\rceil }^{t-1}\mathbf{1}(\norm{S_j-s}\leq r, A_t=a).
		\label{eq:ntdf}
	\end{eqnarray}
	Then \eqref{eq:plba} becomes
	\begin{eqnarray}
		\text{P}\left(\norm{S_{t+m}-s}\leq r_t, A_{t+m}=a|S_t, A_t\right)\geq \frac{3km}{(1-\beta) t}.
		\label{eq:pmass}
	\end{eqnarray}
	Now let
	\begin{eqnarray}
		X_i=\mathbf{1}\left(\norm{S_{\lceil \beta t\rceil + i\cdot m}-s}\leq r_t, A_{\lceil \beta t\rceil+i\cdot m} = a\right),
	\end{eqnarray}
	for $i=1,\ldots, \lfloor (1-\beta)t/m\rfloor$. Then the conditions in Lemma \ref{lem:chernoff} are satisfied with $p=3km/((1-\beta) t)$. Hence for $t\geq t_c$,
	\begin{eqnarray}
		\text{P}(n_t(s, a, r_t)<k)&\leq & \text{P}\left(\sum_{i=1}^{\lceil (1-\beta) t/m\rceil} X_i<k\right)\nonumber\\
		&\leq & \exp\left[-\left\lfloor \frac{(1-\beta) t}{m}\right\rfloor \frac{3km}{(1-\beta) t}\right] \left(\frac{e\left\lfloor \frac{(1-\beta) t}{m}\right\rfloor \frac{3km}{(1-\beta) t}}{k}\right)^k\nonumber\\
		&\overset{(\star)}{\leq} & e^{-2k}(2e)^k\nonumber\\
		&=& e^{-(1-\ln 2)k},
		\label{eq:ntlarge}
	\end{eqnarray}
	in which $(\star)$ holds since for $t\geq t_c$,
	\begin{eqnarray}
		\left\lfloor \frac{(1-\beta) t}{m}\right\rfloor \frac{3km}{(1-\beta) t}\geq 3k\left(1-\frac{m}{(1-\beta) t}\right) \geq 3k\left(1-\frac{m}{(1-\beta) t_c}\right)\geq 2k.
	\end{eqnarray}
	Similar to \eqref{eq:conv}, $\text{P}(\rho_t(s, a)>r_t)=\text{P}(n_t(s, a, r_t)<k)$. Therefore
	\begin{eqnarray}
		\text{P}(\rho_t(s, a)>r_t)\leq e^{-(1-\ln 2)k}.
	\end{eqnarray}
	From \eqref{eq:tc}, if $t\geq t_c$, then $k=\lfloor t^{2/(d+2)}\rfloor\geq \ln^2 T$. Therefore $\text{P}(\rho_t(s,a)>r_t)\leq e^{-(1-\ln 2)\ln^2 T}$. Now we find a $r_t$ covering of $\mathcal{S}$ with cover number $n_{ct}$. For any fixed $t$,
	\begin{eqnarray}
		\text{P}\left(\underset{s\in \mathcal{S}}{\cup}\underset{a\in \mathcal{A}}{\cup}\{\rho_t(s,a)>2r_t \} \right)&\leq& n_{ct}|\mathcal{A}|e^{-(1-\ln 2)k}\nonumber\\	
		&\leq & \left(\frac{(1-\beta)\pi_0c\alpha v_dC_St}{3km} + 1\right)|\mathcal{A}|e^{-(1-\ln 2)\ln^2 T}.	
	\end{eqnarray}
	Taking union bound over all $t$, \eqref{eq:rho2} can be proved.
\end{proof}
\section{Proof of Theorem \ref{thm:offline}}\label{sec:offlinepf}
This section focuses on the error bound of the offline method. We begin with the following lemma.
\begin{lem}\label{lem:fix}
	After infinite number of iterations, $q$ and $Q$ satisfy
	\begin{eqnarray}
		Q(t)&=&R_t+\gamma \underset{a}{\max}q(S_{t+1}, a), \label{eq:relation1}\\
		q(s,a)&=&\frac{1}{k}\sum_{j\in \mathcal{N}(s,a)}Q(j).
		\label{eq:relation2}
	\end{eqnarray}
\end{lem}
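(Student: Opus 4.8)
The plan is to show that the iterates $Q_N$ and $q_N$ converge as $N\to\infty$, and that the limits $Q$ and $q$ satisfy the claimed fixed-point relations. The relations themselves are almost immediate once convergence is established: taking $N\to\infty$ in the update rules \eqref{eq:step2} and \eqref{eq:step1} — which are finite sums (over $k$ neighbors, over $T$ time steps, over $|\mathcal{A}|$ actions) so the limit passes through the sum and the $\max$ — yields \eqref{eq:relation1} and \eqref{eq:relation2}. The substantive part is therefore establishing that the limits exist.

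To prove convergence, I would view one full iteration of the algorithm as a map $F$ on the finite-dimensional vector $Q_i = (Q_i(1),\ldots,Q_i(T))\in\mathbb{R}^T$. Composing \eqref{eq:step2} into \eqref{eq:step1} gives, for each $t$,
\begin{eqnarray*}
	(F(Q))(t) = R_t + \gamma\,\underset{a}{\max}\;\frac{1}{k}\sum_{j\in\mathcal{N}(S_{t+1},a)}Q(j).
\end{eqnarray*}
I would show $F$ is a $\gamma$-contraction in the $\ell_\infty$ norm on $\mathbb{R}^T$: for two vectors $Q,Q'$, the reward terms cancel, the $\gamma$ factors out, $|\max_a u_a - \max_a v_a|\le\max_a|u_a-v_a|$, and each average $\frac1k\sum_{j\in\mathcal{N}(S_{t+1},a)}(Q(j)-Q'(j))$ is bounded in absolute value by $\|Q-Q'\|_\infty$ since it is a convex combination of coordinates of $Q-Q'$. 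Hence $\|F(Q)-F(Q')\|_\infty \le \gamma\|Q-Q'\|_\infty$. Since $\gamma\in(0,1)$ and $\mathbb{R}^T$ is complete, the Banach fixed-point theorem gives a unique fixed point $Q$ with $Q_N = F^N(Q_0)\to Q$. Then defining $q$ via \eqref{eq:relation2} from this limiting $Q$, and noting that $q_N(s,a)=\frac1k\sum_{j\in\mathcal{N}(s,a)}Q_N(j)\to\frac1k\sum_{j\in\mathcal{N}(s,a)}Q(j)=q(s,a)$ pointwise for every queried $(s,a)$, establishes both limits and the two relations simultaneously.

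I do not expect a genuine obstacle here; the main point requiring a little care is simply to set up the iteration correctly as a self-map on $\mathbb{R}^T$ (eliminating the auxiliary function $q_i$, which is determined by $Q_{i-1}$) so that the contraction estimate is clean, and to be explicit that the nearest-neighbor index sets $\mathcal{N}(s,a)$ are fixed throughout the iterations (they depend only on the realized trajectory, not on $i$), so that $F$ is a well-defined fixed linear-plus-max operator. One should also note $\mathcal{N}(S_{t+1},a)$ is defined only when there are at least $k$ samples with action $a$; under the assumptions (in particular Assumption \ref{ass:main}(d) and the high-probability events of Lemmas \ref{lem:rho}, \ref{lem:chernoff}) this holds, but since Lemma \ref{lem:fix} is a deterministic statement conditional on the trajectory, I would simply state it on the event that all relevant neighbor sets are well-defined, which is what the later error analysis uses anyway.
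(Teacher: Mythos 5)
Your proposal is correct and follows essentially the same route as the paper: the paper also composes the two updates into a single operator $F$ on $\mathbb{R}^T$, observes $\norm{F[Q]-F[Q']}_\infty\le\gamma\norm{Q-Q'}_\infty$, and invokes the Banach fixed-point theorem before passing to the limit in the definition of $q$. Your added remarks on why the contraction bound holds and on the neighbor sets being fixed and well-defined are details the paper leaves implicit, but there is no substantive difference.
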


\begin{proof}
	Recall \eqref{eq:step1} and \eqref{eq:step2}. $Q_{i}(t)$ and $q_i(s,a)$ are the values of $Q(t)$ and $q(s,a)$ at the $i$-th iteration, respectively. Then
	\begin{eqnarray}
		Q_{i+1}(t)&=&R_t+\gamma \underset{a'}{\max} q_i(S_{t+1},a'); \label{eq:step1app}\\
		q_{i+1}(S_t,a)&=&\frac{1}{k}\sum_{j\in \mathcal{N}(s,a)} Q_{i+1}(j).
		\label{eq:step2app}
	\end{eqnarray}
	From \eqref{eq:step1app} and \eqref{eq:step2app},
	\begin{eqnarray}
		Q_{i+1}(t)=R_t+\gamma \underset{a'}{\max}\frac{1}{k}\sum_{j\in \mathcal{N}(S_{t+1}, a')}Q_{i}(j).
	\end{eqnarray}
	Define an operator $F$ such that
	\begin{eqnarray}
		F[Q_i](t) = R_t+\gamma \underset{a'}{\max}\frac{1}{k}\sum_{j\in \mathcal{N}(S_{t+1}, a')}Q_{i}(j),
		\label{eq:F}
	\end{eqnarray}
	and
	\begin{eqnarray}
		\norm{Q_i-Q_i'}_\infty=\underset{t=1,\ldots, T}{\max}|Q_i(t)-Q_i'(t)|.
	\end{eqnarray}
	Then
	\begin{eqnarray}
		\norm{F[Q_i]-F[Q_i]'}_\infty\leq \gamma\norm{Q_i-Q_i'}_\infty.
	\end{eqnarray}
	Since $0<\gamma<1$, according to Banach fixed point theorem, there exists a $Q$ function such that
	\begin{eqnarray}
		Q(t) = F[Q](t), t=1,\ldots,T,
		\label{eq:fixed}
	\end{eqnarray}
	and $\underset{i\rightarrow\infty}{\lim}\norm{Q_i-Q}=0$. From \eqref{eq:F}, with the limit of $i\rightarrow \infty$, using \eqref{eq:fixed}, we have
	\begin{eqnarray}
		Q(t)=R_t+\gamma \underset{a'}{\max}\frac{1}{k}\sum_{j\in \mathcal{N}(S_{t+1}, a')} Q(j).
	\end{eqnarray}
	Moreover, note that
	\begin{eqnarray}
		q(s,a) = \underset{i\rightarrow\infty}{\lim} q_i(s,a) = \frac{1}{k}\sum_{j\in \mathcal{N}(s,a)}\underset{i\rightarrow\infty}{\lim}Q_{i}(j) = \frac{1}{k}\sum_{j\in \mathcal{N}(s,a)} Q(j).
		\label{eq:ansa}
	\end{eqnarray}
	Therefore
	\begin{eqnarray}
		Q(t)=R_t+\gamma\underset{a'}{\max}q(S_{t+1}, a').
		\label{eq:ansb}
	\end{eqnarray}
	\eqref{eq:ansa} and \eqref{eq:ansb} are exactly the conclusion of Lemma \ref{lem:fix}. The proof is complete.
\end{proof}
With Lemma \ref{lem:fix}, it remains to bound the estimation error. Let $S'$ be a random state following distribution $p(\cdot|S_t,A_t)$. Then from \eqref{eq:relation1},
\begin{eqnarray}
	Q(t)-Q^*(S_t, A_t)&=& R_t+\gamma \underset{a}{\max}q(S_{t+1}, a)-Q^*(S_t, A_t)\nonumber\\
	&\overset{(a)}{=} & R_t+\gamma \underset{a}{\max}q(S_{t+1}, a) - r(S_t, A_t)-\gamma \mathbb{E}\left[\underset{a}{\max}Q^*(S', a)|S_t, A_t\right]\nonumber\\
	&\overset{(b)}{=} & W_t+\gamma \underset{a}{\max}Q^*(S_{t+1}, a)-\gamma \mathbb{E}\left[\underset{a}{\max}Q^*(S', a)|S_t, A_t\right]\nonumber\\
	&&+\gamma \underset{a}{\max}q(S_{t+1}, a) - \gamma \underset{a}{\max}Q^*(S_{t+1}, a)\nonumber\\
	&\overset{(c)}{=} & U_t+\gamma \underset{a}{\max}q(S_{t+1}, a)-\gamma \underset{a}{\max}Q^*(S_{t+1}, a),
\end{eqnarray}
in which (a) comes from the Bellman equation \eqref{eq:bellman}, (b) comes from \eqref{eq:R}, and (c) comes from \eqref{eq:uj}. From \eqref{eq:relation2},
\begin{eqnarray}
	q(s,a)-Q^*(s,a)&=&\frac{1}{k}\sum_{j\in \mathcal{N}(s,a)}(Q(j)-Q^*(s,a))\nonumber\\
	&=& \frac{1}{k}\sum_{j\in \mathcal{N}(s,a)} (Q(j)-Q^*(S_j, A_j))+\frac{1}{k}\sum_{j\in \mathcal{N}(s,a)} (Q^*(S_j, A_j) - Q^*(s,a))\nonumber\\
	&=&\frac{1}{k}\sum_{j\in \mathcal{N}(s,a)}\left[U_j+\gamma \underset{a'}{\max}q(S_{j+1}, a')-\gamma \underset{a'}{\max} Q^*(S_{j+1}, a')\right]\nonumber\\
	&&+\frac{1}{k}\sum_{j\in \mathcal{N}(s,a)} (Q^*(S_j, A_j) - Q^*(s,a)).
	\label{eq:qerr}
\end{eqnarray}
Therefore, from Lemma \ref{lem:lipschitz},
\begin{eqnarray}
	|q(s,a) - Q^*(s,a)|
	\leq \gamma \left|\frac{1}{k}\sum_{j\in \mathcal{N}(s,a)} \left(\underset{a'}{\max}q(S_{j+1}, a')-\underset{a'}{\max}Q^*(S_{j+1}, a')\right)\right|+\left|\frac{1}{k}\sum_{j\in \mathcal{N}(s,a)}U_j\right|+L\rho_0(s,a),\hspace{-5mm}\nonumber\\
	\label{eq:qsplit}
\end{eqnarray}
in which $\rho_0$ has been defined in \eqref{eq:rho0}. Define the estimation error
\begin{eqnarray}
	\epsilon := \norm{q-Q^*}_\infty =\underset{s,a}{\sup} |q(s,a)-Q^*(s,a)|.
	\label{eq:epsdef}
\end{eqnarray}
Then
\begin{eqnarray}
	\epsilon\leq \gamma\epsilon+\underset{s,a}{\sup}\left|\frac{1}{k}\sum_{j\in \mathcal{N}(s,a)}U_j\right|+L\underset{s,a}{\sup}\rho_0(s,a),
\end{eqnarray}
i.e.
\begin{eqnarray}
	\epsilon\leq \frac{1}{1-\gamma}\left[\underset{s,a}{\sup}\left|\frac{1}{k}\sum_{j\in \mathcal{N}(s,a)}U_j\right|+L\underset{s,a}{\sup}\rho_0(s,a)\right].
	\label{eq:eps}
\end{eqnarray}

From \eqref{eq:noise1}, \eqref{eq:rho1} from Lemmas \ref{lem:noise} and \ref{lem:rho}, we have that, with probability at least $1-\delta$, in which
\begin{eqnarray}
	\delta = dT^{2d}|\mathcal{A}|e^{-\frac{1}{2}\ln^2 T}+\left(\frac{\pi_0c\alpha v_dC_ST}{2km}+1\right)|\mathcal{A}|e^{-(1-\ln 2)k},
	\label{eq:prob}
\end{eqnarray}
the following two equations hold:
\begin{eqnarray}
	\underset{s\in \mathcal{S}}{\sup}\underset{a\in \mathcal{A}}{\sup}\left|\frac{1}{k}\sum_{j\in \mathcal{N}(s,a)} U_j\right|\leq \frac{\sigma_U}{\sqrt{k}}\ln T,
	\label{eq:eps1}
\end{eqnarray}
in which $\sigma_U$ is defined in \eqref{eq:sigmau}, and
\begin{eqnarray}
	\underset{s\in \mathcal{S}}{\sup}\underset{a\in \mathcal{A}}{\sup} \rho_0(s,a)\leq 2r_0=2\left(\frac{3km}{\pi_0c\alpha v_d T}\right)^\frac{1}{d}.
	\label{eq:eps2}
\end{eqnarray}
From \eqref{eq:eps}, \eqref{eq:eps1} and \eqref{eq:eps2}, we have the following asymptotic bound that holds with probability at least $1-\delta$:
\begin{eqnarray}
	\epsilon\lesssim \frac{1}{1-\gamma}\left( \frac{\ln T}{\sqrt{k}}+\left(\frac{k}{T}\right)^\frac{1}{d}\right).
	\label{eq:asymptotic}
\end{eqnarray}
Now it remains to tune $k$ to minimize the right hand side of \eqref{eq:asymptotic}. The best rate of growth of $k$ with respect to $T$ is
\begin{eqnarray}
	k \sim T^\frac{2}{d+2}.
	\label{eq:kgrow}
\end{eqnarray}
Then with probability $1-\delta$, in which $\delta$ is defined in \eqref{eq:prob},
\begin{eqnarray}
	\epsilon\lesssim \frac{1}{1-\gamma}T^{-\frac{1}{d+2}}\ln T.
\end{eqnarray}
Therefore the sample complexity is
\begin{eqnarray}
	T=\tilde{O}\left(\frac{1}{(1-\gamma)^{d+2}\epsilon^{d+2}}\right).
\end{eqnarray}
The proof of Theorem 1 is complete.

\section{Proof of Theorem \ref{thm:tail}}\label{sec:tail}
We begin with the following lemmas.
\begin{lem}\label{lem:largeu}
	\begin{eqnarray}
		\mathbb{E}\left[\underset{s, a}{\sup}\left|\frac{1}{k}\sum_{j\in \mathcal{N}(s,a)} U_j\right|\right]\leq \sqrt{\frac{2\sigma_U^2}{k}\ln(dT^{2d}|\mathcal{A}|)}+\sqrt{\frac{2\pi \sigma_U^2}{k}}.
	\end{eqnarray}
\end{lem}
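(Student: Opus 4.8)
The plan is to bound the expected supremum by combining the sub-Gaussian tail inequality \eqref{eq:noise1t} (already established inside the proof of Lemma~\ref{lem:noise}) with the layer-cake identity $\mathbb{E}[Z]=\int_0^\infty\text{P}(Z>u)\,du$, splitting the integral at a threshold chosen to neutralize the union-bound prefactor. Concretely, I would write $Z:=\sup_{s,a}\left|\frac1k\sum_{j\in\mathcal{N}(s,a)}U_j\right|$ and $N:=dT^{2d}|\mathcal{A}|$, so that \eqref{eq:noise1t} gives $\text{P}(Z>u)\le N\exp(-ku^2/(2\sigma_U^2))$ for all $u>0$. No new work on the dependence structure of $\{U_j\}$ is required: the conditional sub-Gaussian MGF bound \eqref{eq:umgf} and the count \eqref{eq:nsets} of distinct neighbor sets already take care of it.

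Next I would choose $a_0:=\sqrt{(2\sigma_U^2/k)\ln N}$, so that $N\exp(-ka_0^2/(2\sigma_U^2))=1$, and bound
\[
\mathbb{E}[Z]\le a_0+\int_{a_0}^\infty N\exp\!\left(-\frac{ku^2}{2\sigma_U^2}\right)du=a_0+\int_{a_0}^\infty\exp\!\left(-\frac{k(u^2-a_0^2)}{2\sigma_U^2}\right)du,
\]
using $\text{P}(Z>u)\le1$ on $[0,a_0]$. Then I would invoke the elementary inequality $u^2-a_0^2\ge(u-a_0)^2$ for $u\ge a_0\ge0$ (the difference equals $2a_0(u-a_0)\ge0$) and substitute $v=u-a_0$ to get
\[
\int_{a_0}^\infty\exp\!\left(-\frac{k(u^2-a_0^2)}{2\sigma_U^2}\right)du\le\int_0^\infty\exp\!\left(-\frac{kv^2}{2\sigma_U^2}\right)dv=\frac12\sqrt{\frac{2\pi\sigma_U^2}{k}}\le\sqrt{\frac{2\pi\sigma_U^2}{k}}.
\]
Plugging $a_0=\sqrt{(2\sigma_U^2/k)\ln(dT^{2d}|\mathcal{A}|)}$ into the first display then yields the stated bound (in fact with a factor $\tfrac12$ to spare on the second term).

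I do not expect a genuine obstacle here, as this is a textbook maximal inequality for finitely many (possibly dependent) sub-Gaussian variables; the only care needed is to reuse \eqref{eq:noise1t} verbatim so that the sequential dependence of $\{U_j\}$ stays absorbed, and to resist the temptation to bound $\int_{a_0}^\infty Ne^{-ku^2/(2\sigma_U^2)}du$ by $N\int_{-\infty}^\infty e^{-ku^2/(2\sigma_U^2)}du$ — that spurious factor $N$ would destroy the estimate, which is exactly why the shift inequality $u^2-a_0^2\ge(u-a_0)^2$ is the step doing the real work.
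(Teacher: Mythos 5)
Your proposal is correct and follows essentially the same route as the paper's proof in Appendix D.1: both apply the layer-cake identity to the tail bound \eqref{eq:noise1t}, split the integral at the same threshold $t_0=\sqrt{(2\sigma_U^2/k)\ln(dT^{2d}|\mathcal{A}|)}$ chosen so the union-bound prefactor is cancelled, and bound the remaining tail integral by $\sqrt{2\pi\sigma_U^2/k}$. The only cosmetic difference is that the paper invokes the Gaussian tail inequality $\int_t^\infty e^{-x^2/2}dx\leq\sqrt{2\pi}e^{-t^2/2}$ where you use the shift $u^2-t_0^2\geq(u-t_0)^2$ and integrate exactly, which even saves you a factor of $\tfrac12$.
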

The proof of Lemma \ref{lem:largeu} is shown in Appendix \ref{sec:largeu}. The next lemma gives a bound of the expectation of kNN radius of $s$, which depends on $g(s)$ defined in \eqref{eq:g}.
\begin{lem}\label{lem:rhotail}
	If $g(s)\geq 3mk/(\pi_0\alpha v_dD^dT)$, then for some constant $C_1$,
	\begin{eqnarray}
		\mathbb{E}\left[\underset{a}{\max}\rho_0(s, a)\right]\leq 
			\left(\frac{3mk}{\pi_0\alpha v_dTg(s)}\right)^\frac{1}{d} + C_1(\norm{s}+1)|\mathcal{A}|e^{-(1-\ln 2)k}.
	\end{eqnarray}
	Otherwise, for some constant $C_2$,
	\begin{eqnarray}
		\mathbb{E}\left[\underset{a}{\max}\rho_0(s, a)\right]\leq C_2(\norm{s}+1).
	\end{eqnarray}
\end{lem}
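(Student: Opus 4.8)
The plan is to follow the proof of Lemma~\ref{lem:rho}, replacing the \emph{uniform} mass lower bound used there by one depending on $g(s)$, and then to convert the resulting tail bound on $\rho_0(s,a)$ into the stated bound on $\mathbb{E}[\max_a\rho_0(s,a)]$. As in that proof, put $n(s,a,r)=\sum_{t=1}^{T}\mathbf{1}(\norm{S_t-s}\le r,\ A_t=a)$, so that $\{\rho_0(s,a)>r\}=\{n(s,a,r)<k\}$ whenever at least $k$ samples with action $a$ are present. Since the $m$-step transition kernel is bounded below by $g$, Assumption~\ref{ass:tail}(f') gives, for $r\le D$,
\begin{eqnarray}
	\text{P}\left(\norm{S_{t+m}-s}\le r,\ A_{t+m}=a\,|\,S_t,A_t\right)\ \ge\ \pi_0\int_{B(s,r)} g(u)\,du\ \ge\ \pi_0\alpha v_d r^d g(s).
\end{eqnarray}
Choosing $r=r_g:=(3mk/(\pi_0\alpha v_d Tg(s)))^{1/d}$ makes the right-hand side equal $3mk/T$, and the first-case hypothesis $g(s)\ge 3mk/(\pi_0\alpha v_d D^dT)$ is precisely what guarantees $r_g\le D$, so the estimate applies; feeding $p=3mk/T$ into Lemma~\ref{lem:chernoff} for the subsampled chain $S_m,S_{2m},\dots$ then gives, exactly as in Lemma~\ref{lem:rho}, $\text{P}(\rho_0(s,a)>r_g)\le e^{-(1-\ln 2)k}$.

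To pass to an expectation bound, write $\mathbb{E}[\max_a\rho_0(s,a)]\le r_g+\sum_{a\in\mathcal{A}}\mathbb{E}[\rho_0(s,a)\mathbf{1}(\rho_0(s,a)>r_g)]$ and introduce $M_a=\#\{t\le T:A_t=a\}$; by Lemma~\ref{lem:chernoff} with $p=\pi_0$ (Assumption~\ref{ass:main}(d)), $\text{P}(M_a<\pi_0 T/2)$ is smaller than any power of $T$. On $\{M_a\ge\pi_0 T/2\}$, the elementary order-statistic inequality — the $k$-th smallest of $M_a$ nonnegative numbers is at most $(M_a-k+1)^{-1}$ times their sum — yields, for $T$ large enough that $k\le\pi_0 T/4$,
\begin{eqnarray}
	\rho_0(s,a)\ \le\ \frac{1}{M_a-k+1}\sum_{j:A_j=a}\norm{S_j-s}\ \le\ \frac{4}{\pi_0 T}\sum_{j=1}^{T}\norm{S_j-s}.
\end{eqnarray}
So it suffices to bound $\mathbb{E}[\norm{S_j-s}\mathbf{1}(\rho_0(s,a)>r_g)]$ for each $j$ (plus a negligible contribution from $\{M_a<\pi_0 T/2\}$, on which one uses $\rho_0(s,a)\le\norm{s}+\max_j\norm{S_j}$).

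The crux is to decouple the rare event $\{\rho_0(s,a)>r_g\}$ from the single state $S_j$. Split $\{1,\dots,T\}$ into two halves; if $j$ lies in one of them, then $\{\rho_0(s,a)>r_g\}\subseteq\{n_H(s,a,r_g)<k\}$, where $H$ is the other half and $n_H$ restricts the count to indices in $H$. This event depends only on the states indexed by $H$, and — because the $m$-step mass bound $3mk/T$ is uniform in the starting state — Lemma~\ref{lem:chernoff} applied within $H$ bounds both its probability and its conditional probability given everything outside $H$ by $e^{-\Omega(k)}$; iterating Assumption~\ref{ass:tail}(g') gives $\mathbb{E}[\norm{S_j}\mid\mathcal{F}]\le C_0$ for the relevant conditioning $\sigma$-field. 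Combining, $\mathbb{E}[\norm{S_j-s}\mathbf{1}(\rho_0(s,a)>r_g)]\le(\norm{s}+C_0)e^{-\Omega(k)}$; summing over $j$ (the factor $T$ cancels $1/T$), over $a$ (producing $|\mathcal{A}|$), and disposing of $\{M_a<\pi_0 T/2\}$ by the same decoupling (now of $\{M_a<\pi_0 T/2\}$ from $\norm{S_j}$) yields the first case. For the second case one simply drops $\mathbf{1}(\rho_0>r_g)$: the order-statistic bound gives $\mathbb{E}[\rho_0(s,a)\mathbf{1}(M_a\ge\pi_0 T/2)]\le\frac{4}{\pi_0 T}\sum_j\mathbb{E}\norm{S_j-s}\le\frac{4(C_0+\norm{s})}{\pi_0}$, and the complement is again negligible, giving $\mathbb{E}[\max_a\rho_0(s,a)]\le C_2(\norm{s}+1)$.

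I expect the decoupling to be the main obstacle. Since Assumption~\ref{ass:tail} controls only the \emph{first} moment of $\norm{S'}$, one cannot apply Cauchy--Schwarz to $\mathbb{E}[\norm{S_j-s}\mathbf{1}(\text{rare event})]$, nor bound $\mathbb{E}[\rho_0\mathbf{1}(\rho_0>r_g)]=\int_{r_g}^{\infty}\text{P}(\rho_0>t)\,dt$ by integrating a pointwise tail estimate, because Markov's inequality only gives a non-integrable $O(1/t)$ tail. The time-horizon split is the device that circumvents this, and its correctness rests on the uniformity in the conditioning state of both the $m$-step transition lower bound (hence the Chernoff tail) and the bound in (g'); obtaining the precise constant $1-\ln 2$ in the exponent, rather than merely some positive constant, requires a slightly finer count of how many subsampled indices remain on each side of the split. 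A minor bookkeeping point, carried through every estimate, is the event $\{M_a\ge\pi_0 T/2\}$, which guarantees at least $k$ samples with action $a$ so that $\rho_0(s,a)$ (and the order statistic) is well defined.
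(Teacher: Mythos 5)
Your first half coincides with the paper's: the mass lower bound $\pi_0\alpha v_d r^d g(s)$ from Assumption \ref{ass:tail}(f'), the choice of $r_g$ so that this equals $3mk/T$, the observation that the case hypothesis is exactly $r_g\le D$, and Lemma \ref{lem:chernoff} giving $\text{P}(\rho_0(s,a)>r_g)\le e^{-(1-\ln 2)k}$. Where you genuinely diverge is in converting this into an expectation bound. The paper simply integrates the tail, $\mathbb{E}[\max_a\rho_0(s,a)]=\int_0^\infty\text{P}(\max_a\rho_0(s,a)>r)\,dr$, using the $e^{-(1-\ln 2)k}$ bound on $[r_g,u]$ with $u=\max\{2\norm{s},8eC_0\}$, and for $r>u$ noting that $\{\rho_0(s,a)>r\}$ forces all but $k$ of the $T$ states to lie outside $B(s,r)$; since each state is outside $B(s,r)$ with conditional probability at most $C_0/(r-\norm{s})$ by Markov's inequality from (g'), the same Chernoff device yields $\text{P}(\rho_0(s,a)>r)\lesssim\left(\frac{2eC_0}{r-\norm{s}}\right)^{T/2}$, which is integrable in $r$ and contributes only an exponentially small remainder. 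This directly refutes the obstruction you cite: Markov's inequality gives a non-integrable $O(1/r)$ tail for a \emph{single} state, but the $k$NN radius exceeds $r$ only when $\Theta(T)$ states are simultaneously far, so the pointwise tail is $O(r^{-\Theta(T)})$ and the integral converges. Your route --- the order-statistic bound $\rho_0(s,a)\le (M_a-k+1)^{-1}\sum_{j:A_j=a}\norm{S_j-s}$ followed by a two-halves time split to decouple the rare event from each $\norm{S_j-s}$ --- is a valid alternative and reaches the same conclusions, but it is heavier, and what it buys (avoiding any pointwise tail estimate at large $r$) is not actually needed.

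Two points to fix if you carry your version through. First, when you decouple $\{M_a<\pi_0T/2\}$ from $\norm{S_j}$ by restricting the count to the half of the horizon not containing $j$, that restricted count has expectation only about $\pi_0T/2$, so $\{M_a<\pi_0T/2\}$ no longer forces a large deviation of it; lower the threshold (e.g.\ to $\pi_0T/4$) so the restricted event still has probability $e^{-\Omega(T)}$. Second, be careful about the direction of conditioning: Lemma \ref{lem:chernoff} controls the half-restricted count conditionally on the \emph{earlier} half (Markov property forward in time), while the object living in the later half must be handled by the tower property with Assumption \ref{ass:tail}(g'); conditioning the chain on its future is not covered by the lemma. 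Your phrasing "given everything outside $H$" blurs this, though your two tools do cover the two cases. Finally, as you anticipate, the halved horizon degrades the exponent from $1-\ln 2$ to a smaller positive constant unless you inflate $r_g$ by a constant factor; this is harmless downstream, where only $e^{-\Omega(k)}$ is used.
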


The proof of Lemma \ref{lem:rhotail} is shown in Appendix \ref{sec:rhotail}. Based on Lemma \ref{lem:rhotail}, we then show the following lemma.
\begin{lem}\label{lem:rhotail2}
	There exists a constant $C_3$, such that
	\begin{eqnarray}
		\mathbb{E}\left[\max_{a'} \rho_0(S', a')|s, a\right] \leq C_3\left(\frac{k}{T}\right)^\frac{1}{d},
	\end{eqnarray}
	in which $S'\sim p(\cdot|s, a)$.
\end{lem}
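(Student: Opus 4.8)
The plan is to obtain the bound by integrating the pointwise estimate of Lemma~\ref{lem:rhotail} against the transition density $p(\cdot\mid s,a)$. Writing $t_0 := 3mk/(\pi_0\alpha v_d D^d T)$ for the threshold appearing in Lemma~\ref{lem:rhotail}, and conditioning on the value of $S'$, I would split
\begin{eqnarray}
	\mathbb{E}\left[\max_{a'}\rho_0(S',a')\,\Big|\,s,a\right]
	= \int_{g(s')\geq t_0} p(s'\mid s,a)\,\mathbb{E}\left[\max_{a'}\rho_0(s',a')\right]ds'
	+ \int_{g(s')< t_0} p(s'\mid s,a)\,\mathbb{E}\left[\max_{a'}\rho_0(s',a')\right]ds',
\end{eqnarray}
and estimate the two integrals using the two regimes of Lemma~\ref{lem:rhotail}.

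On the region $\{g(s')\geq t_0\}$ the first estimate of Lemma~\ref{lem:rhotail} applies. Its leading term $\big(3mk/(\pi_0\alpha v_d T g(s'))\big)^{1/d}$ factors as a constant multiple of $(k/T)^{1/d}\,g^{-1/d}(s')$, so after pulling out the constant its integral against $p(\cdot\mid s,a)$ is controlled by \eqref{eq:tail1}, giving a contribution of order $(k/T)^{1/d}$. Its remainder term $C_1(\norm{s'}+1)|\mathcal{A}|e^{-(1-\ln 2)k}$ integrates, via Assumption~\ref{ass:tail}(g'), to at most $C_1(C_0+1)|\mathcal{A}|e^{-(1-\ln 2)k}$; since $k\sim T^{2/(d+2)}$ this is smaller than any power of $1/T$ and in particular $o\big((k/T)^{1/d}\big)$, so it is absorbed into the final constant. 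On the region $\{g(s')< t_0\}$ I would use the second estimate $\mathbb{E}[\max_{a'}\rho_0(s',a')]\leq C_2(\norm{s'}+1)$ together with \eqref{eq:tail2} at $t=t_0$, which bounds that contribution by $C_2 C_g\, t_0^{1/d}$, again of order $(k/T)^{1/d}$ because $t_0$ is itself of order $k/T$. Summing the three pieces and collecting all constants into a single $C_3$ finishes the proof.

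The step I expect to be the crux is the bookkeeping around the threshold: recognizing that $t_0^{1/d}\sim(k/T)^{1/d}$, so that the ``bad tail'' region, where only the crude linear-in-$\norm{s'}$ bound is available, still contributes at the target rate, and checking that the exponentially small term coming from the good region is genuinely dominated under the prescribed growth $k\sim T^{2/(d+2)}$. Beyond that, everything reduces to a direct application of the two tail conditions in Assumption~\ref{ass:tail}(e') and the moment bound (g').
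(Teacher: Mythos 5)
Your proposal is correct and follows essentially the same route as the paper: the paper also conditions on $S'$, splits the integral according to whether $r_0(s')\leq D$ (equivalently $g(s')\geq 3mk/(\pi_0\alpha v_d D^d T)$), bounds the leading term via \eqref{eq:tail1}, the exponential remainder via Assumption \ref{ass:tail}(g'), and the bad-tail region via \eqref{eq:tail2} evaluated at the threshold. Your additional remark that the $e^{-(1-\ln 2)k}$ term is dominated under $k\sim T^{2/(d+2)}$ makes explicit a point the paper leaves implicit when absorbing that term into $C_3$.
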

Lemma \ref{lem:rhotail2} indicates that under Assumption \ref{ass:tail}, given the current state $s$, the expectation of kNN distances of next state $S'$ is still bounded by $O((k/T)^{1/d})$, which is the same as the case with bounded support.

With the preparations above, we then bound the estimation error of $Q^*$. Recall \eqref{eq:qsplit}, which bounds the estimation error $|q(s, a)-Q^*(s, a)|$. Intuitively, it is unlikely to obtain a uniform bound, since $S_{j+1}$ may fall at the tail of the support $\mathcal{S}$, thus $|q(S_{j+1}, a)-Q^*(S_{j+1}, a)|$ may be large. Therefore, instead of uniform bound, we bound the expectation of $\ell_1$ error here. Define
\begin{eqnarray}
	\Delta(s):=\max_a \left[|q(s, a)-Q^*(s, a)|-\left|\frac{1}{k}\sum_{j\in \mathcal{N}(s,a)}U_j\right|-L\rho_0(s, a)\right].
	\label{eq:deltas}
\end{eqnarray}
Then for all $a$,
\begin{eqnarray}
	|q(s, a)-Q^*(s, a)|\leq \Delta(s)+\left|\frac{1}{k}\sum_{j\in \mathcal{N}(s,a)}U_j\right|+L\rho_0(s, a).
	\label{eq:qbound}
\end{eqnarray}
From \eqref{eq:qsplit}, \eqref{eq:deltas} and \eqref{eq:qbound},
\begin{eqnarray}
	\Delta(s)&\leq& \frac{\gamma}{k}\max_a\left|\sum_{j\in \mathcal{N}(s,a)}\left[\max_{a'}q(S_{j+1}, a')-\max_{a'} Q^*(S_{j+1}, a')\right]\right|\nonumber\\
	&\leq & \frac{\gamma}{k}\max_{a} \sum_{j\in \mathcal{N}(s, a)} \max_{a'} \left|q(S_{j+1}, a')-Q^*(S_{j+1}, a')\right|\nonumber\\
	&\leq & \frac{\gamma}{k}\max_a \sum_{j\in \mathcal{N}(s,a)}\max_{a'}\left[\Delta(S_{j+1})+\left|\frac{1}{k}\sum_{l\in \mathcal{N}(S_{j+1}, a')} U_l\right|+L\max_{a'} \rho_0(S_{t+1}, a')\right].
	\label{eq:deltatrans}
\end{eqnarray}
Define
\begin{eqnarray}
	\Delta_0:=\mathbb{E}\left[\max_s \Delta(s)\right],
	\label{eq:delta0}
\end{eqnarray}
then from \eqref{eq:deltatrans},
\begin{eqnarray}
	\Delta_0\leq \frac{\gamma}{1-\gamma}\left[\mathbb{E}\left[\underset{s, a}{\sup}\left|\frac{1}{k}\sum_{j\in \mathcal{N}(s, a)} U_j\right|\right]+L\underset{s, a}{\sup}\mathbb{E}\left[\max_{a'} \rho_0(s', a')|s, a\right]\right].
\end{eqnarray}
From Lemmas \ref{lem:largeu} and \ref{lem:rhotail2},
\begin{eqnarray}
	\Delta_0&\leq& \frac{\gamma}{1-\gamma}\left[\sqrt{\frac{2\sigma_U^2}{k}\ln (dT^{2d} |\mathcal{A}|)}+\sqrt{\frac{2\pi \sigma_U^2}{k}}+C_3\left(\frac{k}{T}\right)^\frac{1}{d}\right]\nonumber\\
	&\lesssim & \frac{1}{1-\gamma}\left[\sqrt{\frac{1}{k}\ln T}+\left(\frac{k}{T}\right)^\frac{1}{d}\right].
\end{eqnarray}
Let $k\sim T^{2/(d+2)}$, then
\begin{eqnarray}
	\Delta_0\lesssim \frac{1}{1-\gamma} T^{-\frac{1}{d+2}}\sqrt{\ln T}.
\end{eqnarray}
Recall the definition of $\Delta_0$ in \eqref{eq:delta0}, and the definition of $\Delta(s)$ in \eqref{eq:deltas}, with Lemma \ref{lem:largeu} and Lemma \ref{lem:rhotail},
\begin{eqnarray}
	\mathbb{E}[|q(s, a)-Q^*(s, a)|]&\lesssim& \frac{1}{1-\gamma}T^{-\frac{1}{d+2}}\sqrt{\ln T}+\mathbb{E}\left[\left|\frac{1}{k}\sum_{j\in \mathcal{N}(s,a)} U_j\right|\right]+L\mathbb{E}[\rho_0(s, a)]\nonumber\\
	&\lesssim & \frac{1}{1-\gamma}T^{-\frac{1}{d+2}}\sqrt{\ln T} + \phi(s),
	\label{eq:l1err}
\end{eqnarray}
in which
\begin{eqnarray}
	\phi(s)\lesssim \left\{
	\begin{array}{ccc}
		T^{-\frac{1}{d+2}} g^{-\frac{1}{d}}(s)+(\norm{s}+1) e^{-(1-\ln 2)k} &\text{if} & g(s)\geq \frac{3mk}{\pi_0\alpha v_dD^d T}\\
		\norm{s}+1 &\text{if} &g(s)< \frac{3mk}{\pi_0\alpha v_dD^d T}.
	\end{array}
	\right.
\end{eqnarray}
Taking integration over $\phi(s)$ weighted by the stationary distribution $f_\pi(s)$ yields
\begin{eqnarray}
	\phi(s) f_\pi(s) ds&\lesssim& \int \left[T^{-\frac{1}{d+2}} g^{-\frac{1}{d}}(s)+(\norm{s}+1)e^{-(1-\ln 2)k}\right] f_\pi(s) ds\nonumber\\
	&& + \int (\norm{s}+1)\mathbf{1}\left(g(s)<\frac{3mk}{\pi_0\alpha v_dD^d T}\right) f_\pi(s) ds\nonumber\\
	&\lesssim & T^{-\frac{1}{d+2}}+e^{-(1-\ln 2)k} + \left(\frac{k}{T}\right)^\frac{1}{d}\nonumber\\
	&\sim & T^{-\frac{1}{d+2}},
\end{eqnarray}
in which the second step uses Assumption \ref{ass:tail}(e'). Therefore
\begin{eqnarray}
	\int \mathbb{E}\left[\max_a|q(s, a)-Q^*(s,a)|\right] f_\pi(s)ds\lesssim \frac{1}{1-\gamma}T^{-\frac{1}{d+2}}\ln T.
\end{eqnarray}
The proof is complete.

\subsection{Proof of Lemma \ref{lem:largeu}}\label{sec:largeu}
The proof is based on \eqref{eq:noise1t}. Define
\begin{eqnarray}
	t_0=\sqrt{\frac{2\sigma_U^2}{k}\ln(dT^{2d} |\mathcal{A}|)}.
\end{eqnarray}
Then
\begin{eqnarray}
	\mathbb{E}\left[\underset{s, a}{\sup}\left|\frac{1}{k}\sum_{j\in \mathcal{N}(s, a)} U_j\right|\right] &=&\int_0^\infty \text{P}\left(\underset{s, a}{\sup}\left|\frac{1}{k}\sum_{j\in \mathcal{N}(s,a)} U_j\right|>t\right) dt\nonumber\\
	&\leq & \int_0^{t_0}1 dt+\int_{t_0}^\infty dT^{2d} |\mathcal{A}|e^{-\frac{kt^2}{2\sigma_U^2}} dt\nonumber\\
	&\overset{(a)}{\leq} & t_0+\frac{\sigma_UdT^{2d}|\mathcal{A}|}{\sqrt{k}}\sqrt{2\pi} e^{-\ln (dT^{2d}|\mathcal{A}|)}\nonumber\\
	&=&\sqrt{\frac{2\sigma_U^2}{k} \ln(dT^{2d}|\mathcal{A}|)}+\sqrt{\frac{2\pi\sigma_U^2}{k}}.
\end{eqnarray}
in which (a) uses the inequality $\int_t^\infty e^{-x^2/2}dx\leq \sqrt{2\pi}e^{-t^2/2}$. The proof is complete.

\subsection{Proof of Lemma \ref{lem:rhotail}}\label{sec:rhotail}
The beginning of our proof follows that of Lemma \ref{lem:rho}. The difference is that now the support is unbounded, thus the density is no longer bounded away from zero.
 
For $r\leq D$, $t=m+1, 2m+1,\ldots$, recall the definition of $g$ in \eqref{eq:g},
\begin{eqnarray}
	\text{P}(\norm{S_t-s}\leq r, A_t=a|S_1,A_1,R_1,\ldots, S_{t-1}, A_{t-1}, R_{t-1})&\geq& \pi_0\int_{B(s, r)} g(u) du\nonumber\\
	&\geq & \pi_0\alpha v_d r^d g(s),
\end{eqnarray}
in which the second step comes from Assumption \ref{ass:tail}(f'). Define
\begin{eqnarray}
	r_0(s)=\left(\frac{3mk}{\pi_0\alpha v_dTg(s)}\right)^\frac{1}{d}.
\end{eqnarray}
Now we discuss the following two cases separately.

\textbf{Case 1: $r_0(s)\leq D$.} Recall the definition of $n(s, a, r)$ in \eqref{eq:ndf}. According to Assumption \ref{ass:tail}(e'), similar to \eqref{eq:smass},
\begin{eqnarray}
	\text{P}\left(\norm{S_{t+m}-s}\leq r_0(s)\right) &\overset{(a)}{\geq} &\int_{B(s, r_0(s)}g(u)du\nonumber\\
	&\overset{(b)}{\geq}&\alpha v_dr_0^d(s) g(s),
\end{eqnarray}
in which (a) comes from the definition of function $g$ in \eqref{eq:g}, and (b) comes from Assumption \ref{ass:tail}(f'). Therefore
\begin{eqnarray}
	\text{P}\left(\norm{S_{t+m}-s}\leq r_0(s), A_t=a|S_t, A_t\right)\geq \pi_0 \alpha v_dr_0^d(s) g(s) = \frac{3mk}{T}.
\end{eqnarray}

Following the arguments of \eqref{eq:smalln}, 
\begin{eqnarray}
	\text{P}(n(s, a, r_0(s))<k)\leq e^{-(1-\ln 2)k}.
\end{eqnarray}
Hence
\begin{eqnarray}
	\text{P}(\rho_0(s,a)>r_0(s))\leq e^{-(1-\ln 2)k}.
	\label{eq:r0bound}
\end{eqnarray}
$r_0(s)$ is a high probability upper bound of $\rho_0(s, a)$. To bound $\mathbb{E}[\rho_0(s, a)]$, it is necessary to bound $\text{P}(\rho_0(s,a)>r)$ for large $r$. From Assumption \ref{ass:tail}(g'), for $S'\sim p(\cdot|s, a)$,
\begin{eqnarray}
	\text{P}(\norm{S'}>r|s, a)\leq \frac{C_0}{r}.
\end{eqnarray}
Denote $S_{1:t-1}=(S_1,\ldots, S_{t-1})$, and $A_{1:t-1}, R_{1:t-1}$ are defined similarly. Then
\begin{eqnarray}
	\text{P}(\norm{S_t}\leq r, A_t=a|S_{1:t-1}, A_{1:t-1}, R_{1:t-1})\geq 1-\frac{C_0}{r}.
\end{eqnarray}
From triangle inequality, for $r>\norm{s}$, 
\begin{eqnarray}
	\text{P}(\norm{S_t-s}\leq r, A_t=a|S_{1:t-1}, A_{1:t-1}, R_{1:t-1})\geq 1-\frac{C_0}{r-\norm{s}}.
\end{eqnarray}
Hence
\begin{eqnarray}
	\text{P}(\rho_0(s, a)>r)&=&\text{P}(n(s, a, r)<k)\nonumber\\
	&=&\text{P}(T-n(s, a, r)\geq T-k)\nonumber\\
	&\leq & \text{P}(T-n(s, a, r)\geq \frac{1}{2} T)\nonumber\\
	&\leq & e^{-TC_0/(r-\norm{s})}\left(\frac{eT\frac{C_0}{r-\norm{s}}}{\frac{1}{2}T}\right)^{T/2}\nonumber\\
	&\leq & \left(\frac{2eC_0}{r-\norm{s}}\right)^{T/2}.
	\label{eq:largerbound}
\end{eqnarray}
Based on \eqref{eq:r0bound} and \eqref{eq:largerbound}, define $u=\max\{2\norm{s}, 8eC_0 \}$, then
\begin{eqnarray}
	\mathbb{E}\left[\max_a\rho_0(s, a)\right] &=&\int_0^\infty \text{P}\left(\max_a\rho_0(s, a)>r\right) dr\nonumber\\
	&\leq & \int_0^{r_0(s)} dr+\int_{r_0(s)}^u |\mathcal{A}|e^{-(1-\ln 2)k} dr +\int_u^\infty |\mathcal{A}|\left(\frac{2eC_0}{r-\norm{s}}\right)^{T/2} dr\nonumber\\
	&\leq & r_0(s) + u|\mathcal{A}|e^{-(1-\ln 2)k}+\frac{|\mathcal{A}|}{\frac{1}{2}T-1} (4eC_0)^{T/2}u^{1-T/2}\nonumber\\
	&\leq & r_0(s) + \max\left\{2\norm{s}, 8eC_0 \right\}|\mathcal{A}|e^{-(1-\ln 2)k}+\frac{2|\mathcal{A}|\max\{2\norm{s}, 8eC_0 \}}{T-2}2^{-T/2}\nonumber\\
	&\leq & r_0(s)+C_1(\norm{s}+1)|\mathcal{A}|e^{-(1-\ln 2)k},
\end{eqnarray}
for some constant $C_1$.

\textbf{Case 2: $r_0(s)>D$.} Now \eqref{eq:r0bound} does not hold. We only use the high probability bound for large $r$:
\begin{eqnarray}
	\mathbb{E}\left[\max_a\rho_0(s, a)\right]&=&\int_0^\infty \text{P}\left(\max_a \rho_0(s, a)> r\right) dr\nonumber\\
	&=&\int_0^u 1dr +\int_u^\infty |\mathcal{A}|\left(\frac{2eC_0}{r-\norm{s}}\right)^{T/2} dr\nonumber\\
	&=&u + \frac{|\mathcal{A}|}{\frac{1}{2}T-1} (4eC_0)^{T/2}u^{1-T/2}\nonumber\\
	&\leq & C_2(\norm{s} + 1)
	\label{eq:rholowpdf}
\end{eqnarray}
for some constant $C_2$. Note that the condition $g(s)\geq 3mk/(\pi_0\alpha v_dD^dT)$ in the statement of Lemma \ref{lem:rhotail} is exactly $r_0(s)\leq D$. Therefore, combining case 1 and 2, the proof of Lemma \ref{lem:rhotail} is complete.

\subsection{Proof of Lemma \ref{lem:rhotail2}}\label{sec:rhotail2}
The proof of Lemma \ref{lem:rhotail2} is based on Lemma \ref{lem:rhotail}.
\begin{eqnarray}
	\mathbb{E}\left[\max_a \rho_0(S', a')|s, a\right] &=& \int p(s'|s, a)\mathbb{E}\left[\max_{a'} \rho_0(s',a')|s,a\right] ds'\nonumber\\
	&\leq & \int_{r_0(s')\leq D}p(s'|s, a)g^{-\frac{1}{d}}(s')\left(\frac{3mk}{\pi_0\alpha v_d T}\right)^\frac{1}{d} ds'\nonumber\\
	&&+C_1(\mathbb{E}[\norm{S'}|s, a]+1)|\mathcal{A}| e^{-(1-\ln 2)k}\nonumber\\
	&&+\int_{r_0(s')>D} C_2(\norm{s'}+1)p(s'|s, a)ds'\nonumber\\
	&:=&I_1+I_2+I_3.
\end{eqnarray}
For $I_1$, from Assumption \ref{ass:tail}(e'),
$\int p(s'|s, a)g^{-1/d}(s')d\mathbf{s'}\leq C_g$, thus
\begin{eqnarray}
	I_1\leq C_g\left(\frac{3mk}{\pi_0\alpha v_d T}\right)^\frac{1}{d}.
\end{eqnarray}
For $I_2$, from Assumption \ref{ass:tail}(g'), $\mathbb{E}[\norm{S'}|s, a]\leq C_0$. Thus
\begin{eqnarray}
	I_2\leq C_1(C_0+1)|\mathcal{A}|e^{-(1-\ln 2)k}.
\end{eqnarray}
For $I_3$, $r_0(s')>D$ implies $g(s') < 3mk/(\pi_0\alpha v_d D^d T)$. From Assumption \ref{ass:tail}(e'),
\begin{eqnarray}
	I_3\leq C_2C_g \left(\frac{3mk}{\pi_0\alpha v_dD^dT}\right)^{\frac{1}{d}}.
\end{eqnarray}
Combine these three terms, 
\begin{eqnarray}
	\mathbb{E}\left[\max_{a'}\rho_0(\mathbf{S'}, a')|s, a\right]\leq C_3\left(\frac{k}{T}\right)^{\frac{1}{d}}
\end{eqnarray}
for some constant $C_3$. The proof is complete.

\section{Proof of Theorem \ref{thm:online}}\label{sec:onlinepf}
This section focuses on the online method. The proof begins with defining an event $E$.
\begin{defi}
	Let $E$ be the event such that the following conditions hold:
	
	1) For all $s\in \mathcal{S}$, $a\in \mathcal{A}$ and $t\leq T$, 
	\begin{eqnarray}
		\left|\frac{1}{k(t)}\sum_{j\in \mathcal{N}_t(s,a)}U_j\right|\leq \frac{\sigma_U}{\sqrt{k(t)}}\ln T,
		\label{eq:cond1}
	\end{eqnarray}
	with $U_j$ defined in \eqref{eq:uj} and $\sigma_U$ defined in \eqref{eq:sigmau};
	
	2) For all $s\in \mathcal{S}$, $a\in \mathcal{A}$ and $t_c\leq t\leq T$,
	\begin{eqnarray}
		\rho_t(s,a)\leq 2r_t,
		\label{eq:cond2}
	\end{eqnarray}
	with $r_t$ defined in \eqref{eq:rt};
	
	3) For all $1\leq t\leq T$, 
	\begin{eqnarray}
		|W_t|\leq \sigma \ln T.
		\label{eq:cond3}
	\end{eqnarray}	
\end{defi}

From \eqref{eq:noise2} and \eqref{eq:rho2} in Lemmas \ref{lem:noise} and \ref{lem:rho}, the probability of violating conditions 1) or 2) converges to zero with increase of $T$. Condition 3) can also be proved easily. From Assumption \ref{ass:main}(b), $$P(W_t>\sigma \ln T)\leq e^{-\ln^2 T/2},$$ hence
\begin{eqnarray}
	\text{P}\left(\underset{t}{\max}|W_t|>\sigma \ln T\right)\leq 2Te^{-\frac{1}{2}\ln^2 T}.
\end{eqnarray}

The above result indicates that $\text{P}(E^c) = o(1)$. Now it remains to bound the error under $E$.

Recall \eqref{eq:qonline} and \eqref{eq:Qonline},
\begin{eqnarray}
	q_t(s,a) - Q^*(s,a) &=& \frac{1}{k(t)}\sum_{j\in \mathcal{N}_t(s,a)} (Q(j)-Q^*(s,a))\nonumber\\
	&=&  \frac{1}{k(t)}\sum_{j\in \mathcal{N}_t(s,a)}(Q(j)-Q^*(S_j, a)) + \frac{1}{k(t)}\sum_{j\in \mathcal{N}_t(s,a)}(Q^*(S_j, a) - Q^*(s,a)),
\end{eqnarray}
and
\begin{eqnarray}
	Q(j) - Q^*(S_j, A_j)&=& R_j+\gamma \underset{a}{\max} q_j(S_{j+1}, a)- Q^*(S_j, A_j)\nonumber\\
	&=& R_j + \gamma \underset{a}{\max}q_j(S_{j+1}, a) - r(S_j, A_j)-\gamma \mathbb{E}\left[\underset{a}{\max} Q^*(S',a)|S_j, A_j\right]\nonumber\\
	&=& W_j +\gamma \left[\underset{a}{\max}q_j(S_{j+1}, a)-\mathbb{E}\left[\underset{a}{\max}Q^*(S',a)|S_j, A_j\right]\right]\nonumber\\
	&=& U_j + \gamma \left[\underset{a}{\max} q_j(S_{j+1},a)-\underset{a}{\max}Q^*(S_{j+1}, a)\right],
	\label{eq:Qdiff}
\end{eqnarray}
in which the last step uses \eqref{eq:uj}. Define
\begin{eqnarray}
	\Delta_t=\|q_t-Q^*\|_\infty.
\end{eqnarray}
Then for $t\geq t_c$, in which $t_c$ is defined in \eqref{eq:tc}, under the event $E$, we have
\begin{eqnarray}
	&&|q_t(s,a)-Q^*(s,a)|\nonumber\\
	&\leq& \left|\frac{1}{k(t)}\sum_{j\in \mathcal{N}_t(s,a)}(Q(j)-Q^*(S_j, a))\right| + \frac{1}{k(t)}\sum_{j\in \mathcal{N}_t(s,a)}\left|Q^*(S_j, a) - Q^*(s,a)\right|\nonumber\\
	&\overset{(a)}{\leq} & \left|\frac{1}{k(t)}\sum_{j\in \mathcal{N}_t(s,a)}U_j\right|+\gamma\left|\frac{1}{k(t)}\sum_{j\in \mathcal{N}_t(s,a)}\left[\underset{a'}{\max}q_j(S_{j+1}, a') - \underset{a'}{\max} Q^*(S_{j+1}, a')\right]\right| + L\rho_t(s,a)\nonumber\\
	&\overset{(b)}{\leq} & \frac{\sigma_U}{\sqrt{k(t)}}\ln T + \gamma\underset{\beta t\leq i<t}{\max}\Delta_i + 2Lr_t,
	\label{eq:qdiff}
\end{eqnarray}
in which (a) uses \eqref{eq:Qdiff} for the first two terms, and Lemma \ref{lem:lipschitz} and \eqref{eq:rhot} for the last term. (b) uses \eqref{eq:cond1} and \eqref{eq:cond2}. Recall that $r_t$ has been defined in \eqref{eq:rt}. Take supremum over \eqref{eq:qdiff}, for all $t\geq t_c$, under $E$,
\begin{eqnarray}
	\Delta_t\leq \sigma_U k(t)^{-\frac{1}{2}}\ln T+C_1\left(\frac{k(t)}{(1-\beta)t}\right)^\frac{1}{d}+\gamma\underset{\beta t\leq i<t}{\max}\Delta_i,
	\label{eq:transition}
\end{eqnarray}
in which
\begin{eqnarray}
	C_1=2L\left(\frac{3m}{\pi_0c\alpha v_d}\right)^\frac{1}{d}.
\end{eqnarray}
Let 
\begin{eqnarray}
	C(\gamma, \beta)=\max\left\{t_c^\frac{1}{d+2}\frac{R+\sigma}{1-\gamma}, \frac{(\sigma_U+C_1)(1-\beta)^{-\frac{1}{d+2}}}{1-\gamma \beta^{-\frac{1}{d+2}}}  \right\}.
	\label{eq:C}
\end{eqnarray}
We prove that if $E$ is true,
\begin{eqnarray}
	\Delta_t\leq C(\gamma, \beta)t^{-\frac{1}{d+2}}\ln T
	\label{eq:resonline}
\end{eqnarray}  
for $t=1,\ldots, T$, by induction.

\textbf{Case 1:  $t< t_c$}. From \eqref{eq:Qonline} and \eqref{eq:qonline},
\begin{eqnarray}
	\underset{1\leq t\leq T}{\max} Q(t)&\leq & \underset{1\leq t\leq T}{\max} R_t + \gamma \underset{1\leq t\leq T}{\max} Q(t)\nonumber\\
	&\leq & R+\sigma \ln T+ \gamma \underset{1\leq t\leq T}{\max} Q(t),
\end{eqnarray}
in which the last inequality comes from condition \eqref{eq:cond3}. Hence
\begin{eqnarray}
	\underset{1\leq t\leq T}{\max} Q(t)\leq \frac{R+\sigma \ln T}{1-\gamma},
\end{eqnarray}
and
\begin{eqnarray}
	\Delta_t=\|q_t-Q\|_\infty\leq \underset{s,a}{\sup} \max\{q_t(s,a), Q^*(s,a)\}\leq \frac{R+\sigma \ln T}{1-\gamma}\leq C(\gamma, \beta)t_c^{-\frac{1}{d+2}}\ln T,
\end{eqnarray}
in which the last step uses \eqref{eq:C}.

\textbf{Case 2: $t\geq t_c$.} Recall that $k(t)=\lceil ((1-\beta)t)^{2/(d+2)}\rceil$. We prove \eqref{eq:resonline} by induction. From now on, suppose that \eqref{eq:resonline} holds for steps $1,\ldots, t-1$, then for the $t$-th step, from \eqref{eq:transition},
\begin{eqnarray}
	\Delta_t &\leq& \sigma_U ((1-\beta) t)^{-\frac{1}{d+2}}\ln T+C_1\left(\frac{((1-\beta) t)^{2/(d+2)} + 1}{(1-\beta) t}\right)^\frac{1}{d}+\gamma C(\gamma, \beta)(\beta t)^{-\frac{1}{d+2}}\ln T\nonumber\\
	&\leq & (\sigma_U+C_1)((1-\beta) t)^{-\frac{1}{d+2}}+\gamma C(\gamma, \beta)(\beta t)^{-\frac{1}{d+2}}\ln T\nonumber\\
	&\leq & C(\gamma, \beta)t^{-\frac{1}{d+2}}\ln T,
\end{eqnarray}
in which the last step uses \eqref{eq:C}. 

Now we have proved that if $E$ is true, then \eqref{eq:resonline} holds. Moreover, $E$ is not true with a probability converging to zero with $T$ increases. Now it remains to pick $\beta$ that minimizes $C(\gamma, \beta)$. Let $\beta = \gamma^{(d+2)/(d+3)}$, then from \eqref{eq:C},
\begin{eqnarray}
	C(\gamma, \beta) \leq \max\left\{\frac{t_c^\frac{1}{d+2}(R+\sigma)}{1-\gamma}, (\sigma_U+C_1) (1-\gamma^\frac{d+2}{d+3})^{-\frac{d+3}{d+2}}\right\} .
\end{eqnarray}
It is straightforward to show the following inequality:
\begin{eqnarray}
	\gamma^\frac{d+2}{d+3}\leq 1-\frac{d+2}{d+3}(1-\gamma).
\end{eqnarray}
Therefore $C(\gamma, \beta)\lesssim (1-\gamma)^{-(d+3)/(d+2)}$. Recall that $\epsilon=\norm{q_T-Q}_\infty$. Therefore
\begin{eqnarray}
	\epsilon\lesssim (1-\gamma)^{-\frac{d+3}{d+2}}T^{-\frac{1}{d+2}}\ln T.
\end{eqnarray}
The corresponding sample complexity is
\begin{eqnarray}
	T=\tilde{O}\left(\frac{1}{\epsilon^{d+2} (1-\gamma)^{d+3}}\right).
\end{eqnarray}

The proof of Theorem \ref{thm:online} is complete.

\section{Proof of Theorem \ref{thm:onlinetail}}\label{sec:onlinetail}

From \eqref{eq:qdiff}, for any state action pair $(s, a)$,
\begin{eqnarray}
	&&q_t(s, a)-Q^*(s, a)-\left|\frac{1}{k(t)}\sum_{j\in \mathcal{N}_t(s, a)} U_j\right|-L\rho_t(s, a)\nonumber\\
	&&\leq \frac{\gamma}{k(t)}\left|\sum_{j\in \mathcal{N}_t(s, a)} \left[\max_{a'} q_j(S_{j+1}, a')-\max_{a'} Q^*(S_{j+1}, a')\right] \right|.
	\label{eq:deltaub}
\end{eqnarray}
Define
\begin{eqnarray}
	\Delta_t(s):=\max_a \left[|q_t(s, a)-Q^*(s, a)|-\left|\frac{1}{k(t)}\sum_{j\in \mathcal{N}_t(s, a)} U_j\right|-L\rho_t(s, a)\right].
	\label{eq:deltatsdef}
\end{eqnarray}
Then for any $s, a$,
\begin{eqnarray}
	|q_t(s, a)-Q^*(s, a)|\leq \Delta_t(s)+\left|\frac{1}{k(t)}\sum_{j\in \mathcal{N}_t(s, a)} U_j\right| + L\rho_t(s, a).
	\label{eq:qtub}
\end{eqnarray}
From \eqref{eq:deltaub},
\begin{eqnarray}
	\Delta_t(s) &=& \max_a \left[q_t(s, a)-Q^*(s, a)-\left|\frac{1}{k(t)}\sum_{j\in \mathcal{N}_t(s, a)} U_j\right|\right]\nonumber\\
	&\leq & \frac{\gamma}{k(t)}\max_a\left|\sum_{j\in \mathcal{N}_t(s, a)}\left[\max_{a'} q_j(S_{j+1}, a') - \max_{a'} Q^*(S_{j+1}, a')\right]\right|\nonumber\\
	&\leq & \frac{\gamma}{k(t)}\max_a \sum_{j\in \mathcal{N}_t(s, a)} \max_{a'} \left|q_j(S_{j+1}, a')-Q^*(S_{j+1}, a')\right|\nonumber\\
	&\leq & \frac{\gamma}{k(t)}\max_a\sum_{j\in \mathcal{N}_t(s, a)}\max_{a'} \left[\Delta_j(S_{j+1})+\frac{1}{k(j)}\left|\sum_{l\in \mathcal{N}_j(S_{j+1}, a')} U_l\right|+L\rho_j(S_{j+1}, a')\right],
\end{eqnarray}
in which the last step comes from \eqref{eq:qtub}. Define
\begin{eqnarray}
	\Delta_t:=\mathbb{E}[\max_s \Delta_t(s)].
	\label{eq:deltatdf}
\end{eqnarray}
Then
\begin{eqnarray}
	\Delta_t \leq \gamma \underset{\beta t\leq j < t}{\max}\Delta_j + \underset{\beta t\leq j < t}{\max}\frac{1}{k(j)} \mathbb{E}\left[\underset{s, a}{\sup}\left|\sum_{l\in \mathcal{N}_j(s, a)} U_l\right|\right]+\underset{\beta t\leq j < t}{\max}L \mathbb{E}\left[\underset{s, a}{\sup}\rho_j(S', a')|s, a\right],
	\label{eq:deltatstar}
\end{eqnarray}
in which $S'$ is a random state following distribution $p(\cdot|s, a)$.

It remains to bound the second and the third term. We show the following lemmas.

\begin{lem}\label{lem:supuonline}
	\begin{eqnarray}
		\mathbb{E}\left[\underset{s, a}{\sup}\left|\sum_{l\in \mathcal{N}_t(s, a)} U_l\right|\right] \leq \sqrt{\frac{2\sigma_U^2}{k(t)}\ln(d(1-\beta)^{2d} T^{2d}|\mathcal{A}|)}+\sqrt{\frac{2\pi \sigma_U^2}{k(t)}}.
	\end{eqnarray}
\end{lem}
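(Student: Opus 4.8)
The plan is to repeat the layer-cake argument of Lemma~\ref{lem:largeu} essentially verbatim, the only change being that the bound on the number of distinct neighbor sets now carries the extra factor $(1-\beta)^{2d}$ coming from the reduced index window $\beta t\le j<t$. The required input is the fixed-$t$ tail estimate
\begin{eqnarray}
\text{P}\left(\underset{s\in\mathcal{S}}{\sup}\;\underset{a\in\mathcal{A}}{\sup}\left|\frac{1}{k(t)}\sum_{j\in\mathcal{N}_t(s,a)}U_j\right|>u\right)\le d(1-\beta)^{2d}T^{2d}|\mathcal{A}|\,e^{-\frac{k(t)u^2}{2\sigma_U^2}},
\end{eqnarray}
which is precisely \eqref{eq:onlineub} from the proof of Lemma~\ref{lem:noise2}; it rests on the subgaussianity \eqref{eq:umgf} of each $U_j$ with parameter $\sigma_U^2$ together with the counting bound \eqref{eq:nsets2} on the number of possible sets $\mathcal{N}_t(s,a)$.

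Given this, I would set $t_0=\sqrt{(2\sigma_U^2/k(t))\,\ln(d(1-\beta)^{2d}T^{2d}|\mathcal{A}|)}$, the level at which the right-hand side above equals $1$, and write $\mathbb{E}[\sup_{s,a}|\frac{1}{k(t)}\sum_{j\in\mathcal{N}_t(s,a)}U_j|]=\int_0^\infty\text{P}(\sup_{s,a}|\cdots|>u)\,du$. Bounding the integrand by $1$ on $[0,t_0]$ and by the exponential tail on $[t_0,\infty)$, and then applying $\int_a^\infty e^{-x^2/2}\,dx\le\sqrt{2\pi}\,e^{-a^2/2}$ after the substitution $x=u\sqrt{k(t)}/\sigma_U$, the second piece collapses to $\sqrt{2\pi\sigma_U^2/k(t)}$ by the defining property of $t_0$. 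Summing the two contributions yields exactly $\sqrt{(2\sigma_U^2/k(t))\ln(d(1-\beta)^{2d}T^{2d}|\mathcal{A}|)}+\sqrt{2\pi\sigma_U^2/k(t)}$, as claimed.

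There is no genuine obstacle here: the argument is a routine tail integration identical in structure to Appendix~\ref{sec:largeu}, and the extra $(1-\beta)^{2d}$ only enlarges a logarithmic term (in fact $(1-\beta)^{2d}\le1$, so it could even be dropped). The one point worth flagging is that one must use the \emph{fixed-$t$} form of the union bound --- i.e.\ \eqref{eq:onlineub} before the union over $t\le T$ is taken in Lemma~\ref{lem:noise2} --- since the quantity being controlled is the expectation at a single time level $t$, which is exactly the term that enters \eqref{eq:deltatstar}.
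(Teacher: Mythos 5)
Your proposal is correct and follows essentially the same route as the paper: the paper's proof of this lemma simply invokes the fixed-$t$ tail bound \eqref{eq:onlineub} and then repeats the layer-cake integration of Lemma \ref{lem:largeu} with the threshold adjusted to absorb the extra $(1-\beta)^{2d}$ factor, exactly as you describe. Your remark that one must use the per-$t$ form of the union bound (before the union over $t\leq T$ is taken in Lemma \ref{lem:noise2}) is also the right reading of how the lemma is used in \eqref{eq:deltatstar}.
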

The proof of Lemma \ref{lem:supuonline} is shown in Appendix \ref{sec:supuonline}.
\begin{lem}\label{lem:rhotailonline}
	With $t\geq 3m/(1-\beta)$, under Assumption \ref{ass:tail}, if 
	\begin{eqnarray}
		g(s) \geq \frac{3mk}{(1-\beta)\pi_0\alpha v_dD^dT},
	\end{eqnarray}
	then
	\begin{eqnarray}
		\mathbb{E}\left[\max_a\rho_t(s, a)\right] \leq \left(\frac{3mk}{(1-\beta)\pi_0\alpha v_d Tg(s)}\right)^\frac{1}{d} + C_1 (\norm{s}+1)|\mathcal{A}| e^{-(1-\ln 2)k}.
	\end{eqnarray}
	Otherwise
	\begin{eqnarray}
		\mathbb{E}\left[\max_a \rho_t(s, a)\right] \leq C_2(\norm{s}+1).
	\end{eqnarray}
\end{lem}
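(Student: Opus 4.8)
The plan is to adapt the proof of Lemma~\ref{lem:rhotail} to the online setting by replacing the full index range $\{1,\dots,T\}$ with the sliding window $\{\lceil\beta t\rceil,\dots,t-1\}$ used by the online algorithm, in the same way that Lemma~\ref{lem:rho2} adapts Lemma~\ref{lem:rho}. The first step is the reduction to a counting event: with $n_t(s,a,r)=\sum_{j=\lceil\beta t\rceil}^{t-1}\mathbf{1}(\norm{S_j-s}\le r,\ A_j=a)$ one has $\text{P}(\rho_t(s,a)>r)=\text{P}(n_t(s,a,r)<k)$, so everything reduces to lower-bounding $n_t$ with high probability, first at a reference radius and then at large radii, followed by integration in $r$.

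For the reference radius I would take $r_t(s)=\big(3mk/((1-\beta)\pi_0\alpha v_dTg(s))\big)^{1/d}$ and first treat the case $r_t(s)\le D$, which is precisely the first alternative in the statement. Here Assumption~\ref{ass:tail}(f') gives $\int_{B(s,r_t(s))}g(u)\,du\ge\alpha v_dr_t(s)^dg(s)$; combining this with $p_\pi^m(y|s')\ge g(y)$ (the definition~\eqref{eq:g} of $g$) and the exploration bound $\pi(a|s)\ge\pi_0$ of Assumption~\ref{ass:main}(d) yields $\text{P}\big(\norm{S_{j+m}-s}\le r_t(s),\ A_{j+m}=a\mid S_j,A_j\big)\ge\pi_0\alpha v_dr_t(s)^dg(s)$ for each $j$ in the window. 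Thinning the window at spacing $m$ then produces a sequence of $\{0,1\}$-valued variables meeting the one-step conditional lower bound hypothesis of Lemma~\ref{lem:chernoff}, and $t\ge 3m/(1-\beta)$ is exactly what makes the thinned window long enough for that lemma to deliver $\text{P}(n_t(s,a,r_t(s))<k)\le e^{-(1-\ln2)k}$, hence $\text{P}(\rho_t(s,a)>r_t(s))\le e^{-(1-\ln2)k}$.

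To convert this into the bound on $\mathbb{E}[\max_a\rho_t(s,a)]$ I would, exactly as in the proof of Lemma~\ref{lem:rhotail}, also control the far tail of $\rho_t$ via Assumption~\ref{ass:tail}(g'): by Markov's inequality a newly visited state is rarely far from the origin, so $\text{P}(\rho_t(s,a)>r)$ decays like a power of $r-\norm{s}$ with an exponent proportional to the window length $(1-\beta)t$, and $t\ge 3m/(1-\beta)$ again suffices for the corresponding integral to converge. Writing $\mathbb{E}[\max_a\rho_t(s,a)]=\int_0^\infty\text{P}(\max_a\rho_t(s,a)>r)\,dr$ and splitting the integral at $r_t(s)$, at $u:=\max\{2\norm{s},8eC_0\}$, and at $\infty$ reproduces the two claimed bounds: $r_t(s)+C_1(\norm{s}+1)|\mathcal{A}|e^{-(1-\ln2)k}$ when $r_t(s)\le D$, that is, when $g(s)\ge 3mk/((1-\beta)\pi_0\alpha v_dD^dT)$; and $C_2(\norm{s}+1)$ in the complementary case, where the $e^{-(1-\ln2)k}$ estimate is unavailable and only the far-tail bound enters.

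The main obstacle is the sequential dependence inside the window $[\beta t,t)$: the summands of $n_t$ are neither independent nor identically distributed, so both the reference-radius estimate and the far-tail estimate must go through the Markov-chain Chernoff bound of Lemma~\ref{lem:chernoff} applied to an $m$-thinned sub-chain, and one must verify that after thinning the window is still long enough (at the $k$-scale for the reference radius, and in absolute terms for the far-tail integral to converge), which is exactly what the hypothesis $t\ge 3m/(1-\beta)$ provides. The remaining work is book-keeping of constants ($m,\pi_0,\alpha,v_d,|\mathcal{A}|$ absorbed into $C_1,C_2$) and runs identically in spirit to Lemmas~\ref{lem:rho2} and~\ref{lem:rhotail}.
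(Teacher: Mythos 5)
Your proposal follows essentially the same route as the paper's proof: the same reference radius, the same reduction to the counting variable $n_t(s,a,r)$ over the window $[\lceil\beta t\rceil, t)$, the same application of Lemma~\ref{lem:chernoff} to the $m$-thinned subsequence using Assumptions~\ref{ass:tail}(f') and \ref{ass:main}(d), and the same far-tail plus integral-splitting argument carried over from Lemma~\ref{lem:rhotail}. One small caveat: for the Chernoff step to yield $np\geq 2k$ over a window of length $(1-\beta)t$, the reference radius must be normalized by $t$, i.e. $r_t(s)=\left(3mk/((1-\beta)\pi_0\alpha v_d t\, g(s))\right)^{1/d}$ as in the paper's proof, rather than by $T$ as written in the lemma statement and in your proposal (with $T$ in the denominator the expected count is only about $3kt/T$, which can fall below $k$); this appears to be a typo inherited from the statement itself.
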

The proof of Lemma \ref{lem:rhotailonline} is shown in Appendix \ref{sec:rhotailonline}.
\begin{lem}
	For any state action pairs $s, a$, let $S'$ be a random state following distribution $p(\cdot|s, a)$, then
	\begin{eqnarray}
		\mathbb{E}\left[\underset{s, a'}{\sup}\rho_t(S', a')|s, a\right] \leq C_3\left(\frac{k}{(1-\beta) t}\right)^\frac{1}{d}.
	\end{eqnarray}
\end{lem}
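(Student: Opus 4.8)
The plan is to follow the proof of Lemma~\ref{lem:rhotail2} essentially verbatim, replacing the offline radius estimate (Lemma~\ref{lem:rhotail}) by its online analogue (Lemma~\ref{lem:rhotailonline}). First I would condition on the realized next state, writing $\mathbb{E}[\max_{a'}\rho_t(S',a')\mid s,a]=\int p(s'|s,a)\,\mathbb{E}[\max_{a'}\rho_t(s',a')]\,ds'$, so that the pointwise bound of Lemma~\ref{lem:rhotailonline} can be inserted under the integral. Since this estimate is only invoked for $t\ge t_c$ and $t_c\ge 3m/(1-\beta)$ by \eqref{eq:tc}, the hypothesis $t\ge 3m/(1-\beta)$ of Lemma~\ref{lem:rhotailonline} is satisfied throughout, and the supremum in the statement is understood as the maximum over actions for the given realized $S'$.

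Next I would split the domain of integration at the threshold $\tau:=3mk/((1-\beta)\pi_0\alpha v_dD^dT)$, that is, into $\{s':g(s')\ge\tau\}$ (equivalently $r_t(s')\le D$) and its complement. Applying the two branches of Lemma~\ref{lem:rhotailonline} on the two regions bounds the integral by $I_1+I_2+I_3$, where $I_1:=(3mk/((1-\beta)\pi_0\alpha v_dT))^{1/d}\int_{g(s')\ge\tau}p(s'|s,a)g^{-1/d}(s')\,ds'$ is the main term of the first branch, $I_2:=C_1|\mathcal{A}|e^{-(1-\ln 2)k}\int(\norm{s'}+1)p(s'|s,a)\,ds'$ is its exponentially small correction, and $I_3:=C_2\int_{g(s')<\tau}(\norm{s'}+1)p(s'|s,a)\,ds'$ is the contribution of the second branch.

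Each piece is then controlled by Assumption~\ref{ass:tail}. For $I_1$, \eqref{eq:tail1} gives $\int p(s'|s,a)g^{-1/d}(s')\,ds'\le C_g$, so $I_1\le C_g(3mk/((1-\beta)\pi_0\alpha v_dT))^{1/d}\lesssim (k/T)^{1/d}\le (k/((1-\beta)t))^{1/d}$, the last inequality using $t\le T$ and $\beta\in(0,1)$. For $I_2$, Assumption~\ref{ass:tail}(g') gives $\int(\norm{s'}+1)p(s'|s,a)\,ds'=\mathbb{E}[\norm{S'}\mid s,a]+1\le C_0+1$, hence $I_2\le C_1(C_0+1)|\mathcal{A}|e^{-(1-\ln 2)k}$; because $k=\lceil((1-\beta)t)^{2/(d+2)}\rceil$, this is super-polynomially small in $(1-\beta)t$ and is thus dominated (after absorbing a constant) by the polynomial term $(k/((1-\beta)t))^{1/d}\sim((1-\beta)t)^{-1/(d+2)}$. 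For $I_3$, the event $r_t(s')>D$ is exactly $g(s')<\tau$, so \eqref{eq:tail2} applied with threshold $\tau$ yields $\int_{g(s')<\tau}(\norm{s'}+1)p(s'|s,a)\,ds'\le C_g\tau^{1/d}$, whence $I_3\le C_2C_g\tau^{1/d}\lesssim (k/T)^{1/d}\le (k/((1-\beta)t))^{1/d}$. Summing the three bounds and renaming constants gives $\mathbb{E}[\max_{a'}\rho_t(S',a')\mid s,a]\le C_3(k/((1-\beta)t))^{1/d}$ uniformly over $(s,a)$, which is the claim and supplies the third term of \eqref{eq:deltatstar}.

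The one delicate point is the bookkeeping of the effective sample size: because the online window discards the first $\beta t$ steps, every radius here is normalized by $(1-\beta)t$ rather than $T$, so one must carry the $(1-\beta)$ factor through and check that $t\ge 3m/(1-\beta)$ so that the Chernoff-type estimate underlying Lemma~\ref{lem:rhotailonline} is not vacuous; for the small-$t$ regime the bound is absorbed into constants exactly as in Case~1 of the proof of Theorem~\ref{thm:online}. No new idea is needed beyond invoking the two halves \eqref{eq:tail1}--\eqref{eq:tail2} of the tail assumption on the two complementary regions, exactly as in the offline argument.
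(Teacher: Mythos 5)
Your proposal is correct and matches the paper's intended argument: the paper proves this lemma by stating that it ``just follows that of Lemma~\ref{lem:rhotail2},'' and your write-up carries out exactly that adaptation --- conditioning on $S'$, splitting at the threshold $g(s')\gtrless 3mk/((1-\beta)\pi_0\alpha v_d D^d T)$, invoking the two branches of Lemma~\ref{lem:rhotailonline}, and controlling $I_1, I_2, I_3$ via \eqref{eq:tail1}, Assumption~\ref{ass:tail}(g'), and \eqref{eq:tail2} respectively, with the $(1-\beta)$ factor tracked throughout. No substantive difference from the paper's route.
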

The proof just follows that of Lemma \ref{lem:rhotail2}.

Based on these lemmas, from \eqref{eq:deltatstar},
\begin{eqnarray}
	\Delta_t&\leq& \gamma \underset{\beta t\leq j<t}{\max}\Delta_j + \underset{\beta t\leq j<t}{\max}\left[\sqrt{\frac{2\sigma_U^2}{k(j)}\ln(d(1-\beta)^{2d} T^{2d}|\mathcal{A}|)} + \sqrt{\frac{2\pi \sigma_U^2}{k(j)}}\right]+LC_3\underset{\beta t \leq j< t}{\max}\left(\frac{k(j)}{(1-\beta)j}\right)^\frac{1}{d}\nonumber\\
	&\leq & \gamma \underset{\beta t\leq j<t}{\max}\Delta_j+LC_3\underset{\beta t \leq j< t}{\max}\left(\frac{k(j)}{(1-\beta)j}\right)^\frac{1}{d}+C_4\underset{\beta t\leq j<t}{\max}\frac{1}{\sqrt{k(j)}}\ln T,
\end{eqnarray}
for some constant $C_4$.

Recall that for the case with bounded support, we have defined $C(\gamma, \beta)$ in \eqref{eq:C}. For the unbounded state space, now define
\begin{eqnarray}
	C'(\gamma, \beta) = \frac{(LC_3+C_4) (1-\beta)^{-\frac{1}{d+2}}\beta^{-\frac{1}{d+2}}}{1-\gamma \beta^{-\frac{1}{d+2}}}.
	\label{eq:Cp}
\end{eqnarray}
Then following arguments similar to Appendix \ref{sec:onlinepf}, it can be shown that
\begin{eqnarray}
	\Delta_t\leq C(\gamma, \beta) t^{-\frac{1}{d+2}}\ln T.
\end{eqnarray}
It remains to select $\beta$. Compared with the case with a bounded support, the most important difference is that now there is an additional $\beta^{-\frac{1}{d+2}}$ factor. The denominator in \eqref{eq:Cp} is required to be positive, thus $\gamma \beta^{-1/(d+2)}<1$, $\beta > \gamma^{d+2}$. Now we analyze the case that $1-\gamma$ is not large. To be more precise,  $\gamma \geq c_\gamma$ for some constant $c_\gamma\in (0,1)$, then $\beta \in (c_\gamma^{d+2}, 1)$, which is both upper and lower bounded by constants. To optimize \eqref{eq:Cp} asymptotically, it is enough to minimize $(1-\beta)^{-1/(d+2)}/ (1-\gamma \beta^{-1/(d+2)})$. The minimizer is $\beta = \gamma^{(d+2)/(d+3)}$. Then
\begin{eqnarray}
	C'(\gamma, \beta)\lesssim \left(\frac{1}{1-\gamma}\right)^\frac{d+3}{d+2},
\end{eqnarray}
and
\begin{eqnarray}
	\Delta_t\lesssim \left(\frac{1}{1-\gamma}\right)^\frac{d+3}{d+2}t^{-\frac{1}{d+2}}\ln T.
\end{eqnarray}
From \eqref{eq:deltatdf} and \eqref{eq:deltatsdef}, it can be shown that
\begin{eqnarray}
	\int \mathbb{E}\left[\max_a |q_T(s, a)-Q^*(s, a)|\right] f_\pi(s)ds\leq \left(\frac{1}{1-\gamma}\right)^\frac{d+3}{d+2} T^{-\frac{1}{d+2}}\ln T.
\end{eqnarray}
Recall that now $\epsilon=\int \mathbb{E}\left[\max_a |q_T(s, a)-Q^*(s, a)|\right] f_\pi(s)ds$, the sample complexity is
\begin{eqnarray}
	T=\tilde{O}\left(\frac{1}{(1-\gamma)^3 \epsilon^{d+2}}\right).
\end{eqnarray}

\subsection{Proof of Lemma \ref{lem:supuonline}}\label{sec:supuonline}
From \eqref{eq:onlineub},
\begin{eqnarray}
	\text{P}\left(\underset{s, a}{\sup}\left|\frac{1}{k(t)}\sum_{j\in \mathcal{N}_t(s, a)} U_j\right| > u\right) \leq d(1-\beta)^{2d} T^{2d} |\mathcal{A}| e^{-\frac{k(t) u^2}{2\sigma_U^2}}.
\end{eqnarray}
The remainder of the proof follows that of Lemma \ref{lem:largeu}. We omit the detailed steps for simplicity. Finally, we get
\begin{eqnarray}
	\mathbb{E}\left[\underset{s, a}{\sup}\left|\sum_{l\in \mathcal{N}_t(s, a)} U_l\right|\right] \leq \sqrt{\frac{2\sigma_U^2}{k(t)}\ln(d(1-\beta)^{2d} T^{2d}|\mathcal{A}|)}+\sqrt{\frac{2\pi \sigma_U^2}{k(t)}}.
\end{eqnarray}

\subsection{Proof of Lemma \ref{lem:rhotailonline}}\label{sec:rhotailonline}
The proof is similar to that of Lemma \ref{lem:rhotail}. Define
\begin{eqnarray}
	r_t(s) = \left(\frac{3mk}{(1-\beta)\pi_0 \alpha v_d t g(s)}\right)^\frac{1}{d}.
\end{eqnarray}
\textbf{Case 1: $r_t(s)\leq D$.} Recall the definition of $n_t(s, a, r)$ in \eqref{eq:ntdf}. Then \eqref{eq:pmass} becomes
\begin{eqnarray}
	\text{P}\left(\norm{S_{t+m}-s}\leq r_t(s), A_{t+m}=a|S_t, A_t\right)\geq \frac{3km}{(1-\beta) t}.
\end{eqnarray}
From Lemma \ref{lem:chernoff},
\begin{eqnarray}
	\text{P}(\rho_t(s, a)>2r_t(s))\leq e^{-(1-\ln 2)k}.
\end{eqnarray}
The remainder of the proof follows that of Lemma \ref{lem:rhotail}. We omit the detailed steps for simplicity. The final bound is
\begin{eqnarray}
	\mathbb{E}\left[\max_a \rho_t(s, a)\right] \leq r_t(s) + C_1(\norm{s} + 1)|\mathcal{A}| e^{-(1-\ln 2) k}.
\end{eqnarray}
\textbf{Case 2: $r_t(s) > D$.} Similar to \eqref{eq:rholowpdf},
\begin{eqnarray}
	\mathbb{E}\left[\max_a \rho_t(s, a)\right] \leq C_2(\norm{s} + 1).
\end{eqnarray}

\end{document}